\newcommand{\conf}[1]{}
\def\H{{\mathcal H}}
\newcommand{\ignore}[1]{}
\def\bold0{\mathbf{0}}
\newcommand{\cA}{\mathcal{A}}
\def\epsilon{\varepsilon}
\newtheorem{theorem}{Theorem}
\newtheorem{lemma}{Lemma}
\newtheorem{definition}{Definition}
\newcommand{\cO}{\mathcal{O}}
\newcommand{\cP}{\mathcal{P}}
\newcommand{\cQ}{\mathcal{Q}}
\newcommand{\cX}{\mathcal{X}}
\newcommand{\cY}{\mathcal{Y}}
\newcommand{\cV}{\mathcal{V}}
\newcommand{\cW}{\mathcal{W}}
\newcommand{\cH}{\mathcal{H}}
\newcommand{\cU}{\mathcal{U}}
\newcommand{\mult}{\text{Mul}}
\newcommand{\namedref}[2]{\mbox{\hyperref[#2]{#1~\ref*{#2}}}}
\newcommand{\figurerefb}[2]{\mbox{\hyperref[#1]{Figure~\ref*{#1}#2}}}
\newcommand{\equationref}[1]{\mbox{\hyperref[#1]{(\ref*{#1})}}}
\renewcommand{\eqref}{\equationref}
\newcommand{\EE}{\mathbb{E}}
\newcommand{\RR}{\mathbb{R}}
\newcommand{\indic}{1}
\newcommand{\Bin}{\text{Bin}}
\begin{document}

\title{Learning discrete distributions: 
user vs item-level privacy}
\author{
\begin{tabular}[t]
{c@{\extracolsep{6.5em}}c}
  Yuhan Liu & Ananda Theertha Suresh\\
 Cornell University & Google Research\\ 
\small \texttt{yl2976@cornell.edu} & \small \texttt{theertha@google.com}
\end{tabular}
\vspace{2ex}\\
\begin{tabular}[t]{c@{\extracolsep{6.5em}}c@{\extracolsep{6.5em}}c}
  Felix Yu & Sanjiv Kumar & Michael Riley\\
  Google Research & Google Research & Google Research \\
 \small \texttt{felixyu@google.com} & \small\texttt{sanjivk@google.com}
 & \small \texttt{riley@google.com}
\end{tabular}
}

\maketitle

\begin{abstract}
Much of the literature on differential privacy
focuses on item-level privacy, where
loosely speaking, the goal is to provide
privacy per item or training example.
However, recently many practical applications such as federated learning require preserving privacy for all items of a single user, which is much harder to achieve. Therefore understanding the theoretical limit of user-level privacy becomes crucial. 

We study the fundamental problem of learning discrete distributions over $k$ symbols with user-level differential privacy. If each user has $m$ samples, we show that 
straightforward applications of Laplace or Gaussian mechanisms require the number of users to be 
$\mathcal{O}(k/(m\alpha^2) + k/\epsilon\alpha)$ to achieve an $\ell_1$ distance of $\alpha$ between the true and estimated distributions, with the privacy-induced penalty $k/\epsilon\alpha$ independent of the number of samples per user $m$.
Moreover, we show that any mechanism that only operates on the final aggregate counts should require a user complexity of the same order.
We then propose a mechanism such that the number of users scales as  $\tilde{\mathcal{O}}(k/(m\alpha^2) + k/\sqrt{m}\epsilon\alpha)$ 
and hence the privacy penalty is $\tilde{\Theta}(\sqrt{m})$ times smaller compared to the standard mechanisms
in certain settings of interest. We further show that the proposed mechanism is nearly-optimal under certain regimes.

We also propose general techniques for obtaining lower bounds on restricted differentially private estimators and a lower bound on the total variation between binomial distributions, both of which might be of independent interest.
\end{abstract}

\section{Introduction}

\subsection{Differential privacy}
Differential privacy (DP)~\citep{CynthiaFKA06, DworkR14, WassermanZ10} has emerged as the standard framework for providing privacy for various statistical problems. Ever since its inception, it has been applied to various statistical and learning scenarios including learning histograms~\citep{CynthiaFKA06, hay2010boosting, suresh2019differentially}, statistical estimation~\citep{DiakonikolasHS15, KamathLSU18, acharya2020differentially, KamathSU20, AcharyaCT19, AcharyaSZ18a}, learning machine learning models~\citep{chaudhuri2011differentially, bassily2014private, mcmahan2017learning, DworkTTZ14}, hypothesis testing~\citep{AliakbarpourDR18, AcharyaSZ18}, and various other tasks. 

Differential privacy is studied in two scenarios, local differential privacy~\citep{kasiviswanathan2011can, duchi2013local} and global differential privacy~\citep{CynthiaFKA06}. In this paper, we study the problem under the lens of global differential privacy, where the goal is to protect the privacy of the algorithm outcomes.  Before we proceed further, we first define differential privacy.

\begin{definition}
A randomized mechanism $\mathcal{M}: \mathcal{D} \rightarrow \mathcal{R}$ with domain $\mathcal{D}$
 and range $\mathcal{R}$ satisfies $(\epsilon, \delta)$-differential privacy if for any two adjacent datasets 
 $D, D' \in \mathcal{D}$ and for any subset of output 
 $\mathcal{S} \subseteq \mathcal{R}$, it holds that 
 \[
 \Pr[\mathcal{M}(D) \in \mathcal{S}] \leq e^{\epsilon} \Pr[\mathcal{M}(D') \in \mathcal{S}] + \delta.
 \]
 \end{definition}
 If $\delta=0$, then the privacy is also referred to as pure differential privacy.
 
An important aspect of the above definition is the notion of neighboring or adjacent datasets. If a dataset $D$ 
is a collection of $n$ items $x_1, x_2, \ldots, x_n$, then typically adjacent datasets are defined as those that differ in a single  item $x_i$~\citep{CynthiaFKA06}. 
 
However, in practice, each user may have many items and may wish to preserve privacy for all of them. Hence, this simple definition of item-level neighboring datasets would not be enough. For example, if each user has infinitely many points of the same example, then the bounds become vacuous. 
 
Motivated by this, user-level privacy was proposed recently. 
Formally, given $s$ users where each user $u$ has
$m_u$ items $x_1(u), x_2(u), \ldots x_{m_u}(u)$, then two datasets are adjacent if they differ in data of a single user. For example, in the simple setting when each user has $m$ samples, if two datasets are adjacent in user-level privacy, they could differ in at most $m$ items under the definition of item-level privacy.
 
Since user-level privacy is more practical, it has been studied in the context of 
learning machine learning models via federated learning~\citep{mcmahan2017learning, mcmahan2018general,wang2019beyond,  augenstein2019generative}. The problem of bounding user contributions in user-level privacy in the context of both histogram estimation and learning machine learning models was studied in~\cite{amin2019bounding}. Differentially private SQL with bounded user contributions was proposed in~\cite{wilson2019differentially}. Understanding trade-offs between utility and privacy in the context of user-level global DP is one of the challenges in federated learning~\citep[Section 4.3.2]{kairouz2019advances}. \cite{kasiviswanathan2013analyzing} studied node differential privacy which guarantees privacy in the event of adding or removing nodes in network data.

Our goal is to understand theoretically the utility-privacy trade-off for user-level privacy and
compare it to the item-level counterpart. To this end, we study the problem of learning discrete distributions under user and item-level privacy.

\subsection{Learning discrete distributions}

Learning discrete distributions is a fundamental problem in statistics
with practical applications that include language modeling, ecology, and databases. In many applications, the underlying data distribution is private and sensitive  e.g., learning a language model from user-typed texts. To this end, learning discrete distributions under differential privacy has been studied extensively with various loss functions and non-asymptotic convergence rates~\citep{braess2004bernstein, kamath2015learning, han2015minimax}, with local differential privacy
\citep{duchi2013local,kairouz2016discrete,AcharyaCT19, YeB18}, with global differential privacy~\citep{DiakonikolasHS15, acharya2020differentially}, and with communication constraints~\citep{barnes2019lower, AcharyaCT19}, among others.

Before we proceed further, we first describe the learning scenario.
Let $p$ be an unknown distribution over
symbols $1,2,\ldots, k$ i.e.,
$\sum_i p_i = 1$ and $p_i \geq 0$ for all $i \leq k$. Let $\Delta_k$ be the set of all discrete distributions over the domain  $[k]:=\{1,2,\ldots, k\}$. 

Suppose there are $s$ users indexed by $u$, and let $\cU$ denote the set of all users. We assume that each user $u$ has $m$ i.i.d. samples $X^m(u) = [X_1(u), X_2(u),\ldots, X_m(u)]\in\cX:=[k]^m$ from the same distribution $p$. 

We extend our results to the case when users have different number of samples in Appendix~\ref{app:extensions}. However, we assume that all users have samples from the same distribution throughout the paper. Extending the algorithms to scenarios where users have samples from different distributions is an interesting open direction.

Let $X^s = [(u, X^m(u)) : u \in \cU]$ be the set of user and sample pairs. Let $\cX^s$ be the collection of all possible user-sample pairs. For an algorithm $A$, let  $\hat{p}^A(X^s)$ be its output, a mapping from $\mathcal{X}^{s} \mapsto \Delta_k$. The performance  for a given sample $X^s$ is measured in terms of $\ell_1$ distance, $
\ell_1(p, \hat{p}^A) = \sum_{i=1}^k |p_i - \hat{p}_{i}^A(X^s)|
$.  We measure the performance of the estimator 
for a distribution $p$ 
by its expectation over the algorithm and samples i.e.,
$
L(A, s, m, p)  = \EE_{A, X^s} [\ell_1(p, \hat{p}^A(X^s))]$.

We define the user complexity of an algorithm $A$ as the minimum number of users required to achieve error at most $\alpha$ for all distributions:
\begin{equation}
\label{eq:sam_A}
S^A_{m,\alpha} = \min_{s} \{s: \sup_{p \in \Delta_k} L(A, s, m, p) \leq \alpha \}.
\end{equation}
The min-max user complexity is
\[
S^*_{m,\alpha} = \min_{A} S^A_{m,\alpha}. 
\]

Well known results on non-private discrete distribution estimation (see~\citep{kamath2015learning, han2015minimax}) characterize the min-max user complexity as
\begin{equation}
    S^*_{m,\alpha} = \Theta\left(\frac{k}{m \alpha^2} \right).
    \label{equ:non-private-lb}
\end{equation}

Let $\cA_{\epsilon, \delta}$ be the set of all $(\epsilon, \delta)$ differentially private algorithms. Similar to~\eqref{eq:sam_A}, for a differentially private algorithm $A$, let 
$S^A_{m,\alpha, \epsilon, \delta}$ be the minimum of samples necessary to achieve $\alpha$ error for all distributions $p \in \Delta_p$ with $(\epsilon, \delta)$ differential privacy. We are interested in characterizing and developing polynomial-time algorithms that achieve the min-max user complexity of $(\epsilon, \delta)$ differentially private mechanisms.
\[
S^*_{m,\alpha, \epsilon, \delta} =
 \min_{A \in \cA_{\epsilon, \delta}} S^A_{m,\alpha, \epsilon, \delta}.
\]

\section{Previous results}

The min-max rate of learning discrete distributions for item-level privacy, which corresponds to $m=1$, was studied by~\cite{DiakonikolasHS15} and~\cite{acharya2020differentially}. They showed that for any $(\epsilon, \delta)$ estimator,
\[
S^*_{1,\alpha, \epsilon, \delta} =
\Theta \left( \frac{k}{\alpha^2} + \frac{k}{\alpha (\epsilon+\delta)} \right).
\]
The goal of our work is to understand the behavior of $S^*_{m,\alpha, \epsilon, \delta}$ w.r.t. m.
We first discuss a few natural algorithms and analyze their user complexities.

One natural algorithm is for each user to sample one item and use known results from item-level privacy. Such a result would yield,
\[
S^{\text{sample}}_{m,\alpha, \epsilon, \delta} =
\cO \left( \frac{k}{\alpha^2} + \frac{k}{\alpha (\epsilon+\delta)} \right).
\]

The other popular algorithms are Laplace or Gaussian mechanisms that rely on counts of users. For a particular user sample $X^m(u)$, let $N(u) = [N_1(u), \dots, N_k(u)]$, be the vector of counts. 
A natural algorithm is to sum all the user contributions to obtain the overall count vector $N$, where the count of a symbol $i$ is given by
\[
N_i = \sum_u N_i(u).
\]
Finally a non- private estimator can be obtained by computing the empirical estimate:
\[
\hat{p}^{\text{emp}}_i = \frac{N_i}{ms}.
\]
To obtain a differentially private version of the empirical estimate, one can add Laplace or Gaussian noise with some suitable magnitude. To this end, we need to compute the sensitivity of the empirical estimate.

Recall that two datasets $D, D'$ are adjacent if there exists a single user $u$ such that $N(u, D) \ne N(u, D')$, and $N(v, D) = N(v, D')$ for all $v \in \mathcal{U}$ and $v \ne u$. Therefore the $\ell_1$ sensitivity is
\[
\Delta_1(N) = \max_{D, D' \textnormal{ adjacent}}||N(D) - N(D')||_1 = 2m.
\]
and the $\ell_2$ sensitivity is
\[
\Delta_2(N) = \max_{D, D' \textnormal{ adjacent}}||N(D) - N(D')||_2 = \sqrt{2}m.
\]

A widely used method is the Laplace mechanism, which ensures $(\epsilon, 0)$ differential privacy. \begin{definition}
Given any function $f$ that maps the dataset to $\RR^k$, let the $\ell_1$ sensitivity $\Delta(f) = \max_{D, D'  \textnormal{ adjacent} } || f(D) - f(D') ||_1$. The Laplace mechanism is defined as
\[
\mathcal{M}(D, f(\cdot), \epsilon) = f(D) + (Y_1, \dots, Y_k),
\]
where $Y_i$ are i.i.d random variables drawn from Lap$(\Delta f / \epsilon)$.
\end{definition}
The Gaussian mechanism is defined similarly with $\ell_2$ sensitivity and 
Gaussian noise. We first analyze Laplace and Gaussian mechanisms under user-level privacy.
\begin{lemma}
\label{lem:naive}
For the Laplace mechanism, given by
$
\hat{p}^{\text{l}}_i = \hat{p}^{\text{emp}}_i + \frac{Z_i}{ms},
$
where $Z_i = Lap(2m/\epsilon)$, 
\[
S^l_{m,\alpha, \epsilon, 0}= 
\cO \left(\frac{k}{m \alpha^2} +\frac{k}{\alpha \epsilon} \right).
\]
Similarly if $\epsilon \leq 1$, for the Gaussian mechanism, given by
$
\hat{p}^{\text{g}}_i = \hat{p}^{\text{emp}}_i + \frac{Z_i}{ms},
$
where $Z_i = \mathcal{N}(0, 4\log(1.25/\delta)m^2/\epsilon^2)$, 
\[
S^g_{m,\alpha, \epsilon, \delta}
=
\cO \left(\frac{k}{m \alpha^2} +\frac{k}{\alpha \epsilon} \sqrt{\log \frac{1}{\delta}} \right).
\]
\end{lemma}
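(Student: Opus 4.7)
The plan is to establish privacy and utility separately, and then combine them. Privacy follows immediately from the stated sensitivity bounds: since $\Delta_1(N) = 2m$, the Laplace mechanism with scale $2m/\epsilon$ per coordinate is $(\epsilon,0)$-DP by the standard Laplace mechanism guarantee, and since $\Delta_2(N) = \sqrt{2}m$, adding $\mathcal{N}(0, 4\log(1.25/\delta)m^2/\epsilon^2)$ noise per coordinate is $(\epsilon,\delta)$-DP by the standard Gaussian mechanism guarantee (this is where the $\epsilon \leq 1$ hypothesis is used). Dividing by the deterministic constant $ms$ at the end does not affect the privacy guarantee by post-processing.

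For utility, I would decompose the error via the triangle inequality:
\[
\EE[\|\hat{p}^{\text{l}} - p\|_1] \;\leq\; \EE[\|\hat{p}^{\text{emp}} - p\|_1] \;+\; \EE\!\left[\sum_{i=1}^k \frac{|Z_i|}{ms}\right].
\]
The first (non-private) term is a standard multinomial estimation bound: using Cauchy--Schwarz coordinate-wise and $\mathrm{Var}(N_i) \leq msp_i$,
\[
\EE[\|\hat{p}^{\text{emp}} - p\|_1] \;\leq\; \sum_{i=1}^k \sqrt{\mathrm{Var}(\hat{p}^{\text{emp}}_i)} \;\leq\; \sqrt{k/(ms)},
\]
so requiring this to be at most $\alpha/2$ yields $s = \cO(k/(m\alpha^2))$.

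For the noise term under Laplace: $\EE[|Z_i|] = 2m/\epsilon$, so the total expected noise contribution to $\ell_1$ error is $k \cdot (2m/\epsilon)/(ms) = 2k/(s\epsilon)$, and forcing this to be at most $\alpha/2$ yields $s = \cO(k/(\alpha\epsilon))$. For Gaussian noise with variance $\sigma^2 = 4\log(1.25/\delta)m^2/\epsilon^2$, we have $\EE[|Z_i|] = \sigma\sqrt{2/\pi} = \cO(m\sqrt{\log(1/\delta)}/\epsilon)$, so the total expected noise contribution is $\cO(k\sqrt{\log(1/\delta)}/(s\epsilon))$, giving $s = \cO((k/(\alpha\epsilon))\sqrt{\log(1/\delta)})$. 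Taking the maximum of the two regimes gives the claimed sum bound in each case.

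There is no genuine obstacle here; the proof is a routine sensitivity-plus-concentration calculation. The only minor care needed is the non-private $\ell_1$ bound, where I would invoke the $\sqrt{k/(ms)}$ result (it can also be proved directly since $\EE[|\hat{p}^{\text{emp}}_i - p_i|] \leq \sqrt{p_i(1-p_i)/(ms)}$ and then Cauchy--Schwarz on $\sum_i \sqrt{p_i}$). Everything else is absorbed into constants hidden by the $\cO(\cdot)$ notation.
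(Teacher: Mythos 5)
Your proposal is correct and follows essentially the same route as the paper's proof in Appendix~\ref{app:laplace}: the same triangle-inequality decomposition into empirical error plus noise, the same $\sqrt{k/(ms)}$ bound on the non-private term via Cauchy--Schwarz (your coordinate-wise Jensen followed by Cauchy--Schwarz on $\sum_i \sqrt{p_i}$ is an equivalent variant of the paper's calculation), and the same expected-noise bounds $\EE|Z_i| = 2m/\epsilon$ and $\EE|Z_i| \leq \sigma$ for the Laplace and Gaussian cases. Your explicit privacy verification via the sensitivity bounds and post-processing is sound and merely makes explicit what the paper leaves to the standard mechanism guarantees.
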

The proof follows from the definitions of the Laplace and Gaussian mechanisms, which we provide in Appendix~\ref{app:laplace} for completeness. The non-private user complexity term $\cO(k/(m\alpha^2))$ decreases with the number of samples from user $m$, but somewhat surprisingly the additional term due to privacy $\cO(k/\alpha\epsilon)$ is independent of $m$. In other words, no matter how many samples each user has, it does not help to reduce the privacy penalty in the user complexity. This could be especially troublesome when $m$ gets large, in which case the privacy term dominates the user complexity. 

\section{New results}

We first ask if the above results on Laplace and Gaussian mechanisms are tight. We show that they are by proving a lower bound on a wide class of estimators that only rely on the final count. The proof is based on a new coupling technique with details explained in Section \ref{sec:lower-bounds} . 

\begin{theorem}
Let $\epsilon +\delta< c$, where $c$ is determined in the proof later.
Let $A$ be any $(\epsilon, \delta)$ mechanism
that only operates on summed counts of all users $N = [N_1, N_2, \ldots N_k]$ directly. Then,
\[
S^A_{m, \alpha, \epsilon, \delta}
= 
\Omega\left( \frac{k}{m\alpha^2} + \frac{k}{\alpha(\epsilon +\delta)}\right).
\]
\label{thm:lb-restricted-Assouad}
\end{theorem}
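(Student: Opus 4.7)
The plan is to prove the two terms of the lower bound separately and then take their maximum. The first term, $\Omega(k/(m\alpha^2))$, is the standard non-private Assouad/Le Cam bound for estimating a $k$-ary distribution from $ms$ i.i.d.\ samples in total, and it applies as-is here because any count-based mechanism is a special case of an arbitrary estimator; we inherit it from~\eqref{equ:non-private-lb} (extended to $m>1$ in the obvious way).

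For the privacy term $\Omega(k/(\alpha(\epsilon+\delta)))$, the key observation is that, since $A$ only depends on $N = \sum_u N(u)$, I can view $A$ as an $(\epsilon,\delta)$-DP mechanism whose input is the count vector $N$. User-level adjacency on datasets induces the following adjacency on count vectors: $\|N-N'\|_1 \le 2m$ with equal total $ms$ (since one user contributes $m$ samples, moving them changes the counts by at most $2m$ in $\ell_1$). By standard group privacy, for any pair of count vectors $N_0,N_1$ with $\|N_0-N_1\|_1\le 2mt$, the outputs $A(N_0),A(N_1)$ are $(t\epsilon, te^{(t-1)\epsilon}\delta)$-indistinguishable, yielding $\mathrm{TV}(A(N_0),A(N_1)) \le 1-e^{-t\epsilon} + te^{(t-1)\epsilon}\delta$.

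I then invoke Assouad's method on the perturbation family $p^\tau_{2i-1}=(1+\gamma\tau_i)/k$, $p^\tau_{2i}=(1-\gamma\tau_i)/k$ for $\tau\in\{\pm 1\}^{k/2}$, so that $\|p^\tau-p^{\tau'}\|_1=(4\gamma/k)\, d_H(\tau,\tau')$. For $\tau,\tau'$ differing in a single coordinate, the maximal coupling of each individual sample succeeds with probability $1-\mathrm{TV}(p^\tau,p^{\tau'}) = 1 - 2\gamma/k$, so $\|N_\tau-N_{\tau'}\|_1$ is stochastically dominated by $2\cdot\mathrm{Bin}(ms,\,2\gamma/k)$. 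A Chernoff bound gives $\Pr[\|N_\tau-N_{\tau'}\|_1 > B] \le \exp(-\Theta(B))$ for $B$ a constant factor above the mean $\Theta(\gamma ms/k)$. Setting $t = B/(2m)$ and decomposing over the coupling,
$$\mathrm{TV}(A(N_\tau), A(N_{\tau'})) \;\le\; \Pr\!\big[\|N_\tau-N_{\tau'}\|_1 > 2mt\big] \;+\; t\epsilon \;+\; te^{(t-1)\epsilon}\delta,$$
which under the hypothesis $\epsilon+\delta<c$ reduces to $\Theta(\gamma s(\epsilon+\delta)/k) + o(1)$. Feeding this into Assouad's lemma gives an $\ell_1$ error lower bound of $\Omega(\gamma(1 - \gamma s(\epsilon+\delta)/k))$; for the error to fall below $\alpha$, we must have $s = \Omega(k/(\alpha(\epsilon+\delta)))$ upon choosing $\gamma=\Theta(\alpha)$.

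The main obstacle is controlling the $\delta$ term: group privacy amplifies $\delta$ multiplicatively by $te^{(t-1)\epsilon}$, so an expectation bound on $\|N_\tau-N_{\tau'}\|_1$ is not enough — we really need the high-probability bound on $t$, and the hypothesis $\epsilon+\delta<c$ is precisely what keeps $t\epsilon$ bounded so the amplification factor remains $O(1)$. A secondary subtlety is that each $\ell_1$ step of size $2m$ between count vectors must be realizable by a legitimate user swap; this is immediate since any count vector with total $ms$ decomposes into $s$ user contributions of size $m$, so reducing $\|N-N'\|_1$ by $2m$ always corresponds to replacing a single user's samples. The rest of the argument is a routine Assouad bookkeeping exercise.
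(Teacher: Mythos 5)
Your proposal is correct and ultimately rests on the same core insight as the paper's proof: for a mechanism restricted to the aggregate counts $N$, only the count discrepancy matters, and under the Paninski perturbation this discrepancy concentrates at $O(\gamma ms/k)$ rather than the worst-case $ms$, so group privacy costs only $O(\gamma s/k + 1)$ user swaps instead of $s$. The executions differ in an instructive way, though. The paper formalizes the insight through the machinery of $f$-restricted estimators and $f$-couplings (Theorem~\ref{thm:Assouad}): it takes a coupling of the sample-level mixtures with expected Hamming distance $6\alpha sm/k$ via \citet[Lemma 14]{acharya2020differentially}, then \emph{rearranges} the coupled sequences so all disagreeing samples are packed into the first $\lceil l/m\rceil$ users---legitimate precisely because permuting samples leaves $N$ unchanged---yielding an $N$-coupling with expected user-level Hamming distance $6\alpha s/k+1$, to which Markov's inequality and the grouped Le Cam bound (Theorem~\ref{thm:lecam}) are applied. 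You instead observe that a count-restricted $(\epsilon,\delta)$-DP mechanism is itself an $(\epsilon,\delta)$-DP mechanism on count vectors under the induced adjacency $\|N-N'\|_1\le 2m$ with equal totals, and you control the count discrepancy directly via per-sample maximal coupling plus Chernoff. This buys a cleaner conceptual statement and a sharper tail (the paper's Markov step gives only failure probability $0.1$ at scale $10D$, which suffices for constants), but it obliges you to verify two points the paper's rearrangement coupling handles implicitly: (i) that any pair of count vectors with total $ms$ at $\ell_1$ distance at most $2mt$ is connected by a path of $t$ genuine single-user swaps---your decomposition sketch is right (with $Q=(N-N')_+$ one has $\|Q\|_1\le m$, so some user's contribution can be taken as $Q+R$ with $R\le N-Q$), but the intermediate vectors along the path need the same check; and (ii) that the per-pair couplings for $\tau$ versus $\tau\oplus e_i$ aggregate into a coupling of the mixtures $p_{+i}^s$ and $p_{-i}^s$, which is the standard pairing and is exactly what the cited Lemma 14 packages. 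One small framing correction: what keeps $t\epsilon=O(1)$ is not the hypothesis $\epsilon+\delta<c$ per se, but choosing $\gamma=\Theta(\alpha)$ and arguing at (or by contradiction below) the threshold $s=\Theta(k/(\alpha(\epsilon+\delta)))$---this is the role of the paper's choice $\alpha=\min\{0.1k/(60s(\epsilon+\delta)),1/3\}$; the hypothesis $\epsilon+\delta< c$ (the paper takes $c=0.07$) is used only to prevent the residual additive terms, from the $+1$ swap and the subtracted $\delta$-term, from destroying the final constant.
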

The above lower bound suggests that any algorithm that only operates on the final count aggregate would incur additional cost for user complexity independent of $m$ due to privacy restriction. However it may not apply to algorithms that do not solely rely on the counts, which justifies the need to design algorithms beyond straightforward applications of the Laplace or Gaussian mechanisms.

We proceed to design algorithms that exceed the above user-complexity limit. The first one is for the dense regime where $k \leq m$: on average each user sees most of the high-probability symbols. The second one is for the sparse regime where $k \geq m$: users don't see many symbols. By combining the two of them, we get the following improved upper bound on min-max user complexity.
\begin{theorem}
\label{thm:kall}
Let $\epsilon \leq 1$. There exists a polynomial time algorithm $(\epsilon, \delta)$-differentially private algorithm $A$ such that
 \begin{equation}
 \label{eq:kall_main}
S^A_{m, \alpha, \epsilon, \delta} =
\cO \left(
\log \frac{km}{\alpha} \cdot 
\max \left( 
\frac{k}{m\alpha^2} + \frac{k}{\sqrt{m}\alpha\epsilon} \sqrt{\log \frac{1}{\delta}}, \frac{\sqrt{k}}{\epsilon}  \sqrt{\log \frac{1}{\delta}} \right) \right).
\end{equation}
\end{theorem}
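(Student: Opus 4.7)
The plan is to exhibit two subroutines, one per regime the authors mention just before the statement (dense $k\le m$ and sparse $k>m$), and then branch between them using the known quantities $k,m$; the combined bound is the announced max. Both subroutines share a common template that deliberately sidesteps \theoremref{thm:lb-restricted-Assouad}: rather than operating on the aggregate $N=\sum_u N(u)$, the algorithm processes each user's data individually, clips the per-user vector to a suitable $\ell_2$ ball, sums the clipped contributions across users, and releases the sum via the Gaussian mechanism. The reason this helps is that a single user's empirical distribution $\hat p_u = N(u)/m$ is already a meaningful estimator of $p$, so its $\ell_2$ norm is typically much smaller than the worst-case $1$ used by \lemmaref{lem:naive}, shrinking the per-user $\ell_2$ sensitivity by roughly $\sqrt m$.

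Concretely, the plan is to set $\tilde p_u = \hat p_u\min(1, r/\|\hat p_u\|_2)$ and release $\bar p = s^{-1}\sum_u \tilde p_u + Z/s$ with $Z_i \sim \mathcal{N}(0, (2r\sqrt{2\log(2/\delta)}/\epsilon)^2)$ iid. The $\ell_2$ sensitivity of $\sum_u \tilde p_u$ is at most $2r$ by construction, so this is $(\epsilon,\delta)$-DP; the expected $\ell_1$ error splits into (a) a sampling term $\cO(\sqrt{k/(ms)})$, (b) a clipping bias obtained by summing $\|\tilde p_u - \hat p_u\|_1$ over users, and (c) a Gaussian noise term $\cO(kr\sqrt{\log(1/\delta)}/(s\epsilon))$. \textbf{Dense regime} ($k\le m$): pick $r=\Theta(1/\sqrt m)$, justified by the concentration of $\|\hat p_u\|_2^2$ around $\|p\|_2^2 + (1-\|p\|_2^2)/m$, which is $\cO(1/m)$ whenever $\|p\|_2^2\le 1/m$; plugging in recovers the first summand $k\sqrt{\log(1/\delta)}/(\sqrt m\alpha\epsilon)$. \textbf{Sparse regime} ($k>m$): each $N(u)$ is supported on at most $m$ of $k$ coordinates with $\|N(u)\|_2^2\le m\|N(u)\|_\infty$, which is $\tilde{\cO}(m)$ with high probability, so applying the same template to $N(u)$ with $r=\tilde{\cO}(\sqrt m)$ and dividing by $ms$ at the end yields the same privacy term with a $\sqrt m$ saving over \lemmaref{lem:naive}.

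The hard part has two pieces. First, the clipping radius $r$ depends on $\|p\|_2$, which is unknown. I would resolve this by splitting $s$ in half and spending the first half on a preliminary private estimate of $\|p\|_2^2$ (or, equivalently, on identifying heavy hitters) via the standard Gaussian mechanism on counts; constant-factor accuracy suffices for tuning $r$, and the cost is $\cO(\sqrt{k\log(1/\delta)}/\epsilon)$ users, which is exactly what contributes the second max-summand $\sqrt{k\log(1/\delta)}/\epsilon$. Second, I must show that clipping rarely fires in the second phase, i.e., $\|\hat p_u\|_2\le r$ for every one of the $s$ users; a Bernstein-type bound on $\|\hat p_u\|_2^2$ together with a union bound over $s$ users and $k$ coordinates gives this with failure probability $\alpha/(km)$, and is exactly what produces the outer $\log(km/\alpha)$ factor in \eqref{eq:kall_main}.

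Finally, setting each of the three error pieces to $\alpha/3$ and solving for $s$ in each regime gives the stated bound; the meta-selection between the dense and sparse subroutines is a deterministic function of $k$ and $m$, so it carries no privacy cost and preserves the $(\epsilon,\delta)$ guarantee by basic composition of the preliminary-estimation and clipped-Gaussian phases.
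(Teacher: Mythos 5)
There is a genuine gap, and it sits at the heart of your construction: you clip each user's empirical distribution $\hat p_u$ toward the \emph{origin}. The quantity $\|\hat p_u\|_2$ concentrates around $\|p\|_2$, not around $1/\sqrt m$: indeed $\EE\|\hat p_u\|_2^2 = \|p\|_2^2 + (1-\|p\|_2^2)/m$, and since the theorem's guarantee is a supremum over all of $\Delta_k$, you must handle $p$ with $\|p\|_2 = \Theta(1)$ (e.g.\ a near point mass). For such $p$, your dense-regime choice $r = \Theta(1/\sqrt m)$ makes the shrink factor $\min(1, r/\|\hat p_u\|_2)$ of order $1/\sqrt m$, so the mean of the clipped vectors is biased by $\Theta(1)$ in $\ell_1$; conversely, avoiding this bias forces $r \gtrsim \|p\|_2$, which can be $\Theta(1)$, and then your noise term $kr\sqrt{\log(1/\delta)}/(s\epsilon)$ degenerates to the rate of \lemmaref{lem:naive} with no $\sqrt m$ saving. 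Your stated justification ("$\cO(1/m)$ whenever $\|p\|_2^2 \le 1/m$") is essentially vacuous in the dense regime, since $\|p\|_2^2 \ge 1/k \ge 1/m$ when $k \le m$, with equality only for the uniform distribution. The sparse-regime variant fails the same way: $\|N(u)\|_\infty = \tilde{\cO}(1)$ is false whenever some symbol has large probability (nothing prevents $p_1 = 1/2$ when $k > m$), in which case $\|N(u)\|_2 = \Theta(m)$, not $\tilde{\cO}(\sqrt m)$. Tuning $r$ with a preliminary private estimate of $\|p\|_2^2$ does not rescue the argument --- it only confirms that $r$ must be as large as $\|p\|_2$, which is precisely the regime where the method gives nothing.

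The missing idea that would make your template sound is \emph{centering}: clip the deviation $\hat p_u - p_0$ from a coarse private estimate $p_0$ rather than $\hat p_u$ itself, since $\EE\|\hat p_u - p\|_2^2 = \sum_i p_i(1-p_i)/m \le 1/m$ uniformly over $p \in \Delta_k$, so a radius $r = \tilde{\cO}(1/\sqrt m)$ then covers the deviations of all $s$ users with high probability; this requires an additional private localization step producing $p_0$ with $\ell_2$ accuracy $\cO(1/\sqrt m)$, which your proposal does not contain (your heavy-hitter phase is used only to set $r$, never to recenter). For comparison, the paper avoids clipping entirely: it treats each symbol's count as a sample from $\Bin(m, p_i)$, runs private hypothesis selection (\algorithmref{alg:binom}) over a grid of binomial hypotheses, converts total-variation accuracy to parameter accuracy $|p_i - \hat p_i| \lesssim \alpha \max\left(\frac{1}{m}, \sqrt{\frac{p_i(1-p_i)}{m}}\right)$ via the two-sided bound of \theoremref{thm:binomial}, and obtains the $\sqrt m$ saving together with the $\sqrt{k}\sqrt{\log(1/\delta)}/\epsilon$ term by splitting the budget as $\epsilon' = \epsilon/\Theta(\sqrt{k \log(1/\delta)})$ under strong composition; the sparse regime is patched for small $p_i$ with the zero-count estimator of $(1-p_i)^m$ in \algorithmref{alg:psmall}. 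Your route (per-user vectors, clipping, Gaussian release) can be made to work and is a genuinely different, influential approach in later literature, but as written the step it hinges on fails at exactly the worst-case distributions the sup in $S^A_{m,\alpha,\epsilon,\delta}$ ranges over.
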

The algorithm in Theorem~\ref{thm:kall} assumes that all users have the same number of samples. When $k$ is large or $\alpha$ is small, the first term in the maximum dominates and  we obtain $\tilde{\Theta}(\sqrt{m})$ improvement compared to Laplace and Gaussian mechanisms. In Appendix~\ref{app:extensions}, we modify it to the setting when users have different number of samples. The sample complexity is similar to~\eqref{eq:kall_main}, with $m$ replaced by $\bar{m}$, the median of number of samples per user. We also note that our algorithms are designed using high probability arguments, and hence we can easily obtain the sample complexity with logarithmic dependence on the inverse of the confidence parameter. 

Finally we provide an information theoretic lower bound for any $(\epsilon,0)$-differentially private algorithm:
\begin{theorem}
\label{thm:lowerall}
Let $\epsilon \leq 1$. Then
 \[
S^*_{m, \alpha, \epsilon, 0} =
\Omega \left(
\frac{k}{m\alpha^2} + \frac{k}{\sqrt{m}\alpha\epsilon} \right).
\]
\end{theorem}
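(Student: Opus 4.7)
The target bound decomposes as $\Omega(k/(m\alpha^2))+\Omega(k/(\sqrt{m}\alpha\epsilon))$. The first summand is the classical non-private minimax rate for discrete distribution learning from $sm$ i.i.d.\ samples~\citep{kamath2015learning,han2015minimax}, which applies to any estimator regardless of privacy; I would invoke it as a black box, since a private estimator is in particular an (unconstrained) estimator on $sm$ samples.

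For the privacy term, the plan is a DP adaptation of Assouad's lemma on the hypercube family
\[
p_{v,2i-1}=(1+v_ic)/k,\quad p_{v,2i}=(1-v_ic)/k,\quad v\in\{-1,+1\}^{k/2},\quad c=\Theta(\alpha).
\]
For adjacent $v\sim_i v'$ the $\ell_1$ distance is $4c/k$, so Assouad reduces the task to upper bounding $\mathrm{TV}(A(P^s_v),A(P^s_{v'}))$, where $P^s_v$ is the joint distribution of the $s$ users' samples. I would couple the two datasets user-by-user (optimal product coupling), so the expected user-level Hamming distance between them equals $s\cdot\mathrm{TV}(p_v^m,p_{v'}^m)$, and then invoke pure-DP group privacy to bound $\mathrm{TV}(A(P^s_v),A(P^s_{v'}))$ by $O(\epsilon\cdot s\cdot\mathrm{TV}(p_v^m,p_{v'}^m))$, using concentration of the coupled Hamming count to tame the $(e^{h\epsilon}-1)$ factor.

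The key quantitative ingredient is the sharp per-user estimate
\[
\mathrm{TV}(p_v^m,p_{v'}^m)=O\bigl(c\sqrt{m/k}\bigr),
\]
which is where the paper's new total-variation bound between binomial distributions comes in: a naive sensitivity-based bound only gives $O(cm/k)$, but conditioning on the count of the two perturbed symbols reduces the problem to the TV between two binomials with parameters $(1\pm c)/2$, and multinomial concentration then saves a full factor of $\sqrt m$. This per-user $\sqrt m$ savings is precisely what produces the $\sqrt m$ reduction in the privacy penalty of the user complexity relative to the Laplace/Gaussian mechanisms. Plugging this into Assouad and requiring the output TV be at most $1/2$ yields the privacy-term lower bound on $s$.

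The main obstacle I anticipate is the aggregation across the hypercube coordinates needed to obtain the full $k$-factor rather than only $\sqrt k$ that a plain pair-wise Assouad gives. Overcoming this requires either a multi-hypothesis Fano argument on a Varshamov--Gilbert packing of $\{-1,+1\}^{k/2}$, combined with the KL form of pure-DP group privacy and second-moment control of the coupled Hamming distance, or an equivalent fingerprinting-style argument forcing the algorithm to recover all $k/2$ hypercube coordinates simultaneously. Carefully combining the binomial-TV estimate with the group-privacy inequality across all packed codewords so that the aggregated KL stays below $\log|V'|=\Omega(k)$ exactly in the regime $s\lesssim k/(\sqrt{m}\alpha\epsilon)$ is the delicate part, and is the principal place where the new TV-between-binomials bound is put to work.
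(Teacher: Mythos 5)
Your decomposition and your treatment of the non-private term match the paper, but your primary route for the privacy term has a gap that you flag yourself and never close. As you correctly anticipate, pairwise Assouad with a per-user maximal coupling gives expected user-level Hamming distance $D=O\bigl(s\alpha\sqrt{m/k}\bigr)$ between the coordinate mixtures, and group privacy then yields only $s=\Omega\bigl(\sqrt{k}/(\sqrt{m}\alpha\epsilon)\bigr)$ --- a $\sqrt{k}$ shortfall from the claimed bound. The repair you gesture at (``a multi-hypothesis Fano argument on a Varshamov--Gilbert packing \dots or an equivalent fingerprinting-style argument'') is precisely what the paper actually does (Appendix~\ref{app:fano}): it proves an $\epsilon$-DP Fano inequality (Theorem~\ref{lem:fano-restricted}) whose private term $0.4\alpha\min\{1, M e^{-10\epsilon D}\}$ is obtained by projecting the estimator onto the packing, truncating the coupling at Hamming distance $10D$ by Markov's inequality, applying group privacy on the truncated event, and summing the resulting inequalities over all $M$ codewords; it then instantiates this with a VG packing of $\{-1,+1\}^{k/2}$ of size $M=e^{\Omega(k)}$ and per-user maximal coupling (Lemma~\ref{lem:max-coupling}) with $D\le 10s\sqrt{m}\alpha$, which gives $s=\Omega\bigl(k/(\sqrt{m}\alpha\epsilon)\bigr)$ when $\epsilon\le 1$. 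Your proposal names these ingredients but does not execute any of them, and this aggregation step is the actual content of the proof; note also that no ``second-moment control of the coupled Hamming distance'' is needed --- a first-moment Markov truncation suffices.

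A secondary but concrete error: the per-user total-variation estimate does not come from the paper's binomial-TV result (Theorem~\ref{thm:binomial}), and your claim that this theorem is ``the principal place where the new TV-between-binomials bound is put to work'' is wrong on both counts. In the paper's proof the bound $d_{TV}(\mult(m,q_u),\mult(m,q_v))\le 10\sqrt{m}\alpha$ follows in two lines from the chi-squared bound $d_{KL}(q_u\|q_v)\le 100\alpha^2$, tensorization of KL over the $m$ samples, and Pinsker; your adjacent-codeword estimate $O\bigl(c\sqrt{m/k}\bigr)$ follows the same way, so your conditioning-on-counts reduction to two binomials, while valid, buys nothing here. Theorem~\ref{thm:binomial} --- in particular its lower-bound direction, which is the genuinely new part --- is used only in the utility analysis of the algorithms (Theorems~\ref{thm:main_upper}, \ref{thm:ksmall}, \ref{thm:klarge}), not anywhere in the proof of Theorem~\ref{thm:lowerall}.
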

Theorems~\ref{thm:kall} and~\ref{thm:lowerall} resolve the user complexity of learning discrete distributions up to log factors and the $\delta$-term in privacy. It would be interesting to see if Theorem~\ref{thm:lowerall} can be extended to nonzero values of $\delta$. In the next two sections, we first analyze the lower bounds and then propose algorithms. 

\section{Lower bounds}
\label{sec:lower-bounds}
The $\Omega(k/(m\alpha^2))$ part of the user-complexity lower bounds in Theorem \ref{thm:lb-restricted-Assouad} and \ref{thm:lowerall} follows from classic non-private results \eqref{equ:non-private-lb}. Therefore in this section we focus on the private part.

\subsection{Lower bound for restricted estimators}
We first start with the lower bound for algorithms that work directly on the counts vector $N = [N_1, N_2, \ldots, N_k]$, even though the learner has access to $\{N(u) : u \in \cU\}$. This motivates the definition of restricted estimators, which only depends on some function of the observation rather than the observation itself. 

\begin{definition}[$f$-restricted estimators]
Let $f:\cX^s\mapsto \cY$ which maps users' data to some domain $\cY$. An estimator $\hat{\theta}$ is $f$-restricted if it has the form $\hat{\theta}(X^s)=\hat{\theta}'(f(X^s))$ for some function $\hat{\theta}'$.
\end{definition}

We generalize 
Assouad's lemma~\citep{Assouad83,yu1997assouad} with differential privacy and the restricted estimators using the recent coupling methods of~\cite{AcharyaSZ18, acharya2020differentially}. These bounds could be of interest in other applications and we describe a general framework where they are applicable. 

Let $\cX$ be some domain of interest and $\cP$ be any set of distributions over $\cX$.

Assume that $\mathcal{P}$ is parameterized by $\theta:\mathcal{P}\mapsto\Theta\in \RR^d$, i.e. each $p\in\mathcal{P}$ can be uniquely represented by a parameter vector $\theta(p)\in\RR^d$. Given $s$ samples from an unknown distribution $p \in \cP$, an estimator $\hat{\theta}:\cX^s\mapsto \Theta$ takes in a sample from $\cX^s$ and outputs an estimation in $\Theta$. Let $\ell:\Theta\times\Theta\mapsto \RR_+$ be a pseudo-metric that measures estimation accuracy. For a fixed function $f$, let $\cA_f$ be the class of $f$-restricted estimators.
We are interested in the min-max risk for $(\epsilon, \delta)$-DP restricted estimators:
\[
L(\cP, \ell, \epsilon, \delta):=\min_{\hat{\theta}\in\cA_{\epsilon, \delta} \cap \cA_f}\max_{p\in\mathcal{P}}\EE_{X^s\sim p^s}[\ell(\hat{\theta}(X^s), \theta(p))].
\]

We need two more definitions to state our results.
\begin{definition}[$f$-identical in distribution]
Given a function $f$, two random variables $X$ and $Y$ are $f$-identical in distribution if $f(X)$ and $f(Y)$ have the same distributions, denoted by $Y\sim_f X$.  If $X\sim p$ and $Y\sim p'$ , then we can also say $p\sim_f p'$. 
\end{definition}

\begin{definition}[$f$-coupling]
Given a function $f$ and two distributions $p, q$, random variables $(X, Y)$ are an $f$-coupling of $p$ and $q$ if $X\sim_f p$ and $Y\sim_f q$. When $f$ is the identity mapping, then an $f$-coupling is same as standard coupling.
\end{definition}

We make the following observation for restricted estimators: since we can only estimate the true parameter $\theta$ through some function $f$ of the observation $X^s$, then any random variable $Y^s$ such that $f(Y^s)$ has the same distribution as $f(X^s)$ would yield the same distribution for restricted estimators $\hat{\theta}$. Thus, if $\hat{\theta}$ could distinguish two distributions $p_1, p_2$ from the space of product distributions $\cP^s:=\{p^s: p\in\cP\}$, then it should also be able to distinguish $p_1'\sim_f p_1$ and $ p_2'\sim_f p_2$. We are able to prove tighter lower bounds because $p_1', p_2'$ (potentially outside of $\mathcal{P}^s$) could be harder to distinguish than the original distributions $p_1, p_2$. This is the most significant difference between our method and \citep{acharya2020differentially}, whose argument does not capture the above observation for restricted estimators and hence requires designing testing problems within the original class of distributions.

With this intuition, we show a generalization of Assouad's lower bound in Theorem \ref{thm:Assouad}. It relies on an extension of the Le Cam's method~\citep{LeCam73, yu1997assouad}. The proofs are in Appendix \ref{app:assouad}. For two sequences $X^s$ and $Y^s$, let
$d_{h}(X^s, Y^s) = \sum^s_{i=1} 1_{X_i \neq Y_i}$ denote the Hamming distance.
\begin{theorem}[$(\epsilon, \delta)$-DP Assouad's method for restricted estimators] Let $\mathcal{V}:=\{\pm 1\}^k$ be a hypercube. Consider a set of distributions $\mathcal{P}_{\mathcal{V}}:=\{p_\nu: \nu\in \cV\}$ over $\cX$. Let for all $u, v\in\mathcal{V} $ the loss $\ell$ satisfies
\begin{equation}
\label{eq:tau}
  \ell(\theta(p_u), \theta(p_v))\ge 2\tau \sum_{i=1}^k\indic[u_i\ne v_i].
\end{equation}
For each $i\in[k]$, define the following mixture of product distributions:
\[
p_{+i}^s=\frac{2}{|\mathcal{V}|}\sum_{v\in\mathcal{V}:v_i=+1}p^s_v,\quad p_{-i}^s=\frac{2}{|\mathcal{V}|}\sum_{v\in\mathcal{V}:v_i=-1}p^s_v.
\]
If for all $ i\in[k]$ there exists an $f$-coupling $(X^s, Y^s)$ between $p^s_{+i}$ and $p^s_{-i}$ with $\EE[d_{h}(X^s, Y^s)]\le D$, then for any restricted estimator $\hat{\theta} \in \cA_f \cap \cA_{\epsilon, \delta}$,
\[
\sup_{p\in\mathcal{P}}\EE_{X^s\sim p^s}\ell(\theta(p), \hat{\theta}(X^s)) \ge \max\left(\frac{\tau}{2}\sum_{i=1}^k(1-d_{TV}(p_{+i}^s, p_{-i}^s)),  \frac{k\tau}{2}\left(0.9e^{-10\varepsilon D}-10D\delta\right)\right).
\]
\label{thm:Assouad}
\end{theorem}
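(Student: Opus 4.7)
The plan is to combine an Assouad-type reduction with two separate coordinate-wise lower bounds: the classical Le Cam two-point inequality for the first argument of the $\max$, and a group-privacy coupling argument for the second. The $f$-restriction is genuinely used only in the second step, where it is what allows us to work with an $f$-coupling rather than a full coupling of the product distributions.

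For the reduction, place the uniform prior on $\cV$ and, for any $f$-restricted estimator $\hat\theta$, define the induced hypothesis $\hat v(X^s) \in \cV$ coordinatewise (say by minimizing $\ell(\theta(p_v), \hat\theta(X^s))$ over $v \in \cV$). The $2\tau$-separation \eqref{eq:tau} gives via the triangle inequality $\ell(\theta(p_v), \hat\theta(X^s)) \ge \tau \sum_i \indic[\hat v_i \ne v_i]$, and averaging over the uniform prior on $\cV$ yields
\begin{equation*}
\sup_{p \in \cP_{\cV}} \EE_{X^s \sim p^s}[\ell(\theta(p), \hat\theta(X^s))] \ge \frac{\tau}{2}\sum_{i=1}^k \bigl(p_{+i}^s(\hat v_i = -1) + p_{-i}^s(\hat v_i = +1)\bigr).
\end{equation*}
Write $q := p_{+i}^s(\hat v_i = -1)$ and $q' := p_{-i}^s(\hat v_i = +1)$; the theorem reduces to lower bounds on $q + q'$ for each coordinate $i$. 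Le Cam's two-point inequality, applied to the binary test $\hat v_i$, immediately gives $q + q' \ge 1 - d_{TV}(p_{+i}^s, p_{-i}^s)$, which summed in $i$ produces the first argument of the $\max$.

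For the second argument I would use the $f$-coupling $(X^s, Y^s)$. Because $\hat v_i$ depends on its input only through $f$, the distribution of $\hat v_i(X^s)$ matches its distribution under $p_{+i}^s$, and similarly $\hat v_i(Y^s)$ matches $p_{-i}^s$. Pointwise $(\epsilon, \delta)$-group privacy gives $\Pr[\hat v_i(x) \in S] \le e^{d\epsilon}(\Pr[\hat v_i(y) \in S] + d\delta)$ for any $x, y$ with $d_h(x, y) = d$ and any event $S$. Setting $D' := 10 D$ and $B := \{d_h(X^s, Y^s) \le D'\}$, Markov gives $\Pr[B^c] \le D/D' = 1/10$. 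Integrating over the coupling and splitting on $B$ vs.\ $B^c$ (using the trivial bound $\Pr[\cdot] \le 1$ on $B^c$) yields, applied to $S = \{+1\}$,
\begin{equation*}
1 - q \;=\; p_{+i}^s(\hat v_i = +1) \;\le\; e^{10 \epsilon D}\bigl(q' + 10 D \delta\bigr) + \tfrac{1}{10}.
\end{equation*}
Rearranging gives $q' \ge (0.9 - q)\,e^{-10\epsilon D} - 10 D \delta$, so $q + q' \ge 0.9\, e^{-10\epsilon D} - 10 D \delta$ since $q \ge 0$. Summing in $i$ produces the second argument of the $\max$.

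The most delicate step is making the $\delta$-term emerge as $10 D \delta$ rather than $10 D \delta \cdot e^{10 \epsilon D}$. The trick is to apply group privacy for the event $S = \{+1\}$ with the coupling oriented from $p_{+i}^s$ to $p_{-i}^s$, and then divide through by $e^{10 \epsilon D}$ so that the exponential blow-up cancels on the $\delta$ term. A secondary subtlety is the truncation choice $D' = 10 D$, which balances Markov's $D/D' = 0.1$ against the group-privacy factor $e^{D'\epsilon}$ to produce the clean constant $0.9$ in the final bound.
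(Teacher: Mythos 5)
Your proposal is correct and takes essentially the same route as the paper: the standard Assouad reduction to per-coordinate binary tests $\hat v_i$, followed by a per-coordinate Le Cam-type bound that uses the $f$-coupling (valid precisely because $\hat v_i$, a post-processing of the $f$-restricted $\hat\theta$, is itself $f$-restricted and $(\epsilon,\delta)$-DP), Markov truncation at Hamming distance $10D$, and group privacy --- this is exactly the paper's Theorem~\ref{thm:lecam} specialized to these tests, with the classical Assouad lemma invoked on top. The only minor deviation is that you run the group-privacy step one-sidedly (rearranging $1-q\le e^{10\epsilon D}(q'+10D\delta)+0.1$ and using $q\ge 0$) where the paper symmetrizes over both error directions and adds the two inequalities; both deliver the same constant $0.9e^{-10\epsilon D}-10D\delta$.
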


The proof of Theorem~\ref{thm:lb-restricted-Assouad} follows from Theorem~\ref{thm:Assouad}. We provide details in Appendix \ref{app:thm1_detail}. 
\begin{proof}[Proof sketch of Theorem \ref{thm:lb-restricted-Assouad}]
In our problem setting, $\cX=[k]^m$ is the domain and $\cP$ is the set of multinomial distributions $\cP = \{\mult(m, p) : p \in \Delta_k\}$, where $\mult(m,p)$ denotes the multinomial distribution.
The parameter we are trying to estimate 
is the underlying $p$ and the loss is $\ell_1$ distance. 

We construct $\mathcal{P}_{\mathcal{V}}$ as follows: let $\alpha\in(0, 1/6)$, and for each $\nu\in \mathcal{V}:=\{-1, 1\}^{k/2}$,

\begin{equation}
    p_\nu =  \mult \left(m , \frac{1}{k}(1+3\alpha \nu_1, 1-3\alpha\nu_1, ..., 1+3\alpha\nu_{k/2}, 1-3\alpha\nu_{k/2}) \right).
    \label{equ:paninsky}
\end{equation}
For any $u, v\in\cV$, 
$\ell_1$ distance satisfies~\eqref{eq:tau} with $\tau = 6\alpha/k$. 

For restricted estimator $\hat{p}^A$ which only operates on  $N=[N_1, ..., N_k]$, for each $i\in[k]$ we can design an $N$-coupling $(X^s, Y^s)$ of $p_{+i}^{s}$ and $p_{-i}^{s}$ with $\EE[d_{h}(X^s, Y^s)]\le6\alpha s/k + 1 =: D$.  Plugging in $\tau$ and $D$ in Theorem~\ref{thm:Assouad} yields the desired min-max rate and user complexity.
\end{proof}

\subsection{Lower bound for the general case}
We provide the complete proof of Theorem \ref{thm:lowerall} in Appendix~\ref{app:fano} and sketch an outline here. We use differentially private Fano's method ~\cite[Corollary 4]{acharya2020differentially}.
We  design a set of distributions $\cP\subseteq\Delta_k$ such that,
$|\cP|=\Omega(\exp(k))$, and for each $p, q\in\cP$,
\[
\ell_1(p, q)=\Omega (\alpha),\; d_{KL}(\mult(p)||\mult(q))=O(m\alpha^2),\; d_{TV}(\mult(p)||\mult(q))=O(\sqrt{m}\alpha^2).
\]
Applying \citet[Corollary 4]{acharya2020differentially} with $M=\Omega(\exp(k)), \tau = \alpha, \beta = O(m\alpha^2),\gamma = O(\sqrt{m}\alpha^2)$ yields the result.

\section{Algorithms}
We first propose an algorithm for the dense regime where $k \leq m$. In this regime, on average each user sees most of the high-probability symbols. However, this algorithm does not extend directly to the sparse regime when $k \geq m$. In the sparse regime, we augment the dense algorithm regime with another sub-routine for small probabilities. Both algorithms could be extended to the case when users have different number of samples (see Appendix \ref{app:extensions}). 

\subsection{Algorithms for the dense regime}

We first motivate our algorithm with an example. Consider a symbol with probability around $1/2$. If $m$ is large, then by the Chernoff bound, such a symbol has counts in the range \[
\left[\frac{m}{2} - \sqrt{\frac{m}{2}\log \frac{2}{\delta}},\frac{m}{2} + \sqrt{\frac{m}{2} \log \frac{2}{\delta}} \right],
\]
with probability $\geq 1- \delta$. Hence, neighboring datasets differ typically with $\sqrt{m}$ counts. However, in the worst case, they could differ by $m$ and hence standard mechanisms add noise proportional to $m$.

We propose the following alternative method. The count for symbol $i\in[k]$ can take values from $0,1,\ldots, m$ and is distributed according to $\Bin(m,p_i)$. Thus, we can learn this distribution $\Bin(m,p_i)$ itself to a good accuracy and then estimate $p_i$ from the estimated density of $\Bin(m,p_i)$.

We propose to use the private hypothesis selection algorithm due to~\cite{bun2019private} to learn the density of the Binomial distribution. It gives a score for every hypothesis using the Scheff\'e algorithm~\citep{scheffe1947useful} and then privately selects a hypothesis using the exponential mechanism based on the score functions. For completeness, we state the private hypothesis selection algorithm in Algorithm~\ref{alg:phs} and its guarantee in Lemma \ref{lem:phs} in the Appendix.

\begin{algorithm}[t]
\caption{Private hypothesis selection: PHS($\cH, D, \alpha, \epsilon$) \citep{bun2019private}}
\label{alg:phs}
\begin{algorithmic}[1]
\STATE \textbf{Input}: $\cH=\{H_1, \ldots, H_d\}$ the set of hypotheses, dataset $D$ of $s$ samples drawn i.i.d. from $p\in\cH$, accuracy parameter $\alpha\in(0, 1)$, privacy parameter $\epsilon$.
\FOR {\textbf{each} $H_i, H_j\in \cH$}
\STATE $\cW=\{x\in\cX: H_i(x)>H_j(x)\}$, $p_i=H_i(\cW), p_j=H_j(\cW)$.
\STATE Compute $\hat{\tau}=\frac{1}{s}|\{x\in D: x\in \cW\}|$ and
\[\Gamma(H_i, H_j, D)=
\begin{cases}
s & p_i-p_j\le 3\alpha;\\
s\cdot \max\{0, \hat{\tau}-(p_j+(3/2)\alpha)\}&\text{otherwise}.
\end{cases}
\]
\ENDFOR
\STATE For each $H_j\in\cH$ compute $S(H_j, D)=\min_{H_k\in\cH}\Gamma(H_j, H_k, D)$.
\RETURN random hypothesis $\hat{H}$ such that for each $H_j$:
$$
\Pr[\hat{H}=H_j]\propto \exp\left(\frac{S(H_j, D)}{2\epsilon}\right).
$$
\end{algorithmic}
\end{algorithm}

\begin{algorithm}[t]
\caption{Learning binomial distributions: Binom($D, \epsilon, \alpha$)}
\label{alg:binom}
\begin{algorithmic}[1]
\STATE \textbf{Input}: Dataset $D$ of $s$ samples i.i.d. from $\Bin(m, p)$, privacy parameter $\epsilon$, accuracy parameter $\alpha$.
\STATE Let $\cP=\{0,\frac{c\alpha}{20m},\frac{2c\alpha}{20m},\ldots, 1\rceil \}$ and $\cH=\{\Bin(m, p): p\in\cP\}$. 
\STATE Run PHS($\cH, D, c\alpha/5, \epsilon$) and obtain $\Bin(m, \hat{p})$.
\RETURN $\hat{p}$.
\end{algorithmic}
\end{algorithm}

\begin{figure}[t]
    \centering
    \begin{subfigure}[t]{.43\linewidth}
        \centering
        \includegraphics[height=4cm]{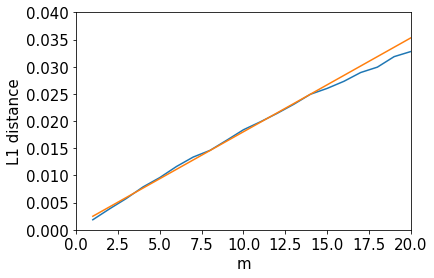}
        \caption{$m \leq 20$. $\ell_1$ distance grows linearly with $m$.}
        \label{fig:small_m}
    \end{subfigure}
    \begin{subfigure}[t]{.45\linewidth}
        \centering
        \includegraphics[height=4cm]{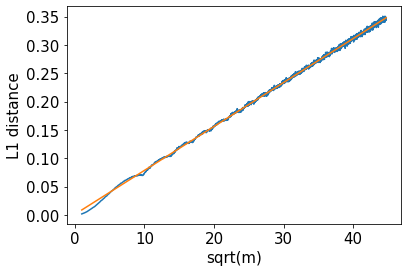}
        \caption{Larger values of $m$. $\ell_1$ distance grows as $\sqrt{m}$.}
        \label{fig:large_m}
    \end{subfigure}
    \caption{$\ell_1(\Bin(m, p),\Bin(m, q))$ with $p=0.01$ and $q=0.011$. 
    We approximate the $\ell_1$ distance by samples.
    The blue curves are the approximations and orange curves are the best line fit.}
    \label{fig:L1}
\end{figure}

Our proposed algorithm for learning Binomial distributions is given in Algorithm~\ref{alg:binom}. We compute a cover of Binomial distributions and use Algorithm~\ref{alg:phs} to select an estimate of the underlying Binomial distribution. We return the parameter of the Binomial distribution as our estimate for the underlying parameter.
Translating the guarantees on total variation distance between binomial distributions to difference of parameters requires a bound on parameter estimation from the binomial density estimation. To this end, we show the following theorem, which might be of independent interest.

\begin{theorem}
\label{thm:binomial}
For all $m$ and $p, q$, 
\[
 \ell_1(\Bin(m, p), \Bin(m,q))
= \Theta \left(\min \left(m | p - q|,  \frac{\sqrt{m} |p - q|}{ \sqrt{p(1-p)}}, 1\right) \right).
\]
\end{theorem}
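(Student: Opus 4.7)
I would prove the two inequalities of the $\Theta$ separately, with each one splitting naturally into three regimes that correspond to which of the three terms inside the minimum is the binding one.

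\textbf{Upper bound.} Three estimates combine via the minimum. First, $\ell_1 \leq 2$ is trivial. Second, realize $\Bin(m,p)$ as the sum of $m$ i.i.d.\ Bernoulli$(p)$ variables; coupling each Bernoulli pair maximally and using the data-processing inequality yields $\ell_1(\Bin(m,p),\Bin(m,q)) \leq 2m|p-q|$. Third, for the middle term I use the squared Hellinger distance together with data processing: $H^2(\Bin(m,p),\Bin(m,q)) \leq H^2(\mathrm{Bern}(p)^{\otimes m},\mathrm{Bern}(q)^{\otimes m}) = 2 - 2\bigl(\sqrt{pq}+\sqrt{(1-p)(1-q)}\bigr)^m$. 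A Taylor expansion of $\sqrt{pq}+\sqrt{(1-p)(1-q)}$ around $p=q$ shows $H^2(\mathrm{Bern}(p),\mathrm{Bern}(q)) = \Theta\!\bigl((p-q)^2/(p(1-p))\bigr)$ for $p,q$ bounded away from $0,1$; taking the $m$-th power (and using $1-(1-x)^m\le mx$) gives $H^2(\Bin(m,p),\Bin(m,q)) = O\!\bigl(m(p-q)^2/(p(1-p))\bigr)$. Finally $\ell_1 \leq 2\sqrt{2}\,H$ turns this into the desired $O\!\bigl(\sqrt{m}\,|p-q|/\sqrt{p(1-p)}\bigr)$ bound.

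\textbf{Lower bound.} I case on which of the three terms in the minimum is the smallest and produce an explicit distinguishing event. (A) If $m|p-q|$ is the smallest term (forcing $m\max(p,q)$ to be $O(1)$ as well), use the event $\{X=0\}$: one computes $|(1-p)^m-(1-q)^m| = \Theta\!\bigl(m|p-q|(1-\max(p,q))^{m-1}\bigr) = \Omega(m|p-q|)$. (B) If the middle term is the smallest, invoke a local-CLT/Stirling estimate: near its mean, the pmf of $\Bin(m,p)$ satisfies $\Pr[\Bin(m,p)=k] = \Theta\!\bigl(1/\sqrt{mp(1-p)}\bigr)$ uniformly on a window of width $\Theta\!\bigl(\sqrt{mp(1-p)}\bigr)$ around $mp$. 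Take the integer window $I = [mp - C\sigma_p,\ mp + C\sigma_p]$ for $\sigma_p := \sqrt{mp(1-p)}$ and a suitable constant $C$; then $\Bin(m,p)(I) = \Omega(1)$ while $\Bin(m,q)(I)$ is smaller by $\Omega\!\bigl(m|p-q|/\sqrt{mp(1-p)}\bigr) = \Omega\!\bigl(\sqrt{m}\,|p-q|/\sqrt{p(1-p)}\bigr)$, precisely because the mean shift $m|p-q|$ is at most comparable to $\sigma_p$ in this regime. (C) If the constant $1$ is the binding term, both previous constructions already saturate at $\Omega(1)$.

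\textbf{Main obstacle.} The upper bound is essentially bookkeeping. The real work is in sub-case (B) of the lower bound: I need a non-asymptotic local limit statement for the binomial pmf with explicit constants, valid uniformly in $p$ (including near the boundary where the Gaussian approximation is weakest), and I must choose the window so that the probability gap is $\Omega(\sqrt{m}|p-q|/\sqrt{p(1-p)})$ rather than something smaller after the shift. I would handle this via direct Stirling-based estimates of $\binom{m}{k} p^k(1-p)^{m-k}$ near $k = mp$, with a separate handling of the small-$p$ boundary using Poisson approximation (which collapses to sub-case (A) anyway). Once those pmf estimates are in hand, the window argument is a one-line difference of two $\Omega(1)$-mass integer intervals.
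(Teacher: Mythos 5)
Your upper bound is correct and essentially the paper's argument in different clothing: the paper gets the middle term from Pinsker plus the $\chi^2$-type bound $D(\mathrm{Ber}(q)\,\|\,\mathrm{Ber}(p))\le (p-q)^2/(p(1-p))$, where you use tensorized Hellinger; the two are interchangeable here. (One cosmetic fix: you do not need $p,q$ bounded away from $0,1$ — the elementary inequality $H^2(\mathrm{Ber}(p),\mathrm{Ber}(q))=(\sqrt p-\sqrt q)^2+(\sqrt{1-p}-\sqrt{1-q})^2\le (p-q)^2/p+(p-q)^2/(1-p)=(p-q)^2/(p(1-p))$ holds for all $p\in(0,1)$, so the upper bound direction needs no Taylor expansion at all.) Your lower-bound case (A) is exactly the paper's Lemma for the $mp=O(1)$ regime, via the event $\{X=0\}$ and $|(1-p)^m-(1-q)^m|$.

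The genuine gap is the distinguishing event in case (B). The symmetric window $I=[mp-C\sigma_p,\,mp+C\sigma_p]$ centered at $mp$ does \emph{not} yield a gap of order $\Delta/\sigma_p$ with $\Delta:=m|p-q|$. In regime (B) you have $\Delta\le\sigma_p$, and shifting a nearly symmetric distribution by $\Delta$ relative to a window symmetric about its mean changes the window probability only at second order: in the Gaussian surrogate, $\frac{d}{d\Delta}\bigl[\Phi(C-\Delta/\sigma_p)-\Phi(-C-\Delta/\sigma_p)\bigr]\big|_{\Delta=0}=\phi(-C)-\phi(C)=0$, and the binomial's skewness corrections contribute only $O(\Delta/\sigma_p^2)$. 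So your window witnesses a gap of order $(\Delta/\sigma_p)^2+\Delta/\sigma_p^2$, which is strictly smaller than the claimed $\Omega(\Delta/\sigma_p)$ exactly when $\Delta\ll\sigma_p$ — the heart of case (B). Your "main obstacle" paragraph senses this danger but the stated window does not resolve it. The fix is a one-sided event: for a half line $\{X\le k\}$ one has the exact derivative identity $\Pr(\Bin(m,p)\le k)-\Pr(\Bin(m,q)\le k)=m\int_p^q\Pr(\Bin(m-1,u)=k)\,du$ for $q>p$; the paper uses the sharper form of this due to Adell and Jodr\'a, namely that the total variation itself equals $m\int_{\min(p,q')}^{\max(p,q')}\Pr(\Bin(m-1,u)=k-1)\,du$ for a suitable integer $k$ between the means. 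Your Stirling-based pmf estimates — which are precisely the non-asymptotic local-limit bounds the paper proves, lower-bounding $\binom{m}{k}u^k(1-u)^{m-k}$ via $e^{-mD(k/m\|u)}=\Omega(1)$ along the path — then show the integrand is $\Theta(1/\sigma_p)$, giving the desired $\Omega(\Delta/\sigma_p)$. Two smaller points: the paper first truncates $q$ to $q'$ with $|p-q'|\le\sqrt{p/(8m)}$ so that $u(1-u)\asymp p(1-p)$ holds uniformly on the integration path; in your case (B) the binding-middle-term hypothesis implies such a bound, but in case (C) (where $|p-q|$ may be large) you need the truncation explicitly before either construction "saturates at $\Omega(1)$". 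And for small $m$ the paper short-circuits everything with monotonicity, $\ell_1(\Bin(m,p),\Bin(m,q))\ge\ell_1(\mathrm{Ber}(p),\mathrm{Ber}(q))=2|p-q|$, which you may want to adopt to avoid chasing constants in the Stirling estimates for small $m$.
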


Due to space constraints, we provide the proof in Appendix~\ref{app:total_var}.
We show empirically that the bounds in Theorem \ref{thm:binomial} should hold by estimating the $\ell_1$ distance between $\Bin(m, 0.01)$ and $\Bin(m, 0.011)$. Figure \ref{fig:L1} shows that the $\ell_1$ distance grows linearly with $m$ when $m$ is small, and grows linearly with $\sqrt{m}$ when $m$ is large, which illustrates our bounds in Theorem \ref{thm:binomial}.

Combining Lemma~\ref{lem:phs} with Theorem~\ref{thm:binomial} yields guarantees for Algorithm \ref{alg:binom}. Its sample complexity and utility are given by Theorem \ref{thm:main_upper}. We provide a proof in Appendix~\ref{app:main_upper}.

\begin{theorem}
\label{thm:main_upper}
Let $
s \geq \frac{16 \log(20m/\alpha\beta)}{\alpha^2} +  \frac{16 \log(20m/\alpha\beta)}{\alpha\epsilon}
$.
Given $s$ i.i.d. samples from an unknown binomial distribution $\Bin(m,p)$, Algorithm~\ref{alg:binom} returns $\hat{p}$ such that with probability at least $ 1- \beta$,
\[
|p - \hat{p}| \leq \alpha\max \left( \frac{1}{m} , \frac{\sqrt{p(1-p)}}{\sqrt{m}} \right).
\]
Furthermore, Algorithm~\ref{alg:binom} is  $(\epsilon, 0)$-differentially private.
\end{theorem}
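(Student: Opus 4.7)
The plan is to combine the cover construction in Algorithm~\ref{alg:binom} with the utility guarantee of private hypothesis selection (Lemma~\ref{lem:phs}), and then translate the resulting total-variation bound on the selected binomial into a parameter bound using the two-sided estimate of Theorem~\ref{thm:binomial}. Privacy is immediate: Algorithm~\ref{alg:binom} is just a call to PHS, which is an instance of the exponential mechanism and thus $(\epsilon,0)$-differentially private, and the post-processing step of returning the parameter does not affect the privacy guarantee.

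For utility, first I would argue that the grid $\cP=\{0,\tfrac{c\alpha}{20m},\tfrac{2c\alpha}{20m},\ldots,1\}$ contains a point $p^*$ with $|p-p^*|\le c\alpha/(20m)$. By Theorem~\ref{thm:binomial}, this gives $\ell_1(\Bin(m,p),\Bin(m,p^*)) = O(m|p-p^*|) \le c\alpha/20$, so the true distribution lies $O(\alpha)$-close in total variation to some element of the cover $\cH$. Next I would invoke Lemma~\ref{lem:phs} with accuracy parameter $c\alpha/5$; since $|\cH|=|\cP|=O(m/\alpha)$, the stated sample complexity $s = \Omega(\log(m/(\alpha\beta))/\alpha^2+\log(m/(\alpha\beta))/(\alpha\epsilon))$ exactly matches what PHS needs for its output $\Bin(m,\hat{p})$ to satisfy $\ell_1(\Bin(m,\hat{p}),\Bin(m,p)) \le c'\alpha$ with probability at least $1-\beta$, for an appropriate absolute constant $c'$.

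The final and most delicate step is inverting Theorem~\ref{thm:binomial} to convert this TV bound into the advertised parameter bound. Theorem~\ref{thm:binomial} gives
\[
\ell_1(\Bin(m,\hat{p}),\Bin(m,p)) = \Theta\!\left(\min\!\left(m|p-\hat{p}|,\,\frac{\sqrt{m}|p-\hat{p}|}{\sqrt{p(1-p)}},1\right)\right),
\]
so an $O(\alpha)$ upper bound on the left-hand side forces the minimum on the right to be $O(\alpha)$. By choosing $c$ small enough, I can ensure $c'\alpha$ is bounded away from $1$, which rules out the trivial case where the $1$ inside the minimum is active. Hence at least one of the two nontrivial quantities is $O(\alpha)$: either $m|p-\hat{p}|=O(\alpha)$, giving $|p-\hat{p}|=O(\alpha/m)$, or $\sqrt{m}|p-\hat{p}|/\sqrt{p(1-p)}=O(\alpha)$, giving $|p-\hat{p}|=O(\alpha\sqrt{p(1-p)}/\sqrt{m})$. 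Either way, $|p-\hat{p}|$ is at most a constant times $\alpha\max(1/m,\sqrt{p(1-p)}/\sqrt{m})$, matching the claimed bound once $c$ is calibrated to absorb the hidden constants from Lemma~\ref{lem:phs} and Theorem~\ref{thm:binomial}.

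The main obstacle is the bookkeeping of constants in this last step: one must choose $c$ jointly so that (i) the cover approximation error adds only $O(\alpha)$ TV error, (ii) PHS returns a hypothesis within $O(\alpha)$ TV of the best cover element, (iii) the resulting TV bound is comfortably below the value where the $1$-clipping in Theorem~\ref{thm:binomial} becomes active, and (iv) the final constant in front of $\alpha\max(1/m,\sqrt{p(1-p)}/\sqrt{m})$ is exactly one. Aside from this calibration, the argument is modular, and I would conclude by noting that privacy is preserved because Algorithm~\ref{alg:binom} performs no data-dependent operations outside the single PHS call.
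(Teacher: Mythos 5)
Your proposal is correct and follows essentially the same route as the paper's own proof: build the $c\alpha/(20m)$-spaced grid, use the upper-bound direction of Theorem~\ref{thm:binomial} to show the cover is $c\alpha/5$-dense in total variation, invoke Lemma~\ref{lem:phs} for the private selection, and invert via the lower-bound direction of Theorem~\ref{thm:binomial}, discarding the clipping at $1$ because the resulting TV bound (after dividing by the lower-bound constant $c$) is $4\alpha/5\le 1$. Your constant-calibration discussion matches the paper's bookkeeping, where the final constant works out to $4/5$ and the residual slack absorbs the failure probability.
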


Applying Algorithm \ref{alg:binom} independently on each symbol $i$ to learn $p_i$, we obtain Algorithm \ref{alg:dense}, an $(\epsilon, \delta)$-private algorithm that learns unknown multinomial distributions under the dense regime. Its user complexity is given by Theorem \ref{thm:ksmall}. We provide the proof in Appendix~\ref{app:ksmall}.

\begin{algorithm}[t]
\caption{Dense regime: Dense($D, \epsilon, \delta, \alpha$)}
\label{alg:dense}
\begin{algorithmic}[1]
\STATE \textbf{Input}: dataset $D$ of $s$ samples i.i.d. from $\mult(m, p)$, where $p\in\Delta_k$ , privacy parameter $\epsilon, \delta$, accuracy parameter $\alpha$.
\STATE $\epsilon'=\frac{\epsilon}{4\sqrt{k\log(1/\delta)}}$, $\alpha'=\min\left(\frac{\sqrt{m}\alpha}{2\sqrt{k}}, 1\right)$.
\STATE For each $i\in[k]$, learn the binomial distribution $\Bin(m, p_i)$ using Algorithm \ref{alg:binom}, i.e. $\hat{p}_i=\text{Binom}(D_i, \epsilon', \alpha')$, where $D_i$ is the dataset of counts of symbol $i$ in $D$.
\RETURN $\hat{p}=[\hat{p}_1, \ldots, \hat{p}_k]$.
\end{algorithmic}
\end{algorithm}

\begin{theorem}[Dense regime]
\label{thm:ksmall}
Let $k \leq m$ and $\epsilon \leq 1$. Algorithm~\ref{alg:dense} is $(\epsilon, \delta)$-differentially private and has sample complexity given by,
 \[
S^A_{m, \alpha, \epsilon, \delta} =
\cO \left(
\log \frac{km}{\alpha}\cdot 
\max \left( 
\frac{k}{m\alpha^2} + \frac{k}{\sqrt{m}\alpha\epsilon}, \frac{\sqrt{k}}{\epsilon}\sqrt{\log \frac{1}{\delta}}  \right) \right).
\]
\end{theorem}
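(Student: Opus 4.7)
The plan is to decompose the multinomial learning problem into $k$ parallel binomial learning subproblems, one per coordinate, and then stitch them back together. The count vector $N_i(u) := \#\{j : X_j(u) = i\}$ for each user is distributed as $\Bin(m, p_i)$, so Algorithm~\ref{alg:binom} is directly applicable to each coordinate with some accuracy parameter $\alpha'$ and privacy parameter $\epsilon'$. The two jobs left are: (i) verify that $k$-fold composition of these $(\epsilon',0)$-private routines yields overall $(\epsilon,\delta)$-privacy, and (ii) verify that the sum of per-coordinate errors is at most $\alpha$ in $\ell_1$. The algorithm's choices $\epsilon' = \epsilon/(4\sqrt{k\log(1/\delta)})$ and $\alpha' = \min(\sqrt{m}\alpha/(2\sqrt{k}),1)$ are precisely calibrated for these two accountings.

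For privacy, I would invoke advanced composition. Each call of $\text{Binom}$ is $(\epsilon',0)$-DP by Theorem~\ref{thm:main_upper}, and changing one user's samples can change all $k$ count datasets $D_1,\dots,D_k$ simultaneously. Advanced composition of $k$ mechanisms, each $(\epsilon',0)$-DP, yields $(\epsilon'',\delta)$-DP with $\epsilon'' \le \sqrt{2k\log(1/\delta)}\,\epsilon' + k\epsilon'(e^{\epsilon'}-1)$. Plugging in $\epsilon' = \epsilon/(4\sqrt{k\log(1/\delta)})$ and using $\epsilon \le 1$ shows $\epsilon'' \le \epsilon$, giving the required $(\epsilon,\delta)$-DP.

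For utility, I would apply Theorem~\ref{thm:main_upper} coordinate-wise with confidence $\beta' = \beta/k$ and union-bound. On the good event, $|p_i - \hat{p}_i| \le \alpha'\max(1/m, \sqrt{p_i(1-p_i)/m})$ for every $i$. Summing and using $\max(a,b)\le a+b$:
\[
\|p-\hat{p}\|_1 \;\le\; \alpha'\Bigl(\tfrac{k}{m} + \sum_{i=1}^k \sqrt{p_i(1-p_i)/m}\Bigr) \;\le\; \alpha'\Bigl(\tfrac{k}{m}+\sqrt{\tfrac{k}{m}}\Bigr) \;\le\; 2\alpha'\sqrt{k/m},
\]
where the middle inequality is Cauchy–Schwarz applied to $\sum_i\sqrt{p_i(1-p_i)} \le \sqrt{k\sum_i p_i(1-p_i)} \le \sqrt{k}$, and the last uses $k\le m$ so $k/m \le \sqrt{k/m}$. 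With $\alpha' = \sqrt{m}\alpha/(2\sqrt{k})$ this is at most $\alpha$; taking $\beta = \Theta(\alpha)$ and using $\|p-\hat{p}\|_1 \le 2$ deterministically converts the high-probability bound into an expectation bound of order $\alpha$.

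For the user complexity, I would substitute the chosen $\alpha',\epsilon'$ into the requirement $s \ge C\log(m/(\alpha'\beta'))(1/\alpha'^2 + 1/(\alpha'\epsilon'))$ from Theorem~\ref{thm:main_upper}, splitting by whether $\alpha' = \sqrt{m}\alpha/(2\sqrt{k})$ (the ``small-$\alpha$'' regime) or $\alpha'=1$ (the ``large-$\alpha$'' regime). In the small-$\alpha$ case, $1/\alpha'^2 = 4k/(m\alpha^2)$ and $1/(\alpha'\epsilon') = \Theta(k\sqrt{\log(1/\delta)}/(\sqrt{m}\alpha\epsilon))$, contributing the first term of the max; in the large-$\alpha$ case, only $1/(\alpha'\epsilon') = \Theta(\sqrt{k\log(1/\delta)}/\epsilon)$ survives, contributing the second term. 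Taking the maximum of the two regimes yields the bound claimed in the theorem, with logarithmic factors $\log(km/\alpha)$ absorbing $\log(20m/(\alpha'\beta'))$. The main obstacle, modest as it is, is the Cauchy–Schwarz aggregation step: it is what converts the per-coordinate $\sqrt{p_i(1-p_i)/m}$ guarantee of Theorem~\ref{thm:binomial} into a global $\sqrt{k/m}$ rather than the trivial $k/\sqrt{m}$ one would get from worst-case coordinates, and it is precisely this gain that allows $\alpha'$ to be chosen as large as $\sqrt{m}\alpha/(2\sqrt{k})$, producing the crucial $\sqrt{m}$ improvement over the naive Laplace/Gaussian mechanisms.
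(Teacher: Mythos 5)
Your proposal is correct and takes essentially the same route as the paper's proof: per-symbol application of Theorem~\ref{thm:main_upper} with exactly the choices $\epsilon'=\epsilon/(4\sqrt{k\log(1/\delta)})$ and $\alpha'=\min(\sqrt{m}\alpha/(2\sqrt{k}),1)$, the Cauchy--Schwarz/Jensen aggregation $\sum_i\sqrt{p_i(1-p_i)}\le\sqrt{k}$ together with $k/m\le\sqrt{k/m}$ (using $k\le m$) to get $2\alpha'\sqrt{k/m}\le\alpha$, and strong (advanced) composition for the $(\epsilon,\delta)$ privacy claim. The only cosmetic difference is that you run the per-coordinate guarantee at high probability with a union bound and then convert to expectation, whereas the paper sums the expectation bounds of Theorem~\ref{thm:main_upper} directly; this changes nothing of substance.
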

Theorem~\ref{thm:ksmall} has a better dependency on $m$ than that of the Laplace or Gaussian mechanism.
Furthermore, even if the number of samples tends to infinity, the number of users is least $\cO(\sqrt{k})$.

\subsection{Algorithms for the sparse regime}
We now propose a more involved algorithm for the sparse regime where $m \leq k$. In this regime, users will not see samples from many symbols. A direct application of the private hypothesis selection algorithm would not yield tight bounds in this case.

We overcome this by proposing a new subroutine for estimating symbols with small probabilities, say $p_i \leq 1/m$. In this regime, most symbols appear at most once. Hence, we propose each user sends if a  symbol appeared or not i.e., $1_{N_i(u) > 0}$. Since $N_i(u)$ is distributed as $\Bin(m,p)$, observe that 
\[
\EE[1_{N_i(u) = 0}] = (1-p_i)^m.
\]

Hence, if we get a good estimate for this quantity, then since $p_i \leq 1/m$, we can use it to get a good estimate of $p_i$. 
We describe the details of this approach in Algorithm \ref{alg:psmall}. Its user complexity and utility guarantee are given by Lemma \ref{lem:small}, whose proof is in Appendix~\ref{app:small}.

\begin{algorithm}[t]
\caption{Estimation of binomial with small $p$: SmallBinom($D, \epsilon$)}
\label{alg:psmall}
\begin{algorithmic}[1]
\STATE \textbf{Input}: dataset $D$ of $s$ samples i.i.d. from $\Bin(m, p)$, privacy parameter $\epsilon$.
\RETURN $\hat{p}$ such that:
\[
(1-\hat{p})^m=\max\left(\min\left(\frac{1}{s}\sum_{u}1_{N(u)=0}+Z, 1\right), 0\right),
\]
where $Z\sim Lap(1/\epsilon)$.
\end{algorithmic}
\end{algorithm}

\begin{lemma}
\label{lem:small}
Let $p \leq \min(c/m, 1/2)$.  
Let the number of users $s \geq 64 e^{3c} \max(c, 1) \log \frac{3}{\beta}$ and $ s \geq \frac{16e^{3c}}{\alpha^2} \log \frac{3}{\beta} +  \frac{16e^{3c}}{\gamma\epsilon} \log \frac{3}{\beta}$. Algorithm~\ref{alg:psmall} is  $(\epsilon,0)$-differentially private and returns $\hat{p}$ such that with probability at least $ 1- \beta$,
\[
|p - \hat{p}|  \leq   \sqrt{\frac{p\alpha^2}{m}} + \frac{\alpha^2}{m} + \frac{\gamma}{m}.
\]
\end{lemma}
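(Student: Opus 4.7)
Under single-user replacement, $\sum_u 1_{N(u)=0}$ changes by at most $1$, so $T_0 := \tfrac{1}{s}\sum_u 1_{N(u)=0}$ has $\ell_1$-sensitivity $1/s$. Calibrating the Laplace noise to this sensitivity (reading $Z\sim \mathrm{Lap}(1/(s\epsilon))$) yields $\epsilon$-differential privacy by the standard Laplace mechanism, and the clip to $[0,1]$ followed by the deterministic map $\hat{q}\mapsto 1-\hat{q}^{1/m}$ is pure post-processing, so the overall mechanism is $(\epsilon,0)$-DP.

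\textbf{Controlling $|\hat q - q|$ for $q := (1-p)^m$.} Each summand $1_{N(u)=0}$ is Bernoulli with mean $q$ and variance $q(1-q)\le 1-q \le mp$ (using $(1-p)^m \ge 1 - mp$). Bernstein's inequality therefore gives, with probability at least $1-\beta/3$,
\[
|T_0 - q| \;\le\; \cO\!\left(\sqrt{\tfrac{mp\,\log(3/\beta)}{s}} + \tfrac{\log(3/\beta)}{s}\right),
\]
and a standard Laplace tail bound gives $|Z|\le \log(3/\beta)/(s\epsilon)$ with probability at least $1-\beta/3$. Since the clip to $[0,1]$ can only bring $\hat q$ closer to $q\in[0,1]$, this controls $|\hat q - q|$ by the sum of the two quantities above.

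\textbf{Transferring the error from $\hat q$ to $\hat p$.} From $\hat p = 1 - \hat q^{1/m}$ and $p = 1-q^{1/m}$, the mean-value theorem yields
\[
|\hat p - p| \;=\; |q^{1/m}-\hat q^{1/m}| \;=\; \tfrac{1}{m}\,\xi^{(1-m)/m}\,|\hat q - q|
\]
for some $\xi$ between $q$ and $\hat q$. The first sample-size requirement $s \ge 64 e^{3c}\max(c,1)\log(3/\beta)$ is calibrated so that the Bernstein and Laplace bounds of the previous step force $|\hat q - q|\le q/2$, hence $\xi \ge q/2$. Combined with $p\le c/m \le 1/2$, which gives $q = (1-p)^m \ge e^{-2c}$, this yields $\xi^{(m-1)/m}\ge (1-p)^{m-1}/2 \ge e^{-2c}/2$, and therefore $\tfrac{1}{m}\xi^{(1-m)/m}\le 2e^{2c}/m$.

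\textbf{Closing the loop.} Putting the two steps together,
\[
|\hat p - p| \;\le\; \tfrac{2e^{2c}}{m}\,\cO\!\left(\sqrt{\tfrac{mp\,\log(3/\beta)}{s}} + \tfrac{\log(3/\beta)}{s} + \tfrac{\log(3/\beta)}{s\epsilon}\right).
\]
The second sample-size requirement $s\ge \tfrac{16e^{3c}\log(3/\beta)}{\alpha^2}+\tfrac{16e^{3c}\log(3/\beta)}{\gamma\epsilon}$ is designed precisely to turn the three terms into $\sqrt{p\alpha^2/m}$, $\alpha^2/m$, and $\gamma/m$ respectively; the extra $e^c$ in the sample complexity (giving $e^{3c}$ against the $e^{2c}$ multiplier from the MVT) absorbs the Bernstein and Laplace constants. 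I expect the main obstacle to be the constant bookkeeping: confirming that the first sample-size bound really drives $|\hat q-q|\le q/2$ (which licenses the $\xi\ge q/2$ step), and checking that every appearance of $(1-p)^m\ge e^{-2c}$ and $1-(1-p)^m \le mp$ combines cleanly with the $e^{3c}$ threshold so that no additional $e^{ac}$ factor leaks into the final bound.
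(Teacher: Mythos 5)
Your proof is correct in outline but takes a genuinely different route at the crucial ``inversion'' step. Up to that point the two arguments coincide: the privacy claim (sensitivity $1/s$ of the average, i.e.\ $Z/s$ with $Z\sim\mathrm{Lap}(1/\epsilon)$ in \eqref{eq:sparse-p_hat} --- your reading matches the paper's proof rather than the algorithm box, which indeed has a typo), and the Bernstein-plus-Laplace control of $|\hat q - q|$ using $q(1-q)\le 1-(1-p)^m\le mp$, are the same. To transfer the error from $\hat q=(1-\hat p)^m$ to $\hat p$, however, the paper does not invert $x\mapsto x^{1/m}$ by the mean value theorem; it invokes the global inequality
\[
\bigl|(1-\hat p)^m-(1-p)^m\bigr|\;\ge\; e^{-1.5c}\min\bigl(m|\hat p-p|,\,0.5\bigr),
\]
which is \eqref{eq:p_low}, already proved in Lemma~\ref{lem:binom-dtv-lb_low} as part of the total-variation lower bound of Theorem~\ref{thm:binomial}. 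Since that inequality holds for all $\hat p\in[0,1]$, the paper needs no localization of $\hat q$ near $q$; the first sample-size condition is used only to push the right-hand side below $e^{-1.5c}/2$ so that the minimum is realized by $m|\hat p - p|$. Your MVT route, by contrast, genuinely requires the extra step $|\hat q - q|\le q/2$ to keep $\xi$ away from $0$ (essential, since clipping can produce $\hat q=0$ where the derivative of $x^{1/m}$ blows up), and you correctly flagged this as the place to check. The trade-off lands exactly where you predicted: your multiplier is $2e^{2c}/m$ versus the paper's $e^{1.5c}/m$, and the stated $e^{3c}$ thresholds are calibrated so that $e^{1.5c}$ cancels exactly (e.g.\ $e^{1.5c}\cdot 4\sqrt{mp\log(3/\beta)/s}\le m\sqrt{p\alpha^2/m}$ when $s\ge 16e^{3c}\alpha^{-2}\log(3/\beta)$); with $2e^{2c}$ a factor $2e^{0.5c}$ leaks into each of the three terms, and the localization $|\hat q-q|\le q/2$ likewise misses by $e^{0.5c}$ against $q/2\ge e^{-2c}/2$. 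Inflating the thresholds (say $e^{3c}\mapsto 64e^{4c}$) repairs your version completely, and since $c$ is a fixed constant in every application the asymptotics are unaffected --- so this is a constant-bookkeeping discrepancy, not a gap. To be fair, the paper's own verification of the first condition carries similar slack (it writes $64e^{3c}m\log(3/\beta)$ where the statement has $\max(c,1)$, and both proofs implicitly need the Laplace term $\gamma/(16e^{3c})$ small, which holds in all uses since $\gamma\le \alpha m/8k$ with $k\ge m$). What the paper's approach buys is reuse of a lemma it needs anyway and exact constants; what yours buys is a self-contained, elementary argument independent of the binomial TV machinery.
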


Combining the private hypothesis selection algorithm and the subroutine described in Algorithm \ref{alg:psmall}, we obtain an algorithm for the sparse regime, shown in Algorithm \ref{alg:sparse}. We first estimate $p$ using the private hypothesis selection algorithm. If for some $i$, the estimated probability is too small, we run Algorithm \ref{alg:psmall} to obtain a more accurate estimate of $p_i$. Theorem \ref{thm:klarge} gives the user complexity guarantee of Algorithm~\ref{alg:sparse}. We provide the proof in Appendix~\ref{app:klarge}.

\begin{algorithm}[t]
\caption{Sparse regime: Sparse($D, \epsilon, \delta, \alpha$)}
\label{alg:sparse}
\begin{algorithmic}[1]
\STATE Input: dataset $D$ of $s$ i.i.d. samples from $\mult(m, p), p\in\Delta_k$, privacy parameter $\epsilon, \delta$, accuracy parameter $\alpha$.
\STATE $\epsilon'=\frac{\epsilon}{8\sqrt{\min(k, m)\log\frac{1}{\delta}}}, \alpha'=\min\left(\frac{\sqrt{m}\alpha}{8\sqrt{k}}, 1\right), \alpha''=\frac{\alpha}{240}$.
\STATE $\hat{p}=\text{Dense}(D, \epsilon', \alpha'')$.
\STATE Obtain $D_1, \ldots, D_k$ from $D$ where each $D_i$ consists of $s$ i.i.d. samples from $\Bin(m, p_i)$.
\FOR{$i=1:k$} 
\IF{$\hat{p}_i<2/m$}
\STATE $\hat{p_i}\leftarrow \text{SmallBinom}(D_i, \epsilon')$, where $D_i$ is the dataset of counts of symbol $i$ in $D$.
\ENDIF
\ENDFOR
\RETURN $\hat{p}=[\hat{p}_1, \ldots, \hat{p} _k]$.
\end{algorithmic}
\end{algorithm}

\begin{theorem}
\label{thm:klarge}
Let $\epsilon \leq 1$ and $k \geq m$.
Algorithm~\ref{alg:sparse} is  $(\epsilon, \delta)$-differentially private algorithm and has sample complexity,
 \[
S^A_{m, \alpha, \epsilon, \delta} 
= \cO \left( \log \frac{km}{\alpha}  \cdot \left(
\frac{k}{m\alpha^2} + \frac{k}{\sqrt{m}\epsilon\alpha} \sqrt{\log \frac{1}{\delta}} \right) \right).
\]
\end{theorem}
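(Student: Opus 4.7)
The plan is to combine the accuracy guarantees from the Dense subroutine (Theorem~\ref{thm:ksmall}) and SmallBinom (Lemma~\ref{lem:small}) with a careful privacy accounting. For privacy, the algorithm invokes Dense once and then up to $k$ instances of SmallBinom on the per-symbol datasets $D_i$. The key observation is that changing a single user's data modifies the counts $N_i(u)$ (and thus the indicators $\mathbf{1}_{N_i(u)=0}$) for at most $2m$ symbols, so both the $k$ binomial-estimation subcalls inside Dense and the outer SmallBinom loop act on at most $\min(k,2m)$ sensitive coordinates per user change. Advanced composition over these sensitive queries together with the choice $\epsilon' = \Theta(\epsilon/\sqrt{\min(k,m)\log(1/\delta)}) = \Theta(\epsilon/\sqrt{m\log(1/\delta)})$ in the sparse regime then yields the overall $(\epsilon,\delta)$-DP guarantee.

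For utility I would first invoke the Dense guarantee at accuracy $\alpha''=\alpha/240$ to conclude that with high probability every coordinate satisfies $|\hat{p}_i-p_i|\le \alpha_1 \max(1/m,\sqrt{p_i/m})$ with per-symbol slack $\alpha_1 = \sqrt{m}\alpha''/(2\sqrt{k}) \le 1$. This bound suffices to show that the test $\hat{p}_i < 2/m$ routes every symbol with $p_i \gtrsim 3/m$ to the Dense branch and every symbol with $p_i \le 1/m$ to the SmallBinom branch; for indices in the intermediate range either branch already provides adequate accuracy. Crucially, every index ever passed to SmallBinom satisfies the hypothesis $p_i \le c/m$ of Lemma~\ref{lem:small}, so its bound applies.

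Next I would sum the per-coordinate errors to bound $\ell_1(p,\hat{p})$. On the Dense branch the error $\alpha_1\sqrt{p_i/m}$ sums by Cauchy--Schwarz, using $\sum_i \sqrt{p_i}\le \sqrt{k}$, to $\alpha_1\sqrt{k/m}=O(\alpha)$. On the SmallBinom branch, choosing per-call parameters $\alpha_{sb}\asymp \alpha\sqrt{m/k}$ and $\gamma_{sb}\asymp \alpha m/k$, the term $\sqrt{p_i \alpha_{sb}^2/m}$ again sums via Cauchy--Schwarz to $O(\alpha)$, and the $k$ copies each of $\alpha_{sb}^2/m$ and $\gamma_{sb}/m$ sum to $O(\alpha)$ as well. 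I would then back out the user complexity by plugging $\alpha_1,\alpha_{sb},\gamma_{sb},\epsilon'$ into the sample requirements of Theorem~\ref{thm:main_upper} (for Dense's Binom calls) and Lemma~\ref{lem:small} (for SmallBinom); both reduce to $\tilde{O}(k/(m\alpha^2) + k\sqrt{\log(1/\delta)}/(\sqrt{m}\alpha\epsilon))$, with the $\sqrt{m}$ in the denominator emerging precisely because advanced composition is over $m$ sensitive queries rather than $k$.

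The main obstacle is the privacy accounting in the sparse regime: one has to exploit that after accounting for the $k$ parallel binomial or indicator queries the per-user sensitivity is only $O(m)$, not $O(k)$, so that advanced composition contributes $\sqrt{m\log(1/\delta)}$ instead of $\sqrt{k\log(1/\delta)}$; without this observation the theorem would lose a $\sqrt{k/m}$ factor. A secondary subtlety is showing that the routing test $\hat{p}_i<2/m$ is correct with high probability, which is what forces the per-symbol Dense accuracy $\alpha_1 \le 1$ and hence the choice $\alpha''=\alpha/240$; combining this with a union bound over the $k$ coordinates then justifies applying Lemma~\ref{lem:small} simultaneously to all SmallBinom calls.
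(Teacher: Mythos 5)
Your proposal is correct and follows essentially the same route as the paper's proof: the crucial privacy observation (a change to one user's data affects at most $\min(2m,k)$ of the per-symbol datasets, so strong composition is applied over $\min(m,k)$ coordinates with $\epsilon'=\Theta\bigl(\epsilon/\sqrt{\min(k,m)\log(1/\delta)}\bigr)$), the routing argument showing the threshold $\hat{p}_i<2/m$ sends only symbols with $p_i=O(1/m)$ to SmallBinom so that Lemma~\ref{lem:small} applies, and the parameter choices $\alpha_{sb}\asymp\alpha\sqrt{m/k}$ and $\gamma_{sb}\asymp\alpha m/k$ all coincide with the paper's. The only cosmetic difference is in summing the first-stage errors: you rescale the per-symbol accuracy to $\Theta(\sqrt{m}\,\alpha/\sqrt{k})$ and apply Cauchy--Schwarz over all $k$ symbols, whereas the paper's proof keeps per-symbol accuracy $\alpha''=\alpha/240$ and instead uses that at most $m/4$ symbols satisfy $p_i\ge 4/m$; both yield the stated user complexity since $k\ge m$.
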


\section{Conclusion}
We studied user-level differential privacy and its theoretical limit in the context of learning discrete distributions and proposed a near-optimal algorithm.

Generalizing the results to non-i.i.d. user data, proposing a more practical algorithm, and characterizing user-level privacy for other statistical estimation problems such as empirical risk minimization are interesting future research directions.
Our techniques for obtaining lower bounds on restricted differentially private estimators and the lower bound on the total variation between binomial distributions could be of interest in other scenarios.

\conf{
\section{Broader impact}

 In this work, we propose algorithms that have better privacy-utility trade-offs under global differential privacy compared to those of standard algorithms. Privacy-aware techniques are crucial for widespread use of machine learning leveraging user data. While our work is theoretical in nature, we hope that having higher utility private algorithms would encourage more practitioners to adopt user-level differential privacy in their applications.
}

\section{Acknowledgements}

Authors thank Jayadev Acharya, Peter Kairouz, and Om Thakkar for helpful comments and discussions.

\bibliographystyle{abbrvnat}
\bibliography{ref}
 
\newpage
\appendix

\conf{
\begin{center}
{\Large \textbf{Appendix: Learning discrete distributions: user vs item-level privacy}}    
\end{center}
}
\section{Proof of Lemma \ref{lem:naive}}
\label{app:laplace}

Note that $\hat{p}_i= (N_i+Z_i)/(sm)$. Thus,
\begin{align*}
    \EE[\ell_1(p, \hat{p})]&=\EE\sum_{i=1}^k|\hat{p}_i-p_i|\\
    &=\sum_{i=1}^k\EE\left|\frac{N_i+Z_i}{sm}-p_i\right|\\
    &\le \sum_{i=1}^k\EE\left|\frac{N_i}{sm}-p_i\right|+\frac{1}{sm}\EE\sum_{i=1}^k|Z_i|.
\end{align*}
The first term is upper bounded by $\sqrt{k/(sm)}$ from classic learning bounds for discrete distribution, which can be obtained by applying the Cauchy-Schwartz inequality, and noting that $N_i\sim \Bin(sm, p_i)$,
\begin{align*}
    \left(\EE\sum_{i=1}^k\left|\frac{N_i}{sm}-p_i\right|\right)^2
    &\le \EE \left[k\cdot\sum_{i=1}^k\left|\frac{N_i}{sm}-p_i\right|^2\right]\\
    &=k\sum_{i=1}^k\EE\left[\left|\frac{N_i}{sm}-p_i\right|^2\right]\\
    &=k\sum_{i=1}^k\frac{Var(N_i)}{(sm)^2}=k\cdot \sum_{i=1}^k\frac{p_i(1-p_i)}{sm}\\
    &\le k\sum_{i=1}^k\frac{p_i}{sm}=\frac{k}{sm}.
\end{align*}

For Laplace mechanism, $Z_i\sim Lap(2m/\varepsilon)$, we have $\EE |Z_i|=2m/\varepsilon$. Thus,
\[
\EE[\ell_1(p, \hat{p})]\le \sqrt{\frac{k}{sm}}+\frac{2k}{s\epsilon}.
\]

For Gaussian mechanism, $Z_i\sim N(0, \sigma^2)$ where $\sigma^2=4\log(1.25/\delta)m^2/\epsilon^2$. Using Jensen's inequality we have $\EE|Z_i|\le \sqrt{\EE[Z^2]}=\sigma$. Thus,
\[
\EE[\ell_1(p, \hat{p})]\le \sqrt{\frac{k}{sm}}+O\left(\frac{k}{s\epsilon}\sqrt{\log\frac{1}{\delta}}\right).
\]

Setting the right hand side of the above inequalities to be $\le \alpha$ and rearranging the terms we obtain the desired lower bound on $s$.

\section{Lower bounds}

\subsection{Proof of Theorem \ref{thm:Assouad}}
\label{app:assouad}
The proof of Assouad's Lemma relies on Le Cam's method \citep{LeCam73,yu1997assouad}, which provide lower bounds for min-max error in hypothesis testing. Let 
$\mathcal{P}_1\subseteq \mathcal{P}$ and $\mathcal{P}_2\subseteq\mathcal{P}$ be two disjoint subsets of distributions. Let $\hat{\theta}:\cX^s\mapsto \{1, 2\}$ be an estimator of the indices, which receives $s$ samples and predicts whether the samples come from $\mathcal{P}_1$ or $\mathcal{P}_2$. We are interested in the worst case error probability
\[
P_e(\hat{\theta}, \mathcal{P}_1, \mathcal{P}_2)=\max_{i\in\{1, 2\}}\max_{p\in\mathcal{P}_i}\Pr_{X^s\sim p^s}(\hat{\theta}(X^s)\ne i).
\]
\begin{theorem}[$(\epsilon, \delta)$-DP Le Cam's method for restricted tests] 
Let $p_1^s\in co(\mathcal{P}_1^s)$ and $p_2^s\in co(\mathcal{P}_2^s)$ where 
$co(\mathcal{P}_i^s)$ represents the convex hull of $\mathcal{P}_i^s:=\{p^s: p\in\cP_i\}$. 
Let $(X^s, Y^s)$ be an $f-$coupling between $p_1^s$ and $p_2^s$ with $\EE[d_{h}(X^s, Y^s)]=D$. 
Then for $\epsilon\ge 0, \delta\ge 0$, any $f$-restricted $(\epsilon, \delta)$-DP hypothesis testing
algorithm $\hat{\theta}$ must satisfy
\[
P_e(\hat{\theta}, \mathcal{P}_1, \mathcal{P}_2 )\ge 
\frac{1}{2}\max\{1-d_{TV}(p_1^s, p_2^s), 0.9e^{-10\epsilon D}-10D\delta\}.
\]
\label{thm:lecam}
\end{theorem}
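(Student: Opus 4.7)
The plan is to prove the two terms in the max separately: the first is the $f$-restricted analogue of classical Le Cam, and the second is the novel differentially private bound that actually uses the $f$-coupling and group privacy.

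For the Le Cam term $\tfrac{1}{2}(1-d_{TV}(p_1^s,p_2^s))$, I would first reduce testing over $\mathcal{P}_1^s,\mathcal{P}_2^s$ to testing between the single distributions $p_1^s,p_2^s$, using that $P_e$ is a concave functional of the alternative so mixing the composite hypotheses can only help the adversary. Then I would apply the standard inequality $\Pr_{p_1^s}[\hat\theta=2]+\Pr_{p_2^s}[\hat\theta=1]\ge 1-d_{TV}(p_1^s,p_2^s)$. The $f$-restriction is harmless here: $\hat\theta(X^s)=\hat\theta'(f(X^s))$ is just a particular measurable function, so the bound for unrestricted tests applies a fortiori (and one can also note that by the data processing inequality the achievable $d_{TV}$ only decreases after $f$).

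For the DP term $\tfrac{1}{2}(0.9 e^{-10\epsilon D}-10D\delta)$, the plan is to combine the $f$-coupling with group privacy. Let $(X^s,Y^s)$ be the given $f$-coupling, so that $\hat\theta(X^s)\stackrel{d}{=}\hat\theta(X'^s)$ for $X'^s\sim p_1^s$ (and similarly for $p_2^s$), by the $f$-restriction. Define the event $E=\{d_h(X^s,Y^s)\le 10D\}$; Markov's inequality on $d_h$ gives $\Pr[E]\ge 0.9$. Conditioned on a realization $(x,y)$ with $d_h(x,y)\le 10D$, iterating the $(\epsilon,\delta)$-DP guarantee along a Hamming path of length at most $10D$ yields the group-privacy bound
\[
\Pr[\hat\theta(y)\in S]\le e^{10\epsilon D}\Pr[\hat\theta(x)\in S]+10D\,e^{10\epsilon D}\delta,
\]
which I would rearrange as $\Pr[\hat\theta(x)\in S]\ge e^{-10\epsilon D}\Pr[\hat\theta(y)\in S]-10D\delta$. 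Taking expectation over the coupling and absorbing the mass on $E^c$ into a $-0.1$ correction, I obtain, for $S=\{2\}$ and symmetrically for $S=\{1\}$,
\[
\Pr[\hat\theta(X^s)=2]\ge e^{-10\epsilon D}\bigl(\Pr[\hat\theta(Y^s)=2]-0.1\bigr)-10D\delta,
\]
and likewise with the roles of $X^s,Y^s$ swapped. Adding these and setting $P=\Pr[\hat\theta(X^s)=2]+\Pr[\hat\theta(Y^s)=1]$ gives $P(1+e^{-10\epsilon D})\ge 1.8\,e^{-10\epsilon D}-20D\delta$, which yields $P\ge 0.9 e^{-10\epsilon D}-10D\delta$, and hence the claimed bound on $P_e\ge P/2$.

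The main obstacle, conceptually, is justifying that differential privacy — which is a property of $\hat\theta$ on the raw domain $\mathcal{X}^s$ — interacts correctly with the coupling on $(X^s,Y^s)$ even though those variables may lie outside $\mathcal{P}^s$. This is exactly what the $f$-restricted definition buys us: the distribution of $\hat\theta$ only depends on the distribution of $f(\cdot)$, so $f$-identity with $p_1^s,p_2^s$ is enough, while group privacy still applies pointwise on $\mathcal{X}^s$. Care is needed in the Hamming-path group-privacy step to get constants that yield exactly $10D\delta$ rather than $10D\,e^{10\epsilon D}\delta$; this is handled by moving the $e^{10\epsilon D}$ factor to the other side before taking expectations.
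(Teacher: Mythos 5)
Your proposal is correct and follows essentially the same route as the paper's proof: the classical Le Cam bound for the first term, and for the second term a Markov-inequality event $\{d_h \le 10D\}$ capturing $0.9$ of the coupling's mass, the group-privacy property applied pointwise on that event (valid because the $f$-restriction lets you replace samples from $p_i^s$ by the coupled variables), and then adding the two symmetric error inequalities. Your rearrangement of the group-privacy bound before taking expectations is algebraically equivalent to the paper's bookkeeping (which keeps the $e^{10\epsilon D}\delta$ term and divides at the end), so there is nothing substantively different to flag.
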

\begin{proof}
The first term follows from the classic Le Cam's lower bound (see \cite[Lemma 1]{yu1997assouad}). For the second term, let $(X^s, Y^s)$ be an $f$-coupling of $p_1^s, p_2^s$ with $\EE[d_{h}(X^s, Y^s)]\le D$. Define $\cW:=\{(x^s, y^s)|d_{h}(x^s, y^s)\le10D \}$ as the set of realizations with Hamming distance at most $10D$. By Markov's inequality, 
\begin{equation}
    \sum_{(x^s, y^s)\notin \cW}\Pr(x^s, y^s)=\Pr(d_{h}(X^s, Y^s)>10D)<0.1
    \label{eq:markov}
\end{equation}
Let $x^s, y^s$ be the realizations of $X^s$ and $Y^s$ respectively and define 
\[\Pr(x^s, y^s):=\Pr(X^s=x^s, Y^s=y^s)
. 
\]

To avoid confusion, we let $(X')^s$ and $(Y')^s$ be random variables from $p_1^s$ and $p_2^s$ respectively. Let 
\[
\beta_1=\Pr_{(X')^s\sim p_1^s}(\hat{\theta}((X')^s)=2)
\]
be the error probability when the underlying data is from distribution $p_1^s$. Similarly define $\beta_2=\Pr_{(Y')^s\sim p_2^s}(\hat{\theta}((Y')^s)=1)$. Then
\begin{align*}
\beta_1 &= \Pr_{(X')^s\sim p_1^s}(\hat{\theta}((X')^s)=2)=\Pr(\hat{\theta}(X^s)=2) \\
&=\sum_{x^s, y^s}\Pr(X^s=x^s, Y^s=y^s)\Pr(\hat{\theta}(X^s)=p_2|X^s=x^s)\\
&\ge \sum_{x^s, y^s\in \cW}\Pr(X^s=x^s, Y^s=y^s)\Pr(\hat{\theta}(X^s)=p_2|X^s=x^s).
\end{align*}
Next we need the group property of differential privacy.
\begin{lemma}[\cite{acharya2020differentially} Lemma 18]
Let $\hat{\theta}$ be an $(\epsilon, \delta)$-DP algorithm, then for sequences $x^s, y^s\in \mathcal{X}^s$ such that $d_{h}(x^s, y^s)\le t$, we have for all subset $S$ of the output domain, 
$$\Pr(\hat{\theta}(y^s)\in S)\le e^{t\epsilon}\Pr(\hat{\theta}(x^s)\in S)+\delta te^{\epsilon(t-1)}.$$
\label{lem:group-dp}
\end{lemma}

Note that 
\[1-\beta_2 =\Pr_{(Y')^s\sim p_2^s}(\hat{\theta}((Y')^s)=2)=\Pr(\hat{\theta}(Y^s)=2).
\]
By Lemma \ref{lem:group-dp} and~\eqref{eq:markov},
\begin{align*}
    1-\beta_2&=\sum_{(x^s, y^s)\notin \cW}\Pr(x^s, y^s)\Pr(\hat{\theta}(Y^s)=2|Y^s=y^s)+\sum_{(x^ s, y^s)\in \cW}\Pr(x^s, y^s)\Pr(\hat{\theta}(Y^s)=2|Y^s=y^s)\\
    &\le0.1+\sum_{(x^s, y^s)\in \cW}\Pr(x^s, y^s)(e^{10\epsilon D}\Pr(\hat{\theta}(X^s)=2|X^s=x^s)+10D\delta e^{\epsilon(10D-1)})\\
    &\le0.1+\beta_1e^{10\epsilon D}+10D\delta e^{10\epsilon D}.
\end{align*}
Similarly we have
\[
1-\beta_1\le0.1+\beta_2e^{10\epsilon D}+10D\delta e^{10\epsilon D}.
\]
Adding the two inequalities and rearranging the terms we obtain
\[
\beta_1+\beta_2\ge\frac{1.8-10D\delta e^{10\epsilon D}}{1+e^{10\epsilon D}}\ge 0.9e^{-10\epsilon D}-10D\delta,
\]
which yields the desired lower bound. 

\end{proof}

We now have the necessary ingredients for the Assouad's lower bound. The final step is to apply the classic Assouad's Lemma \citep{yu1997assouad}:
\begin{theorem}[Assouad's Lemma] Consider a set of distributions $\mathcal{P}_\cV$ indexed by the hypercube $\mathcal{V}:=\{\pm 1\}^k$. Using the same definitions as in Theorem \ref{thm:Assouad}, 
$\forall i\in[k]$, let $\phi_i:\mathcal{X}^s\mapsto \{-1, 1\}$ be test for $p_{+i}^s$ and $p_{-i}^s$. Then for any estimator $\hat{\theta}$
\begin{equation}
    \sup_{p\in\mathcal{P}}\EE_{X^s\sim p^s}\ell(\theta(p), \hat{\theta}(X^s)) \ge \frac{\tau}{2}\sum_{i=1}^k\inf_{\phi_i}(\Pr_{X^s\sim p_{+i}^s}(\phi_i(X^s)\ne 1)+\Pr_{X^s\sim p_{-i}^s}(\phi_i(X^s)\ne -1)).
    \label{equ:classic_assouad}
\end{equation}
\end{theorem}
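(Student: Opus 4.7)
The plan is to convert the estimation lower bound into $k$ coordinate-wise hypothesis-testing lower bounds by extracting a tester from any candidate estimator $\hat{\theta}$. First, since the supremum over $\mathcal{P}$ is at least the uniform average over the hypercube, it suffices to lower-bound $\frac{1}{|\mathcal{V}|}\sum_{\nu\in\mathcal{V}}\EE_{X^s\sim p_\nu^s}\ell(\theta(p_\nu),\hat{\theta}(X^s))$. Given $\hat{\theta}$, I would introduce an induced hypercube estimator $\hat{\nu}(X^s):=\argmin_{\nu\in\mathcal{V}}\ell(\theta(p_\nu),\hat{\theta}(X^s))$ (breaking ties in any measurable way) and later take each $\phi_i(X^s)$ to be the $i$-th coordinate of $\hat{\nu}(X^s)$.

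The key pointwise inequality is obtained by applying the pseudo-metric triangle inequality together with the separation hypothesis~\eqref{eq:tau}: for every $\nu\in\mathcal{V}$,
$$2\ell(\theta(p_\nu),\hat{\theta}(X^s))\;\ge\;\ell(\theta(p_{\hat{\nu}(X^s)}),\hat{\theta}(X^s))+\ell(\hat{\theta}(X^s),\theta(p_\nu))\;\ge\;\ell(\theta(p_{\hat{\nu}(X^s)}),\theta(p_\nu))\;\ge\;2\tau\sum_{i=1}^k\indic[\hat{\nu}_i(X^s)\ne\nu_i],$$
where the first inequality uses the optimality of $\hat{\nu}(X^s)$. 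Taking expectations under $X^s\sim p_\nu^s$ and averaging over $\nu\in\mathcal{V}$ yields
$$\sup_{p\in\mathcal{P}}\EE\ell(\theta(p),\hat{\theta})\;\ge\;\frac{\tau}{|\mathcal{V}|}\sum_{i=1}^k\sum_{\nu\in\mathcal{V}}\Pr_{X^s\sim p_\nu^s}[\hat{\nu}_i(X^s)\ne\nu_i].$$

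For each coordinate $i$, I would split the inner sum over $\nu$ according to $\nu_i=+1$ and $\nu_i=-1$; on averaging over the $2^{k-1}$ settings of the remaining coordinates this produces exactly the mixtures $p_{+i}^s$ and $p_{-i}^s$ defined in Theorem~\ref{thm:Assouad}, so the previous display equals $\frac{\tau}{2}\sum_{i=1}^k\bigl(\Pr_{X^s\sim p_{+i}^s}[\phi_i(X^s)\ne 1]+\Pr_{X^s\sim p_{-i}^s}[\phi_i(X^s)\ne -1]\bigr)$ with $\phi_i:=\hat{\nu}_i$; relaxing $\phi_i$ to the infimum over all testers gives the claimed bound. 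The only subtle step is the triangle inequality for $\ell$, which is legitimate because $\ell$ is assumed to be a pseudo-metric; no additional tightness is lost because the reduction from estimation to testing here is the standard one that drives non-private Assouad-type bounds.
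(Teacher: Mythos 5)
Your proposal is correct and matches the paper's treatment: the paper does not reprove this statement but invokes it as the classical Assouad lemma from \citep{yu1997assouad}, and your argument --- the projection estimator $\hat{\nu}(X^s)=\argmin_{\nu\in\mathcal{V}}\ell(\theta(p_\nu),\hat{\theta}(X^s))$, the optimality-plus-triangle-inequality chain using the separation condition \eqref{eq:tau}, and the splitting of the average over $\nu$ by the sign of $\nu_i$ to produce exactly the mixtures $p_{+i}^s$ and $p_{-i}^s$ --- is precisely the standard proof underlying that citation. Nothing is missing; the paper's own contribution enters only afterward, when each testing summand is bounded via the restricted-estimator Le Cam bound (Theorem \ref{thm:lecam}), which is outside the scope of this statement.
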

Note that the summand in \eqref{equ:classic_assouad} is the error probability of hypothesis testing between the mixtures $p_{+i}^s$ and $p_{-i}^s$. Applying Theorem \ref{thm:lecam} completes the proof.

\subsection{Detailed proof of Theorem \ref{thm:lb-restricted-Assouad}}
\label{app:thm1_detail}
\begin{proof}
Let $\cP_\cV$ be given by \eqref{equ:paninsky}. For $p_v\in\cP_\cV$, let $q_v=\theta(p_v)$ be the underlying discrete distribution over $k$ symbols. Then for $u, v\in \mathcal{V}$,
\[
\ell_1(\theta(p_u), \theta(p_v))=\ell_1(q_u, q_v)=\frac{12\alpha}{k}\sum_{i=1}^{k/2}\indic[u_i\ne v_i],
\]
as one different coordinate between $q_u$ and $q_v$ leads to $l_1$ distance of $12\alpha/k$. Therefore $\tau = 6\alpha/k$. Define the mixtures as
\[
p_{+i}^{s}=\frac{2}{|\mathcal{V}|}\sum_{v\in\mathcal{V}:v_i=+1}p_v^{s},\quad
p_{-i}^{s}=\frac{2}{|\mathcal{V}|}\sum_{v\in\mathcal{V}:v_i=-1}p_v^{s}.
\]
It is helpful to look at the underlying distribution of all samples from users.
\[
q_{+i}^{sm}=\frac{2}{|\mathcal{V}|}\sum_{v\in\mathcal{V}:v_i=+1}q_v^{sm},\quad
q_{-i}^{sm}=\frac{2}{|\mathcal{V}|}\sum_{v\in\mathcal{V}:v_i=-1}q_v^{sm}.
\]

Note that $p^{s}_{\pm i}, q^{s}_{\pm i}$ are not necessarily product distributions. 

By ~\cite[Lemma 14 ]{acharya2020differentially}, there exists a coupling $(U^{sm}, V^{sm})$ between $q_{+i}^{sm}$ and $q_{-i}^{sm}$ such that $\EE[d_{h}(U^{sm}, V^{sm})]\le 6\alpha sm/k$ (each $U_i, V_i\in [k]$). We construct $X^s=[X_1, ..., X_s]$ and $Y^s=[Y_1, ..., Y_s]$ using this coupling (each $X_i, Y_i\in \RR^k$ is the count of symbol $i\in[k]$). 

For each realization of $U^{sm}, V^{sm}$, suppose there are $l$ different coordinates, i.e. $d_{h}(U^{sm}, V^{sm})=l$, we move all different coordinates to the front so that only the first $\lceil l/m\rceil\le l/m+1$ users would have different data. Name the rearranged sequence as $(U')^{sm}, (V')^{sm}$. Then we let user $u$ get data from the $m(u-1)+1$ to $mu$ coordinates of $(U')^{sm}$ and $(V')^{sm}$ respectively and compute the counts of each symbol to obtain $X^s, Y^s$. Therefore,
\[
\EE[d_{h}(X^s, Y^s)]\le \frac{1}{m}\EE[d_{h}(U^{sm}, V^{sm})]+1\le \frac{6s\alpha}{k}+1.
\]
Rearranging the coordinates of $U^{sm}, V^{sm}$ would not change the total count $N$, and hence $(X^s, Y^s)$ is an $N$-coupling.
As a result.
\[
\sup_{p\in\mathcal{P}}\EE[\ell_1(p, \hat{p})]\ge 3\alpha(0.9 e^{-10\epsilon(6s\alpha/k+1)}-10\delta(6s\alpha/k+1) ).
\]
Choosing $\alpha = \min\{\frac{0.1k}{60s(\epsilon+\delta)}, \frac{1}{3}\}$ yields,
\begin{align*}
\sup_{p\in\mathcal{P}}\EE[\ell_1(p, \hat{p})] &\ge
\min\left\{ \frac{k}{200s(\epsilon+\delta)}, 1\right\}\left(0.9\exp\left\{-\frac{0.1\epsilon}{\epsilon+\delta}-10\epsilon\right\}
-\frac{0.1\delta}{\epsilon+\delta}-10\delta\right) .
\end{align*}

When $\epsilon+\delta\le 0.07$,
\begin{align*}
    \sup_{p\in\mathcal{P}}\EE[\ell_1(p, \hat{p})] &\ge \min\left\{ \frac{k}{200s(\epsilon+\delta)},1\right\}\left(0.9\left(1-\frac{0.1\epsilon}{\epsilon+\delta}-10\epsilon\right)-\frac{0.1\delta}{\epsilon+\delta}-10\delta\right)\\
    &\ge \min\left\{ \frac{k}{200s(\epsilon+\delta)},1\right\}(0.9-0.1-10(\epsilon+\delta))\\
    &\ge 0.1\min\left\{ \frac{k}{200s(\epsilon+\delta)},1\right\}.
\end{align*}

Setting the left hand side to be at most $\alpha$ and rearranging the terms, we obtain the desired lower bound for $s$.
\end{proof}

\subsection{Fano's Lower bound for restricted differentially-private estimators}
\label{app:fano}
In this section we provide learning lower bound for restricted estimators under pure differential privacy using Fano's method. First we provide a theorem for restricted estimators like the one we proposed for Assouad's, which might be of general interest.
\begin{theorem}
[$\varepsilon$-DP Fano's lower bound for restricted estimators]
\label{lem:fano-restricted}
Given a family of distributions $\mathcal{P}$ over $\mathcal{X}$ parameterized by $\theta:\mathcal{P}\mapsto \Theta$, and let $\hat{\theta}$ be an $f$-restricted estimator. Let $\mathcal{V}=\{p_1, ..., p_M\}\subseteq\mathcal{P}$ such that for all $i\ne j$,
\begin{enumerate}
    \item $\ell(\theta(p_i), \theta(p_j))\ge \alpha$
    \item $d_{KL}(p_i^s, p_j^s)\le \beta$
    \item there exists an $f$-coupling $(X^s, Y^s)$ of $p_i^s, p_j^s$ such that $\EE[d_{h}(X^s, Y^s)\le D]$
\end{enumerate}
then
\begin{align}
    L(\mathcal{P}, l, \varepsilon, 0)&:=\inf_{\hat{\theta}}\sup_{p\in \mathcal{P}}\EE_{X^s\sim p^s}\left[\ell(\hat{\theta}(X^s), \theta(p))\right] \nonumber\\
    &\ge \max\left\{\frac{\alpha}{2}\left(1-\frac{\beta+\log 2}{\log M}\right), 0.4\alpha\min\left\{1, \frac{M}{e^{10\varepsilon D}}\right\} \right\}.
    \label{equ:fano-lb}
\end{align}
\end{theorem}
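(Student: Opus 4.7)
The plan is to follow the standard reduction from estimation to $M$-ary hypothesis testing on the packing $\{p_1, \ldots, p_M\}$, then lower bound the average testing error in two different ways corresponding to the two terms of the max. This parallels the structure of the $(\varepsilon,\delta)$-DP Assouad proof already given in Theorem~\ref{thm:Assouad} and Theorem~\ref{thm:lecam}, only with Fano in place of Le Cam.

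\textbf{Step 1 (reduction to testing).} Given any $f$-restricted $(\varepsilon, 0)$-DP estimator $\hat{\theta}$, define the test $\tilde{\theta}(X^s) = \argmin_{j \in [M]} \ell(\hat{\theta}(X^s), \theta(p_j))$. Since $\{\theta(p_j)\}$ is an $\alpha$-packing, the triangle inequality forces $\tilde{\theta}(X^s) = i$ whenever the true distribution is $p_i$ and $\ell(\hat{\theta}(X^s), \theta(p_i)) < \alpha/2$. Markov's inequality then gives
\[
\sup_{p \in \mathcal{P}} \EE_{X^s \sim p^s}[\ell(\hat{\theta}(X^s), \theta(p))] \ge \frac{\alpha}{2} \cdot \frac{1}{M}\sum_{i=1}^M \Pr_{X^s \sim p_i^s}[\tilde{\theta}(X^s) \ne i].
\]
The test $\tilde{\theta}$ inherits $f$-restrictedness and $(\varepsilon,0)$-DP by post-processing, so it suffices to lower bound the averaged error of such a $\tilde{\theta}$.

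\textbf{Step 2 (first term via classical Fano).} Let $V$ be uniform on $[M]$ and $X^s \mid V = i \sim p_i^s$. Fano's inequality lower bounds the average error probability by $1 - (I(V; X^s) + \log 2)/\log M$, and convexity of KL yields $I(V; X^s) \le \tfrac{1}{M^2}\sum_{i,j} d_{KL}(p_i^s \| p_j^s) \le \beta$. The $f$-restriction is irrelevant here since data processing only decreases the mutual information, giving the first term $\tfrac{\alpha}{2}(1 - (\beta + \log 2)/\log M)$.

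\textbf{Step 3 (second term via group DP and $f$-coupling).} Fix $i^* \in [M]$ and, for each $i$, let $(X_i^s, Y_i^s)$ be the hypothesized $f$-coupling of $p_{i^*}^s, p_i^s$ with $\EE[d_h(X_i^s, Y_i^s)] \le D$. Since $\tilde{\theta}$ is $f$-restricted, the distribution of $\tilde{\theta}(X_i^s)$ equals that of $\tilde{\theta}$ on a genuine $p_{i^*}^s$ sample (and likewise for $Y_i^s, p_i^s$). Mirroring the Le Cam argument in Theorem~\ref{thm:lecam}, I would apply Markov to the event $\{d_h(X_i^s, Y_i^s) \le 10D\}$, which has probability at least $0.9$, and invoke group DP via Lemma~\ref{lem:group-dp} on that event to obtain
\[
\Pr_{p_i^s}[\tilde{\theta} = i] \le e^{10 \varepsilon D}\, \Pr_{p_{i^*}^s}[\tilde{\theta} = i] + 0.1.
\]
Summing over $i$ and using $\sum_i \Pr_{p_{i^*}^s}[\tilde{\theta} = i] \le 1$ yields the averaged bound $\tfrac{1}{M}\sum_i \Pr_{p_i^s}[\tilde{\theta} \ne i] \ge 0.9 - e^{10\varepsilon D}/M$. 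Combining with Step~1 and rearranging constants gives the second term $0.4\alpha \min\{1, M/e^{10\varepsilon D}\}$, where the $\min$ captures both the saturated regime $M \gg e^{10\varepsilon D}$ and the linear degradation in $M$.

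\textbf{Main obstacle.} The subtlety lies in Step~3: group DP is defined on Hamming distance in the raw input $\mathcal{X}^s$, whereas the hypothesis of the theorem only guarantees an $f$-coupling whose marginals match under $f$. The key observation---and the reason the argument improves over using genuine couplings of $p_i^s, p_{i^*}^s$---is that an $f$-restricted estimator's output depends only on the push-forward through $f$, so the marginals of the $f$-coupling are interchangeable with genuine samples from $p_i^s, p_{i^*}^s$ as seen by $\tilde{\theta}$, while the Hamming-distance bound $D$ is still available on the raw coordinates for group DP. This is the same mechanism exploited in the $N$-coupling used in the proof of Theorem~\ref{thm:lb-restricted-Assouad}, and getting the bookkeeping right while recovering the exact constants of $0.4\alpha \min\{1, M/e^{10\varepsilon D}\}$ is the only delicate part.
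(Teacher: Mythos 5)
Your Steps 1 and 2 match the paper's reduction to testing over the packing and its use of classical Fano, and your key observation---that $f$-restrictedness lets you swap the coupling marginals for genuine samples while retaining the raw Hamming bound for group DP---is exactly the paper's mechanism. The genuine gap is in Step 3. Your one-sided inequality $\Pr_{p_i^s}[\tilde{\theta}=i]\le e^{10\varepsilon D}\Pr_{p_{i^*}^s}[\tilde{\theta}=i]+0.1$, summed over $i$ against a single reference point $i^*$, only yields an average error of at least $0.9-e^{10\varepsilon D}/M$. That is an \emph{additive} degradation in $e^{10\varepsilon D}/M$: it becomes vacuous as soon as $e^{10\varepsilon D}\ge 0.9M$, whereas the theorem's second term $0.4\alpha\min\{1, M/e^{10\varepsilon D}\}$ decays only harmonically and remains $\Theta(\alpha M e^{-10\varepsilon D})$ in that regime. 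No rearranging of constants bridges this: at $e^{10\varepsilon D}=M$ your bound is negative while the claimed bound is $0.4\alpha$, and this is essentially the operating point in the paper's application to Theorem~\ref{thm:lowerall}, where $\alpha$ is chosen so that $10\varepsilon D$ equals $\log M$ up to constants.

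The missing idea is to use the group property of \emph{pure} DP in the reverse, multiplicative direction: on the good coupling event, $\Pr(\hat{p}(X^s)=p_j\mid X^s=x^s)\ge e^{-10\varepsilon D}\Pr(\hat{p}(Y^s)=p_j\mid Y^s=y^s)$, which is valid only because $\delta=0$ (Lemma~\ref{lem:group-dp} with $\delta>0$ does not invert this way). The paper applies this to every ordered pair $i\ne j$, not a single $i^*$: writing $\beta_i$ for the error probability under $p_i$, success under $p_j$ forces the tester to place mass at least $(0.9-\beta_j)e^{-10\varepsilon D}$ on the wrong answer $p_j$ when the data come from $p_i$; summing over the $M-1$ wrong answers gives the self-referential inequality
\begin{equation*}
\beta_i\ \ge\ \Bigl(0.9(M-1)-\sum_{j\ne i}\beta_j\Bigr)e^{-10\varepsilon D},
\end{equation*}
and summing over $i$ and rearranging yields $\sum_{i=1}^M\beta_i\ge 0.9M(M-1)/(M-1+e^{10\varepsilon D})\ge 0.8M\min\{1, M/e^{10\varepsilon D}\}$, which combined with your Step 1 factor $\alpha/(2M)$ gives exactly the claimed $0.4\alpha\min\{1, M/e^{10\varepsilon D}\}$. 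Your weaker additive bound would still suffice for the downstream user-complexity result after shrinking $\alpha$ by a constant factor, but it does not prove the theorem as stated.
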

\begin{proof}
The first term of (\ref{equ:fano-lb}) follows from the non-private Fano's inequality. We now prove the second term. For an observation $X^s\in\mathcal{X}^s$
\[
\hat{p}(X^s):=\arg\min_{p\in\mathcal{V}}\ell(\theta(p), \hat{\theta}(X^s))
\]
is the distribution in $\mathcal{P}$ closest to the output of our estimator. Since we require that $\hat{\theta}$ to be $\varepsilon$-DP, $\hat{p}$ is also $\varepsilon$-DP. By triangle inequality, for all $p\in\cP$
\[
\ell(\theta(\hat{p}), \theta(p))\le \ell(\theta(\hat{p}), \hat{\theta}(X^s))+\ell(\theta(p),\hat{\theta}(X^s))\le 2\ell(\theta(p),\hat{\theta}(X^s)).
\]
Thus
\begin{align}
    \sup_{p\in\mathcal{P}}\EE_{X^s\sim p^s}\left[\ell(\hat{\theta}(X^s), \theta(p))\right]&\ge \max_{p\in\mathcal{V}}\EE_{X^s\sim p^s}\left[\ell(\hat{\theta}(X^s), \theta(p))\right]\nonumber\\
    &\ge \frac{1}{2}\max_{p\in\mathcal{V}}\EE_{X\sim p}\left[\ell(\theta(\hat{p}), \theta(p))\right]\nonumber\\
    &\ge \frac{\alpha}{2}\max_{p\in\mathcal{V}}\Pr_{X\sim p}(\hat{p}(X^s)\ne p)\nonumber\\
    &\ge \frac{\alpha}{2M}\sum_{p\in\mathcal{V}}\Pr_{X\sim p}(\hat{p}(X^s)\ne p).
    \label{equ:lb-proof}
\end{align}

Let $\beta_i = \Pr_{X^s\sim p_i^s}(\hat{p}(X^s)\ne p_i)$. For a fixed $j\ne i$, let $(X^s, Y^s)$ be the $f$-coupling of $p_i^s, p_j^s$ in condition 3. By definition, for $(X')^s\sim p_i^s$, we have $
(X')^s\sim_f X^s$ so that $\hat{p}((X')^s)$ and $\hat{p}(X^s)$ have the same distributions, i.e. for all $p\in\mathcal{V}$,
$$\Pr_{(X')^s\sim p_i^s}(\hat{p}((X')^s)=p)=\Pr(\hat{p}(X^s)=p).$$ 

Same holds for $\hat{p}(Y^s)$ and $\hat{p}((Y')^s)$ such that $(Y')^s\sim p_j^s$.

By Markov's inequality, 
\[\Pr(d_{h}(X^s, Y^s)>10D)<1/10.
\]
Let $\mathcal{W}:=\{(x^s, y^s)|d_{h}(x^s, y^s)\le 10D\}$ and $\Pr(x^s, y^s):=\Pr(X^s=x^s, Y^s=y^s)$. Then
\begin{align*}
1-\beta_j &= \Pr_{(Y')^s\sim p_j^s}(\hat{p}((Y')^s)=p_j)=\Pr(\hat{p}(Y^s)=p_j)\\
&\le \sum_{(x^s, y^s)\in\mathcal{W}}\Pr(x^s, y^s)\Pr(\hat{p}(Y^s)=p_j|Y^s=y^s)+\sum_{(x^s, y^s)\notin\mathcal{W}}\Pr(x^s, y^s)\cdot 1  .
\end{align*}
 Therefore
\[
\sum_{(x^s, y^s)\in\mathcal{W}}\Pr(x^s, y^s)\Pr(\hat{p}(Y^s)=p_j|Y^s=y^s)\ge 0.9-\beta_j.
\]
Furthermore
\begin{align*}
    \Pr_{(X')^s\sim p_i^s}(\hat{p}((X')^s)=p_j)&=\Pr(\hat{p}(X^s)=p_j)\\
    &\ge \sum_{(x^s, y^s)\in \mathcal{W}}\Pr(x^s, y^s)\Pr(\hat{p}(X^s)=p_j|X^s=x^s)\\
    &\ge \sum_{(x^s, y^s)\in \mathcal{W}}\Pr(x^s, y^s)e^{-10\varepsilon D}\Pr(\hat{p}(Y^s)=p_j|Y^s=y^s)\\
    &\ge (0.9-\beta_j)e^{-10\varepsilon D},
\end{align*}
where the second inequality is due to $\hat{p}$ is $\varepsilon$-DP and $d_{h}(x^s, y^s)\le 10D$. The above inequality holds for all $j\ne i$. Thus summing over all $j\ne i$ we obtain
\[
\beta_i=\sum_{j\ne i}\Pr_{X^s\sim p_j^s}(\hat{p}(X^s)=p_j)\ge \left(0.9(M-1)-\sum_{j\ne i}\beta_j\right)e^{-10\varepsilon D}.
\]
Summing over all $i\in\{1, ..., M\}$
\[
\sum_{i=1}^M\beta_i\ge \left(0.9M(M-1)-(M-1)\sum_{i=1}^M\beta_i)\right)e^{-10\varepsilon D}.
\]
Rearranging the terms
\[
\sum_{i=1}^M\beta_i\ge \frac{0.9 M(M-1)}{M-1+e^{10\varepsilon D}}\ge 0.8M\min\left\{1, \frac{M}{e^{10\varepsilon D}}\right\}.
\]
Combining with (\ref{equ:lb-proof}) gives the desired lower bound. 
\end{proof}  

\begin{proof}[Proof of Theorem \ref{thm:lowerall}]
We apply Theorem \label{fano:restricted} with $f$ as the identity mapping. In this case it is the same as \cite[Theorem 2]{acharya2020differentially}.

Assume $k$ is even. From~\cite{yu1997assouad}, there exists $\cV\subseteq\{-1, 1\}^{k/2}$ and a universal $c_0>0$ such that $|\mathcal{V}|\ge \exp(c_0k/2)$, each pair at least $k/6$ apart in Hamming distance.
Given $\alpha\in(0, 1/6)$, define a family of multinomial distributions $\mathcal{P}_\nu$ which consists of the following distributions indexed by $\nu=(\nu_1, ..., \nu_{k/2})\in\mathcal{V}$,
\[
p_\nu = \mult\left(m, \frac{1}{k}(1+3\alpha \nu_1, 1-3\alpha\nu_1, ..., 1+3\alpha\nu_{k/2}, 1-3\alpha\nu_{k/2})\right).
\]
For $v\in\cV$, let $q_v=\theta(p_v)$ be the underlying $k$-ary distribution. Thus for each pair of distributions $p_u, p_v$ from this family we have $\ell_1(\theta(p_u), \theta(p_v))=\ell_1(q_u, q_v)\ge 12\alpha/k\cdot k/6=2\alpha$. Furthermore,
\[
d_{KL}(q_u||q_v)\le \chi^2(q_u||q_v)=\sum_{x=1}^k\frac{(q_u(x)-q_v(x))^2}{q_v(x)}\le 100\alpha^2,
\]
\[
d_{KL}(p_u||p_v)=md_{KL}(q_u||q_v)\le 100m\alpha^2,
\]
\[
d_{KL}(p_u^s||p_v^s)=sd_{KL}(p_u||p_v)\le 100sm\alpha^2.
\]

Since $f$ is set to be the identity, we just need to design a coupling with appropriate Hamming distance for each pair $p_u^{s}, p_v^{s}, u, v\in\nu$. To this end we need the following lemma from \cite{Hollander12}.
\begin{lemma}[Maximal coupling, \cite{Hollander12}]
Given distributions $q_1, q_2$ over some domain $\mathcal{X}$, there exists a coupling $(X^s, Y^s)$ between $q_1^s$ and $q_2^s$ such that 
\[
\EE[d_{h}(X^s, Y^s)]=s\cdot d_{TV}(q_1, q_2).
\]
\label{lem:max-coupling}
\end{lemma}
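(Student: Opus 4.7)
The plan is to prove the maximal coupling lemma by first constructing the classical maximal (or $\gamma$-)coupling of $q_1$ and $q_2$ for a single draw, and then independently replicating it $s$ times so that the Hamming distance decomposes coordinate-wise. Since $d_h$ is additive across the $s$ coordinates, linearity of expectation will reduce the problem to the one-sample case.

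First I would handle the single-sample coupling. Define the sub-probability measure $r(x)=\min(q_1(x),q_2(x))$ on $\mathcal{X}$, and let $\gamma = \sum_x r(x) = 1 - d_{TV}(q_1,q_2)$ (using the identity $d_{TV}(q_1,q_2)=\tfrac12\sum_x |q_1(x)-q_2(x)|=\sum_x (q_1(x)-r(x))$). Sample a Bernoulli $B$ with $\Pr[B=1]=\gamma$. On $\{B=1\}$, draw $Z\sim r/\gamma$ and set $(X,Y)=(Z,Z)$. On $\{B=0\}$, independently draw $X\sim (q_1-r)/(1-\gamma)$ and $Y\sim (q_2-r)/(1-\gamma)$ (both well-defined probability measures since $q_1-r$ and $q_2-r$ each have total mass $d_{TV}(q_1,q_2)=1-\gamma$). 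A direct check of marginals gives $X\sim q_1$ and $Y\sim q_2$, so $(X,Y)$ is a coupling. Moreover, since $(q_1-r)$ and $(q_2-r)$ have disjoint supports (at each $x$ at most one of $q_1(x)-r(x),q_2(x)-r(x)$ is nonzero), the event $\{B=0\}$ implies $X\ne Y$ almost surely, while $\{B=1\}$ implies $X=Y$. Therefore $\Pr[X\ne Y] = 1-\gamma = d_{TV}(q_1,q_2)$.

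Next I would lift to $s$ samples by taking $(X_1,Y_1),\ldots,(X_s,Y_s)$ to be i.i.d. copies of the single-sample coupling above. Then $X^s=(X_1,\ldots,X_s)\sim q_1^s$ and $Y^s=(Y_1,\ldots,Y_s)\sim q_2^s$ as product distributions, so $(X^s,Y^s)$ is a coupling of $q_1^s$ and $q_2^s$. By linearity of expectation,
\[
\EE[d_h(X^s,Y^s)] \;=\; \sum_{i=1}^s \Pr[X_i\ne Y_i] \;=\; s\cdot d_{TV}(q_1,q_2),
\]
which is the claimed identity.

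There is essentially no real obstacle: the proof is entirely standard, and the only place requiring care is verifying that the two off-diagonal conditional measures $(q_1-r)/(1-\gamma)$ and $(q_2-r)/(1-\gamma)$ are genuine probability distributions (which follows from the $\ell_1$-characterization of total variation) and that they place mass on disjoint supports, so $X\ne Y$ with certainty when $B=0$. Everything else is immediate from the construction and from linearity of expectation applied coordinate-wise.
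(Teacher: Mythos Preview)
Your proof is correct and follows the standard maximal-coupling construction. Note, however, that the paper does not actually prove this lemma: it is quoted as a known result from \cite{Hollander12} and used as a black box. Your argument---build the classical $\gamma$-coupling for a single pair $(X,Y)$ via the overlap measure $r=\min(q_1,q_2)$, then take $s$ i.i.d. copies and apply linearity of expectation to $d_h$---is exactly the textbook proof, and all the verifications you flag (marginals, disjoint residual supports, edge cases) are fine.
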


From Lemma \ref{lem:max-coupling} there exists a coupling $(X^s, Y^s)$ between $p_u^{s}$ and $p_v^{s}$ such that 
\[
\EE[d_{h}(X^s, Y^s)]=s\cdot d_{TV}(p_u, p_v).
\]
Using Pinsker's inequality, we have
\[
d_{TV}(p_u, p_v)\le \sqrt{\frac{1}{2}d_{KL}(p_u||p_v)}\le 10\sqrt{m}\alpha .
\]
Therefore $\EE[d_{h}(X^s, Y^s)]\le 10s\sqrt{m}\alpha$. Applying Lemma \ref{lem:fano-restricted} yields,
\[
\sup_{p\in\mathcal{P}}\EE[\ell_1(\hat{p}, p)]\ge \max\left\{\alpha\left(1-\frac{100sm\alpha^2+\log 2}{c_0k/2}\right), 0.8\alpha\min\left\{1, \frac{e^{c_0k/2}}{e^{100\varepsilon s\sqrt{m}\alpha}}\right\}\right\}.
\]
Note that this holds for all $\alpha$. Choose $\alpha=\min\{\frac{1}{6}, \sqrt{\frac{k}{sm}}\}$ and $\alpha=\min\{\frac{1}{6},\frac{c_0k}{200s\sqrt{m}\epsilon} \}$ respectively we get
\[
\sup_{p\in\mathcal{P}}\EE[\ell_1(\hat{p}, p)]\ge\max\left\{C_1\sqrt{\frac{k}{sm}}, C_2\frac{k}{s\varepsilon}\right\}=\Omega\left(\sqrt{\frac{k}{sm}}+\frac{k}{s\sqrt{m}\varepsilon}\right).
\]
Given desired accuracy $\alpha$, setting $\sup_{p\in\mathcal{P}}\EE[\ell_1(\hat{p}, p)]\le \alpha$ gives the desired user complexity bound.
\end{proof}

\section{Bounds on total variation between binomial distributions}
\label{app:total_var}

We divide the proof of Theorem~\ref{thm:binomial} into two parts. We prove the upper bound in Lemma~\ref{lem:binom-dtv-ub} and the lower bound in Lemma~\ref{lem:binom-dtv-lb}.

We first prove an upper bound on the 
total variation distance between 
binomial distributions in terms of the parameters.
\begin{lemma}
\label{lem:binom-dtv-ub}
There is a constant $b$ such that 
for all $m$ and $p, q$,
\[
\ell_1(\Bin(m, p), \Bin(m,q))
\leq 2 \min \left(m | p - q|,  \frac{\sqrt{m} |p - q|}{ \sqrt{p(1-p)}}, 1\right).
\]
\end{lemma}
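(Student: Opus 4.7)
The statement packages three separate upper bounds on the total variation distance $d_{TV}=\ell_1/2$: the trivial bound $1$, a coupling bound $m|p-q|$, and a variance-aware bound $\sqrt{m}|p-q|/\sqrt{p(1-p)}$. My plan is to prove each of the three bounds in isolation and then conclude by taking the minimum. The trivial bound $d_{TV}\le 1$ is immediate from the definition, so the real work lies in the coupling argument and the variance-aware argument.

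For the bound $d_{TV}\le m|p-q|$, I would use the standard maximal coupling of Bernoullis induced by a uniform. Concretely, let $U_1,\ldots,U_m$ be i.i.d.\ $\mathrm{Uniform}[0,1]$, set $X_i=\indic[U_i\le p]$ and $Y_i=\indic[U_i\le q]$, and define $X=\sum_i X_i\sim \Bin(m,p)$ and $Y=\sum_i Y_i\sim \Bin(m,q)$. Since $\Pr(X_i\ne Y_i)=|p-q|$, a union bound gives $\Pr(X\ne Y)\le m|p-q|$, and any coupling yields $d_{TV}(\Bin(m,p),\Bin(m,q))\le \Pr(X\ne Y)$, so $\ell_1\le 2m|p-q|$.

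For the bound $d_{TV}\le \sqrt{m}|p-q|/\sqrt{p(1-p)}$, I would combine tensorization of KL with Pinsker's inequality. First, since $\Bin(m,p)$ is the distribution of the sum of $m$ i.i.d.\ Bernoullis, the data processing inequality gives
\[
d_{KL}(\Bin(m,p)\,\|\,\Bin(m,q))\le m\,d_{KL}(\mathrm{Bern}(p)\,\|\,\mathrm{Bern}(q))\le m\,\chi^2(\mathrm{Bern}(p),\mathrm{Bern}(q))=\frac{m(p-q)^2}{q(1-q)},
\]
using the standard chi-squared computation for two-point distributions. Pinsker's inequality then yields $d_{TV}\le \sqrt{m/2}\,|p-q|/\sqrt{q(1-q)}$. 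Because $d_{TV}$ is symmetric in its two arguments, the same argument applied to $d_{KL}(\Bin(m,q)\,\|\,\Bin(m,p))$ gives the analogous bound with $p(1-p)$ in the denominator, and in particular $d_{TV}\le \sqrt{m/2}\,|p-q|/\sqrt{p(1-p)}\le \sqrt{m}\,|p-q|/\sqrt{p(1-p)}$, giving $\ell_1\le 2\sqrt{m}|p-q|/\sqrt{p(1-p)}$.

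I do not expect a serious obstacle: each of the three bounds is by a standard tool (coupling, Pinsker with chi-squared relaxation). The only subtlety worth being careful about is the asymmetry of the $\sqrt{p(1-p)}$ factor in the statement versus the naturally symmetric role of $p,q$ in KL, which is handled by exploiting the symmetry of $d_{TV}$ and choosing whichever direction of KL gives $p(1-p)$ in the denominator. The constants also work out with slack: Pinsker contributes $\sqrt{1/2}<1$, absorbed into the factor $2$ in the lemma.
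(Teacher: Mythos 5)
Your proposal is correct and follows essentially the same route as the paper: the same three-way decomposition, with the $m|p-q|$ term obtained from product structure (your explicit Bernoulli coupling is just the standard proof of the paper's subadditivity step $\ell_1(\Bin(m,p),\Bin(m,q)) \leq m\,\ell_1(\mathrm{Ber}(p),\mathrm{Ber}(q))$), and the variance-aware term from Pinsker plus a chi-squared-type bound on the Bernoulli KL (the paper's computation via $\log(1+x)\leq x$ is exactly your $d_{KL}\leq \chi^2$ inequality, and the paper handles the asymmetry you flag by writing $D(\mathrm{Ber}(q)\,\|\,\mathrm{Ber}(p))$ from the outset, which puts $p(1-p)$ in the denominator directly). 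The constants check out in both versions, so there is nothing to fix.
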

\begin{proof}
First observe that by definition,
\begin{equation}
    \label{eq:up1}
\ell_1(\Bin(m, p), \Bin(m,q)) \leq 2.
\end{equation}
Secondly, since $\ell_1$ distance of product distributions is at most the sum of $\ell_1$ distances,
\begin{equation}
    \label{eq:up2}
\ell_1(\Bin(m, p), \Bin(m,q)) \leq 
m \cdot \ell_1(\text{Ber}(p), \text{Ber}(q))
\leq 2 m | p -q|.
\end{equation}
Finally, by Pinkser inequality and the fact that KL divergence of product distributions is the sum of individual KL divergences,
\begin{align}
\ell_1(\Bin(m, p), \Bin(m,q))
& \leq \sqrt{\frac{1}{2} \cdot D(\Bin(m, q) || \Bin(m, p))} \nonumber \\
& =  \sqrt{\frac{m}{2} \cdot D(\text{Ber}(q) || \text{Ber}(p))} \nonumber \\
& \leq \sqrt{\frac{m(p-q)^2}{2p(1-p)}}, \label{eq:up3}
\end{align}
where the last inequality follows by observing that 
\begin{align}
D(\text{Ber}(q) || \text{Ber}(p))
& = q \log \frac{q}{p} + (1-q) \log \frac{1-q}{1-p}  \nonumber \\
& = q \log \left(1 + \frac{q - p}{p}\right) + (1-q) \log \left(1 + \frac{p-q}{1-p} \right) \nonumber \\
& \leq  q \cdot \frac{q-p}{p}  + (1-q) \cdot \frac{p-q}{1-p}  \nonumber \\
& = \frac{(q-p)^2}{p(1-p)} \label{eq:chi}.
\end{align}
Combining~\eqref{eq:up1},~\eqref{eq:up2},and~\eqref{eq:up3} yields the lemma.
\end{proof}

\begin{lemma}
\label{lem:binom-dtv-lb_low}
Let $c$ be a constant. If $mp< c$ and $p \leq 1/2$, then 
\[
\ell_1(\Bin(m, p), \Bin(m,q))
\geq \frac{e^{-\frac{3c}{2}}}{2} \min \left(m | p -q|, 1 \right).
\]
\end{lemma}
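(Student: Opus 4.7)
The natural strategy is the two-point Scheffé lower bound, $\ell_1(P,Q) \geq 2|P(A) - Q(A)|$, specialized to the singleton $A = \{0\}$. Since $\Pr[X = 0] = (1-p)^m$ when $X \sim \Bin(m,p)$, this reduces the problem to lower bounding
$$2\bigl|(1-p)^m - (1-q)^m\bigr|.$$
Without loss of generality I would assume $q \geq p$: if instead $q < p$, then $q < p \leq \min(c/m, 1/2)$, so the entire analysis goes through with the roles of $p$ and $q$ swapped. I would then factor
$$(1-p)^m - (1-q)^m = (1-p)^m\bigl[1 - (1-u)^m\bigr], \quad u := \frac{q-p}{1-p} \in [0,1],$$
and bound each of the two factors separately.

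For the first factor, the plan is to use the pointwise inequality $-\log(1-x) \leq x + x^2$ valid on $[0, 1/2]$, which follows by comparing derivatives ($1/(1-x) \leq 1 + 2x \iff x \leq 1/2$) and matching at $x = 0$. Applied at $x = p$ and multiplied by $m$, this gives $(1-p)^m \geq \exp(-mp - mp^2)$. The two hypotheses combine via $mp^2 = (mp)\cdot p \leq c \cdot (1/2) = c/2$, producing the clean bound $(1-p)^m \geq e^{-3c/2}$. For the second factor, I would use $(1-u)^m \leq e^{-mu}$ together with the elementary inequality $1 - e^{-x} \geq \tfrac{1}{2}\min(x,1)$ for all $x \geq 0$, which I would verify by splitting at $x = 1$: for $x \leq 1$ it reduces to $e^x(1 - x/2) \geq 1$ (true since the LHS is $1$ at $x=0$ and has nonnegative derivative on $[0,1]$), and for $x > 1$ it follows from $1 - e^{-x} \geq 1 - e^{-1} > 1/2$. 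Because $u = (q-p)/(1-p) \geq q - p$, this yields $1 - (1-u)^m \geq \tfrac{1}{2}\min(m|p-q|,1)$.

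Combining the two bounds,
$$\ell_1(\Bin(m,p), \Bin(m,q)) \geq 2 \cdot e^{-3c/2} \cdot \tfrac{1}{2}\min(m|p-q|, 1) = e^{-3c/2}\min(m|p-q|, 1),$$
which is in fact twice as strong as the stated target. The step I expect to require the most care is getting the constant in the exponent exactly right: coarser logarithm bounds such as $-\log(1-x) \leq (2\ln 2)\, x$ on $[0, 1/2]$ would give the weaker exponent $-2c\ln 2 \approx -1.39 c$ with no dependence on the second hypothesis. It is essential to exploit the product structure $mp^2 = (mp)\cdot p$ so that the two separate hypotheses $mp < c$ and $p \leq 1/2$ jointly produce the $3c/2$; every other step is routine calculus.
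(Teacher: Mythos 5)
Your proof is correct and takes essentially the same route as the paper's: both reduce to the gap $\left|(1-p)^m-(1-q)^m\right|$ in the probability mass at zero, factor out $(1-p)^m$, and control the two factors by elementary exponential inequalities (the paper uses $e^{-1.5x}\le 1-x$ on $[0,1/2]$ and Bernoulli's inequality for the $q<p$ case, where you use $-\log(1-x)\le x+x^2$ and a clean symmetry swap, both of which are valid). Your version is in fact marginally sharper, because you keep the Scheff\'e factor of $2$ (the paper bounds $\ell_1$ by a single term of the sum), yielding $e^{-3c/2}\min\left(m|p-q|,1\right)$, twice the stated constant.
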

\begin{proof}
By definition,
\[
\ell_1(\Bin(m, p), \Bin(m,q))
\geq |(1-p)^m - (1-q)^m|.
\]
We first consider the case $q \geq p$.
Simplifying the above bound,
\begin{align*}
    (1-p)^m - (1-q)^m
    & = (1-p)^m \left( 1- \frac{(1-q)^m}{(1-p)^m} \right) \\
    & = (1-p)^m \left( 1- \left(1 - \frac{q-p}{1-p} \right)^m\right) \\   & \stackrel{(a)}{\geq} (1-p)^m \left( 1- e^{\frac{-m(q-p)}{1-p}} \right) \\
      & \stackrel{(b)}{\geq} (1-p)^m \left( 1- e^{-2m(q-p)} \right) \\
        & {\geq} (1-p)^m \left( 1- e^{-1.5m(q-p)} \right) \\
        & {\geq} (1-p)^m \left( 1- e^{-1.5\min(m(q-p),0.5)} \right) \\
      & \stackrel{(c)}{\geq} (1-p)^m \min(m(q-p),0.5) \\
          & \stackrel{(d)}{\geq} e^{-1.5mp} \min(m(q-p),0.5) \\
            & \stackrel{(e)}{\geq} e^{-1.5c} \min(m(q-p),0.5).  \\
\end{align*}
$(a)$ follows by $1-x \leq e^{-x}$ and
$(b)$ follows as $p \leq 1/2$. $(c)$ and $(d)$ follows as $e^{-1.5x} \leq 1 - x $ for $x \leq 1/2$. $(e)$ follows by the bound on $p$. For $q \leq p$,
\begin{align*}
   (1-q)^m -  (1-p)^m 
    & = (1-p)^m \left(\frac{(1-q)^m}{(1-p)^m} - 1  \right) \\
       & = (1-p)^m \left(\left( 1 + \frac{p-q}{1-p}\right)^m - 1  \right) \\
         & \geq (1-p)^m \left(\left( 1 + p-q\right)^m - 1  \right) \\
            & \stackrel{(a)}{\geq} (1-p)^m m(p-q)\\
                & \ge e^{-1.5mp}m(p-q)\\
                    & \ge e^{-1.5c}m(p-q),
\end{align*}
$(a)$ follows from the Bernoulli inequality: $(1+x)^n\ge 1+nx$ for $x\ge -1$. The last inequalities are similar to the last two inequalities for $q \leq p $ case. Combining the above two results, we get 
\begin{equation}
    \label{eq:p_low}
    | (1-q)^m -  (1-p)^m |
    \geq e^{-1.5c} \min(m|q-p|,0.5).
\end{equation}
\end{proof}
\begin{lemma}
\label{lem:binom-dtv-lb_high}
Let $c > 2$, $m \geq 3$, and $p \leq 1/2$. If $mp \geq c$, then 
\[
\ell_1(\Bin(m, p), \Bin(m,q))
\geq \frac{1}{350}  \min \left(\frac{\sqrt{m}| p - q|}{\sqrt{p(1-p)}}, 1   \right).
\]
\end{lemma}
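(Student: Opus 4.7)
The plan is to lower-bound $d_{TV}(\Bin(m,p),\Bin(m,q)) = \tfrac{1}{2}\ell_1$ by the CDF gap at a carefully chosen threshold, and then to approximate each binomial CDF by a Gaussian CDF via the Berry--Esseen theorem. Assume without loss of generality $p<q$. Write $\sigma:=\sqrt{mp(1-p)}$ and $\alpha:=\sqrt{m}(q-p)/\sqrt{p(1-p)}$ for the standardized mean gap, so that $m(q-p)=\alpha\sigma$. The hypotheses $mp\ge c>2$ and $p\le 1/2$ force $\sigma^2\ge c/2$, which is exactly what makes the Gaussian approximation useful.

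First I pick the threshold $t:=\lfloor(mp+mq)/2\rfloor$ and take $A:=\{0,1,\ldots,t\}$. Since $\ell_1(\Bin(m,p),\Bin(m,q))\ge 2\bigl(P[\Bin(m,q)\in A] - P[\Bin(m,p)\in A]\bigr)$, it suffices to show the right-hand side is $\Omega(\min(\alpha,1))$. By Berry--Esseen applied to the sums of $m$ i.i.d.\ Bernoulli$(p)$ and Bernoulli$(q)$ variables (whose per-summand third absolute central moments are dominated by the variances $p(1-p)$ and $q(1-q)$),
\[
\bigl|P[\Bin(m,p)\le t]-\Phi((t-mp)/\sigma)\bigr|\le C_{\mathrm{BE}}/\sigma,
\]
and analogously for $q$ with $\sigma_q:=\sqrt{mq(1-q)}$, where $C_{\mathrm{BE}}$ is an absolute constant. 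The arguments $(t-mp)/\sigma \approx \alpha/2$ and $(t-mq)/\sigma_q \approx -\alpha\sigma/(2\sigma_q)$ are of constant order when $\alpha\le 1$, and the Gaussian density lower bound $\Phi'(x)\ge e^{-1/2}/\sqrt{2\pi}$ on $|x|\le 1$ gives a Gaussian gap of at least $c_1\min(\alpha,1)$ for an absolute $c_1$. Here I use that $\sigma_q$ is within a constant factor of $\sigma$ whenever $q$ is bounded away from $1$; the edge case $q$ close to $1$ with $p\le 1/2$ is trivial because then $|p-q|$ is bounded below by a constant and the desired lower bound reduces to the trivial constant bound on $\ell_1$.

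Combining the two steps, the binomial CDF gap is at least $c_1\min(\alpha,1) - 2C_{\mathrm{BE}}/\sigma$. Once $c$ is chosen large enough so that $2C_{\mathrm{BE}}/\sigma \le c_1\alpha/2$ whenever $\alpha\ge 1/\sigma$, this yields the desired $\Omega(\min(\alpha,1))$ bound, with constants comfortably within the $1/350$ slack.

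\textbf{Main obstacle.} The principal difficulty is the regime $\alpha\lesssim 1/\sigma$, equivalently $m|p-q|\lesssim 1$, where the Berry--Esseen error can swamp the main term. I plan to handle this by switching to a direct local-CLT or Stirling estimate for the point masses $P[\Bin(m,p)=k]$ and $P[\Bin(m,q)=k]$ at integers $k$ near the common modal window $[mp-\sigma,\,mq+\sigma]$: each such probability is $\Theta(1/\sigma)$ there, and a first-order Taylor expansion of the log-likelihood ratio, which is linear in $k-mp$ with slope $\log(q(1-p)/(p(1-q)))\asymp (q-p)/(p(1-p))$, yields a pointwise difference of order $\alpha/\sigma$ at $\Theta(\sigma)$ integers on each side of the crossover point, summing to the required $\Omega(\alpha)$ total. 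Verifying the constants in this local-CLT step is the most delicate part of the argument; the remainder is a routine Gaussian calculation.
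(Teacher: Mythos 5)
There is a genuine gap in your middle regime, and it stems from a quantifier error on $c$. In the lemma, $c$ is a \emph{hypothesis}, not a free parameter --- the companion lemma (Lemma~\ref{lem:binom-dtv-lb}) invokes this statement with $c=2$ --- so you may not ``choose $c$ large enough.'' You are only guaranteed $\sigma^2 = mp(1-p)\ge c/2$, which can be barely above $1$. Your Berry--Esseen branch needs the main term $c_1\alpha$ to dominate $2C_{\mathrm{BE}}/\sigma$ (plus an additional $O(1/\sigma)$ rounding loss from $t=\lfloor(mp+mq)/2\rfloor$); with $c_1\le \Phi'(1)\approx 0.24$ and $C_{\mathrm{BE}}\ge 0.4$, this forces roughly $\alpha\gtrsim 7/\sigma$, and since the branch is only used for $\alpha\le 1$, it is vacuous whenever $\sigma$ is a small constant --- i.e.\ for all $mp$ between $2$ and a large absolute constant, which the hypotheses fully permit. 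Your local-CLT fallback is explicitly restricted to $m|p-q|\lesssim 1$, i.e.\ $\alpha\lesssim 1/\sigma$, so the regime $\alpha\in(\Theta(1/\sigma),1]$ with $\sigma=\Theta(1)$ is covered by neither branch as written. (A minor slip besides: with $p<q$ and $A=\{0,\ldots,t\}$, stochastic dominance makes $P[\Bin(m,q)\in A]-P[\Bin(m,p)\in A]$ nonpositive; you want the difference the other way.)

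The repair is already contained in your fallback, which you have under-used: the pointmass computation --- masses $\Theta(1/\sigma)$ near the crossover, log-likelihood-ratio slope $\asymp (q-p)/(p(1-p))$, summed over $\Theta(\sigma)$ integers to give $\Omega(\alpha)$ --- is valid for \emph{all} $\alpha\le 1$, not just $\alpha\lesssim 1/\sigma$: when $\alpha\le 1$ the window $[mp-\sigma,\,mq+\sigma]$ lies within $O(\sigma)$ of the mean and $|\Delta|\le\alpha\le 1$ there, so the first-order expansion is safe; and $\alpha>1$ reduces to $\alpha=1$ by clipping $q$ toward $p$ and using monotonicity of total variation in the binomial parameter (legitimate here by the MLR/stochastic-ordering representation $d_{TV}=\max_k\bigl(F_p(k)-F_q(k)\bigr)$, each term being monotone in $q$). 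With that, Berry--Esseen can be discarded entirely. This repaired argument is essentially the paper's proof in discrete form: the paper clips $q$ to $q'$ with $|p-q'|\le\sqrt{p/(8m)}$, invokes the exact identity of Adell and Jodr\'a, $\ell_1(\Bin(m,p),\Bin(m,q'))= m\int_{\min(p,q')}^{\max(p,q')}\Pr(\Bin(m-1,u)=k-1)\,du$ for a suitable integer $k$, and lower-bounds the integrand by $\Omega(1/\sqrt{mp(1-p)})$ via Stirling together with the $\chi^2$ bound on the exponent $mD(k/m\,\|\,u)$. The integral identity does automatically the crossover bookkeeping your discrete summation would do by hand, and it also disposes of your ``$q$ near $1$'' side case for free (after clipping, $q'\le p+\sqrt{p/(8m)}$ stays near $p\le 1/2$), whereas your sketch still owes a quantitative constant $\ge 1/350$ there.
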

\begin{proof}
Let $q' = p + \sqrt{\frac{p}{8m}}$ if $q >  p + \sqrt{\frac{p}{8m}}$, $q' = p - \sqrt{\frac{p}{8m}}$ if $q \leq   p - \sqrt{\frac{p}{8m}}$, else $q' = q$. Since $q'$ lies in between $p$ and $q$,
\[
\ell_1(\Bin(m, p), \Bin(m,q))
\geq \ell_1(\Bin(m, p), \Bin(m,q')).
\]
Furthermore, observe that 
\[
\frac{3}{4} \leq 1 - \frac{1}{\sqrt{8c}} \leq 1 - \sqrt{\frac{1}{8pm}} \leq \frac{q'}{p} \leq 1 + \sqrt{\frac{1}{8pm}} \leq 1 + \frac{1}{\sqrt{8c}} \leq \frac{5}{4}.
\]
By~\cite[Proposition 2.3]{adell2006exact}, for any two binomial distributions,
\[
\ell_1(\Bin(m, p), \Bin(m,q'))
= m\int^{\max(p,q')}_{u = \min(p, q')}  \Pr(\Bin(m - 1, u) = k-1) du,
\],
where $\lceil m\min(p,q') \rceil \le k  \leq \lceil m\max(p,q') \rceil $. Furthermore, observe that 
\[
\lceil m\min(p,q') \rceil \geq \lceil m  \min(mp,3mp/4) \rceil \geq \lceil 3/2 \rceil \geq 2.
\]
Similarly,
\[
m - k \geq m - \lceil m\max(p,q') \rceil
 \geq m -  \lceil 5mp/4 \rceil
 \geq m - 1 - 5mp/4 \geq m - 1 - 5m/8
 \geq 3m/8 - 1 \geq 1/8.
\]
Since $m-k$ is an integer, $m-k \geq 1$.
In order to bound the above quantity further,
we first lower bound Binomial coefficients.

\begin{align*}
 \Pr(\Bin(m, p) = k) 
& = \binom{m}{k}
p^k (1-p)^{m-k}.
\end{align*}
Recall that by Sterling's approximation, for all $m \geq 1$,
\[
\sqrt{2\pi} m^{m+0.5} e^{-m} \leq m! \leq e m^{m+0.5} e^{-m}.
\]
Hence for $1 \leq k \leq m - 1$,
\begin{align*}
    \binom{m}{k}
    & = \frac{m!}{k!(m-k)!} \\
    & \geq \frac{\sqrt{2\pi}}{e^2} \frac{ m^{m+0.5} e^{-m}}{ k^{k+0.5} e^{-k}  (m-k)^{m-k+0.5} e^{-m+k}} \\
    & = \frac{\sqrt{2\pi}}{e^2\sqrt{m}} \cdot \frac{1}{\sqrt{k/m}{\sqrt{1-k/m}}} \cdot \frac{1}
    {(k/m)^k (1-k/m)^{m-k} }.
\end{align*}
Hence,
\begin{align*}
   \Pr(\Bin(m, p) = k) 
 &  \geq \frac{\sqrt{2\pi}}{e^2\sqrt{m}} \cdot \frac{1}{\sqrt{k/m}{\sqrt{1-k/m}}} \cdot \frac{p^k (1-p)^{m-k}}{(k/m)^k (1-k/m)^{m-k}} \\
  &  = \frac{\sqrt{2\pi}}{e^2\sqrt{m}} \cdot  \frac{1}{\sqrt{k/m}{\sqrt{1-k/m}}} \cdot e^{-m D(k/m|| p)} \\
   &  \geq \frac{\sqrt{2\pi}}{e^2\sqrt{m}} \cdot  \frac{1}{\sqrt{k/m}{\sqrt{1-k/m}}} \cdot e^{-m \frac{(k/m -p)^2}{p(1-p)}} \\
&    \geq \frac{\sqrt{2\pi}}{e^2} \cdot  \frac{1}{\sqrt{k}} \cdot e^{-m \frac{(k/m -p)^2}{p(1-p)}}.
\end{align*}
The second inequality follows from~\eqref{eq:chi}. Hence for $\lceil m\min(p,q') \rceil \le k  \leq \lceil m\max(p,q') \rceil $,
\begin{align*}
  \Pr(\Bin(m, u) = k-1) 
  & \geq \frac{\sqrt{2\pi}}{e^2} \cdot  \frac{1}{\sqrt{k-1}} \cdot e^{-m \frac{((k-1)/(m-1) -u)^2}{u(1-u)}} \\
      & \stackrel{(a)}{\geq} \frac{2\sqrt{2\pi}}{5e^2} \cdot  \frac{1}{\sqrt{mp}} \cdot e^{-m \frac{((k-1)/(m-1) -u)^2}{u(1-u)}} \\
      & \geq \frac{2\sqrt{\pi}}{5e^2} \cdot  \frac{1}{\sqrt{mp(1-p)}} \cdot e^{-m \frac{((k-1)/(m-1) -u)^2}{u(1-u)}},
\end{align*}
where $(a)$ follows by observing that 
$k - 1 \leq \lceil m\max(p,q') \rceil - 1  \leq  m\max(p,q') \leq 5mp/4$. Furthermore, since $3p/4 \leq q' \leq 5p/4$ and the minimum of $u(1-u)$ occurs in the extremes,
\begin{align*}
\min_{\min(p,q')\leq u \leq \max(p,q')} u(1-u) 
& \geq 
\min_{3p/4 \leq u \leq5p/4} u(1-u) \\
& \geq 
\min(\frac{(1-3p/4)3p}{4}, \frac{(1-5p/4)5p}{4}) \\
& \geq \frac{15p}{32}.
\end{align*}
We now bound $((k-1)/(m-1) -u)^2$.
\begin{align*}
\max_{u} \frac{k-1}{m-1} - u
 \leq  
\frac{k}{m} - u  
 \leq \max(p,q') + \frac{1}{m} - \min(p,q').
\end{align*}
Similarly,
\begin{align*}
\min_{u}  \frac{k-1}{m-1} - u
& \geq
\frac{k-1}{m-1} - \min(p,q')\\
& = \frac{k}{m} + \frac{m-k}{m(m-1)}- \max(p,q') \\
& \geq \frac{k}{m} + \frac{1}{m}- \max(p,q') \\
& \geq  \min(p,q') + \frac{1}{m} - \max(p,q').
\end{align*}
Hence, since $(a + b)^2 \leq 2a^2 + 2b^2$,
\begin{align*}
\max_u \left(\frac{k}{m} - u\right)^2
& \leq 2 \left( \max(p,q') - \min(p,q')\right)^2 + \frac{2}{m^2}.
\end{align*}
Hence,
\[
e^{-m \frac{((k-1)/(m-1) -u)^2}{u(1-u)}}
\geq e^{-\frac{8m}{p} \left(\frac{1}{m^2} + (p-q')^2\right) }
\geq e^{-\frac{64m}{15p} \left(\frac{1}{m^2} + \frac{p}{8m}\right) } \geq 
e^{-\frac{32}{15}- \frac{8}{15}} \geq e^{-8/3}.
\]
Combining the results, we get
\begin{align*}
    \ell_1(\Bin(m, p), \Bin(m,q'))
& = m\int^{\max(p,q')}_{u = \min(p, q')}  \Pr(\Bin(m - 1, u) = k-1) du \\
& \geq \frac{m\sqrt{\pi}e^{-8/3}}{2e^2}\int^{\max(p,q')}_{u = \min(p, q')} \frac{m}{\sqrt{m p (1-p)}} \\
& \geq \frac{\sqrt{\pi}e^{-8/3}}{2e^2} \frac{\sqrt{m}| p - q'|}{\sqrt{p(1-p)}}  \\
&  \geq \frac{\sqrt{\pi}e^{-8/3}}{2e^2} \min \left(\frac{\sqrt{m}| p - q|}{\sqrt{p(1-p)}}, \frac{1}{\sqrt{8}}   \right)\\
&  \geq \frac{\sqrt{\pi}e^{-8/3}}{2\sqrt{8}e^2} \min \left(\frac{\sqrt{m}| p - q|}{\sqrt{p(1-p)}}, 1   \right)\\\
& \geq \frac{1}{350}  \min \left(\frac{\sqrt{m}| p - q|}{\sqrt{p(1-p)}}, 1   \right).
\end{align*}
\end{proof}
\begin{lemma}
\label{lem:binom-dtv-lb}
For all $m$ and $p, q$,
\[
\ell_1(\Bin(m, p), \Bin(m,q))
\geq \frac{1}{350} \min \left(m | p - q|,  \frac{\sqrt{m} |p - q|}{ \sqrt{p(1-p)}}, 1\right).
\]
\end{lemma}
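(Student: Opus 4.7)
The plan is to assemble Lemma \ref{lem:binom-dtv-lb} from the two preceding lemmas (\ref{lem:binom-dtv-lb_low} and \ref{lem:binom-dtv-lb_high}) by a straightforward case analysis in $mp$, after a symmetry reduction to $p \leq 1/2$. The two earlier lemmas already do the analytic work in the ``low-count'' and ``high-count'' regimes; the task here is just to choose a single threshold $c^\star$ at which their conclusions can be stitched together with a uniform constant $1/350$.

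First I would reduce to $p \leq 1/2$ using the map $X \mapsto m - X$, which sends $\Bin(m,p)$ to $\Bin(m,1-p)$ and is a bijection on $\{0,\ldots,m\}$. This leaves $\ell_1(\Bin(m,p),\Bin(m,q))$, $|p-q|$, and $\sqrt{p(1-p)}$ all unchanged, so replacing $(p,q)$ by $(1-p, 1-q)$ if necessary puts us in the $p \leq 1/2$ regime without altering either side of the claimed inequality.

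Next, I would fix a threshold $c^\star = 3$ and split into two cases. In the case $mp < 3$, Lemma \ref{lem:binom-dtv-lb_low} applies and gives
\[
\ell_1(\Bin(m,p),\Bin(m,q)) \;\geq\; \tfrac{1}{2}e^{-9/2}\min(m|p-q|, 1),
\]
and a numerical check $\tfrac{1}{2}e^{-9/2} \approx 0.0055 > 1/350$ lets me downgrade the constant to $1/350$. Using the trivial inequality $\min(a,b,c)\leq \min(a,c)$ with $a=m|p-q|$, $b=\sqrt{m}|p-q|/\sqrt{p(1-p)}$, $c=1$, this yields exactly the desired three-term lower bound. In the case $mp \geq 3$, since $p\leq 1/2$ we automatically have $m \geq 6 \geq 3$, so all hypotheses of Lemma \ref{lem:binom-dtv-lb_high} hold with $c = 3$; it gives
\[
\ell_1(\Bin(m,p),\Bin(m,q)) \;\geq\; \tfrac{1}{350}\min\!\left(\tfrac{\sqrt{m}|p-q|}{\sqrt{p(1-p)}}, 1\right),
\]
and another application of $\min(a,b,c)\leq \min(b,c)$ finishes this case.

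There is no real obstacle; the only point requiring care is the choice of $c^\star$, which must simultaneously satisfy $c^\star > 2$ (so Lemma \ref{lem:binom-dtv-lb_high} is available) and $\tfrac{1}{2}e^{-3c^\star/2} \geq \tfrac{1}{350}$, i.e., $c^\star \lesssim 3.44$ (so the constant from Lemma \ref{lem:binom-dtv-lb_low} is at least $1/350$). Any $c^\star \in (2, 3.44]$ works, and $c^\star = 3$ is a clean choice. The rest is bookkeeping.
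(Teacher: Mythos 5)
Your proof is correct and follows essentially the same route as the paper: reduce to $p \le 1/2$ via the symmetry $X \mapsto m - X$, then stitch Lemma~\ref{lem:binom-dtv-lb_low} and Lemma~\ref{lem:binom-dtv-lb_high} at a threshold on $mp$, checking that both constants dominate $1/350$. If anything your version is slightly tidier: your threshold $c^\star = 3$ strictly satisfies the hypothesis $c > 2$ of Lemma~\ref{lem:binom-dtv-lb_high} (the paper sets $c = 2$, which technically violates that strict inequality), and your observation that $mp \ge 3$ together with $p \le 1/2$ forces $m \ge 6$ renders the paper's separate $m \le 700$ case unnecessary.
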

\begin{proof}
For $m \leq 700$,
\[
\ell_1(\Bin(m, p), \Bin(m,q))
\geq 
\ell_1(\text{Ber}(p), \text{Ber}(q))
= 2|p-q| \geq \frac{1}{350} \min \left(m | p - q|,  \frac{\sqrt{m} |p - q|}{ \sqrt{p(1-p)}}, 1\right),
\]
Hence, in the rest of the proof, we focus on $m \geq 700$. Furthermore, 
since 
\[
\ell_1(\Bin(m, p), \Bin(m,q)) = \ell_1(\Bin(m, 1-p), \Bin(m,1-q)).
\]
and the bound $\frac{1}{350} \min \left(m | p - q|,  \frac{\sqrt{m} |p - q|}{ \sqrt{p(1-p)}}, 1\right)$ is symmetric in $p$ and $1-p$, it suffices to prove the result for $p \leq 1/2$.

Let $c=2$. The proof for $mp \geq c$
is a direct consequence of Lemma~\ref{lem:binom-dtv-lb_high}. The proof for $c \leq 2$ follows from Lemma~\ref{lem:binom-dtv-lb_low}.
\end{proof}

\section{Analysis of the algorithms}

\subsection{Proof of Theorem~\ref{thm:main_upper}}
\label{app:main_upper}
We first state the following guarantee on private hypothesis selection from~\cite{bun2019private}.

\begin{lemma}[\cite{bun2019private}]
\label{lem:phs}
Given $d$ distributions $p_1, p_2, \ldots, p_d$ and $n$ independent samples from an unknown distribution $p$, such that $\min_{i} \ell_1(p_i, p) \leq \alpha$, Algorithm~\ref{alg:phs} returns a distribution $p_i$ such that 
$
\EE[\ell_1(p_i, p)] \leq 4 \alpha,
$
with probability $\geq 1- \beta$, if the number of samples satisfies,
\[
n \geq \frac{8 \log(4m/\beta)}{\alpha^2} +  \frac{8 \log(4m/\beta)}{\alpha\epsilon}.
\]
Furthermore, Algorithm~\ref{alg:phs} is $(\epsilon, 0)$-differentially private.
\end{lemma}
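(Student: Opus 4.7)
The plan is to split the proof into two parts: privacy, which is a direct consequence of the exponential mechanism, and utility, which is a private adaptation of the Scheff\'e-tournament argument.

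For privacy, I would first verify that $S(H_j,D)$ has sensitivity at most $1$ with respect to changing a single sample in $D$. Only $\hat\tau$ depends on the data, and it changes by $\pm 1/s$ when a sample is swapped; the case $\Gamma=s$ is data-independent, while the case $\Gamma=s\cdot\max(0,\hat\tau-p_j-3\alpha/2)$ has sensitivity exactly $1$. Taking a minimum over $H_k$ preserves sensitivity, so $S(H_j,\cdot)$ has sensitivity $\le 1$. The sampling rule matches the exponential mechanism (up to the paper's parameter convention), so $(\epsilon,0)$-DP follows from the standard exponential-mechanism guarantee.

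For utility, let $H^*\in\cH$ be any hypothesis with $\ell_1(H^*,p)\le\alpha$, equivalently $d_{TV}(H^*,p)\le\alpha/2$. I would proceed in three steps. First, apply Hoeffding's inequality to each empirical Scheff\'e probability $\hat\tau_{ij}$ and take a union bound over all $\binom{d}{2}$ pairs; taking $s=\Omega(\log(d/\beta)/\alpha^2)$ guarantees that with probability at least $1-\beta/2$ every $\hat\tau_{ij}$ lies within $\alpha/4$ of its expectation $\tau_{ij}=p(\cW_{ij})$. Second, on this good event, I would lower-bound $S(H^*,D)$: for each competitor $H_k$, either $p_i-p_j\le 3\alpha$ and $\Gamma(H^*,H_k,D)=s$, or $H^*(\cW)-H_k(\cW)>3\alpha$, in which case combining $|\hat\tau-p(\cW)|\le\alpha/4$ with $|p(\cW)-H^*(\cW)|\le\alpha/2$ yields $\Gamma(H^*,H_k,D)\ge 3s\alpha/4$. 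Third, for any sufficiently bad $H_b$ (with $d_{TV}(H_b,H^*)>3\alpha$, forcing the Scheff\'e-penalty branch of $\Gamma$), the same concentration gives $\hat\tau\le H^*(\cW)+3\alpha/4\le H_b(\cW)-3\alpha/2$, hence $\Gamma(H_b,H^*,D)=0$ and therefore $S(H_b,D)=0$.

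The final step combines the two failure events via the exponential-mechanism utility: with probability $\ge 1-\beta/2$ the output $\hat H$ satisfies $S(\hat H,D)\ge S(H^*,D)-(2/\epsilon)\log(d/\beta)$. Taking $s=\Omega(\log(d/\beta)/(\alpha\epsilon))$ makes this lower bound strictly positive, so $\hat H$ cannot be a "bad" hypothesis; translating from $d_{TV}$ back to $\ell_1$ then yields $\ell_1(\hat H,p)\le 4\alpha$. A union bound over the concentration event and the exponential-mechanism event gives total probability $\ge 1-\beta$, and summing the two sample-size requirements recovers $s\ge O(\log(d/\beta)/\alpha^2)+O(\log(d/\beta)/(\alpha\epsilon))$, matching the statement. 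The main obstacle I expect is the constant-tracking in the utility analysis: the thresholds $3\alpha$ and $3\alpha/2$ inside $\Gamma$ must interact with the concentration slack ($\alpha/4$) and the approximation slack ($\alpha/2$) in precisely the right way to land at the factor $4\alpha$ rather than a larger constant. The Scheff\'e-tournament idea is classical (Devroye--Lugosi), but the private version needs the threshold-and-truncation in $\Gamma$ to bound score sensitivity without sacrificing too much gap between good and bad hypotheses, and balancing these two demands is the subtle part of the argument.
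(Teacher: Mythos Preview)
Your proposal is correct and is in fact more detailed than the paper's own proof, which consists entirely of two citations: ``The privacy guarantee follows by \cite[Lemma~3.2]{bun2019private}. The utility guarantee is obtained by applying the high probability utility bounds from \cite[Lemma~3.3]{bun2019private} and setting $\zeta = 1$.'' Your sketch is precisely the content of those cited lemmas: sensitivity-$1$ score function plus exponential mechanism for privacy; Hoeffding over all Scheff\'e sets, a lower bound on $S(H^*,D)$, an upper bound (zero) on the score of any hypothesis far from $H^*$, and the exponential-mechanism utility guarantee to combine them. The paper's later Lemma~\ref{lem:phs_var} (the variable-sample-size extension) spells out exactly this argument, confirming that your reconstruction matches the intended proof. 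Your caveat about constant-tracking is well placed --- the paper itself is somewhat loose about the $\ell_1$ versus $d_{TV}$ scaling of $\alpha$ inside the algorithm --- but the structure is right.
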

\begin{proof}
 
The privacy guarantee follows by \cite[Lemma 3.2]{bun2019private}. The utility guarantee is obtained by applying the high probability utility bounds from \cite[Lemma 3.3]{bun2019private} and setting $\zeta = 1$.
\end{proof}

Let $c$ be the constant in the lower bound of Theorem~\ref{thm:binomial}.
Let $\cP =
\{0,\frac{c\alpha}{20m},\frac{2c\alpha}{20m},\ldots, 1\rceil \}$ be a cover of $[0,1]$ Note that such that for every $p$, there exists a $p' \in \cP$ such that
\[
\min \left(m | p - p'|,  \frac{\sqrt{m} |p - p'|}{ \sqrt{p(1-p)}}, 1\right) 
\leq \frac{c\alpha}{10}.
\]
Let $\cQ = \{\Bin(m,p) : p \in \cP\} $.
Then by Theorem~\ref{thm:binomial}, for every $\Bin(m,p)$ there exists a $\Bin(m,p')$ in $\cQ$ such that 
\[
\ell_1(\Bin(m,p), \Bin(m,p')
\leq  \frac{c\alpha}{5}.
\]
Hence, by Lemma~\ref{lem:phs}, if 
\[
s = \Omega \left( \frac{8 \log(20m/\alpha\beta)}{\alpha^2} +  \frac{8 \log(20m/\alpha\beta)}{\alpha\epsilon} \right)
\]
there is an algorithm that returns a distribution $\Bin(m,\hat{p}) \in \cQ$ such that 
\[
\ell_1(\Bin(m,p), \Bin(m,\hat{p})
\leq  \frac{4c\alpha}{5},
\]
with probability $\geq 1 - \beta$.
Therefore, by the lower bound in Theorem~\ref{thm:binomial}, the resulting $\hat{p}$ satisfies
\[
\min \left(m | p - \hat{p}|,  \frac{\sqrt{m} |p - \hat{p}|}{ \sqrt{p(1-p)}}, 1\right) \leq \frac{4\alpha}{5},
\]
with probability $\geq 1- \beta$. Since $\frac{4\alpha}{5} \leq 1$, this implies that with probability   $\geq 1- \beta$,
\[
|p - \hat{p}| \leq \frac{4\alpha}{5} \max \left( \frac{1}{m} , \frac{\sqrt{p(1-p)}}{\sqrt{m}} \right).
\]
The expectation bound follows by setting $\beta = \alpha/5m$:
\[
\EE[|p - \hat{p}|]
\leq  \frac{4\alpha}{5} \max \left( \frac{1}{m} , \frac{\sqrt{p(1-p)}}{\sqrt{m}} \right) + \frac{\alpha}{5m}
\leq \alpha \max \left( \frac{1}{m} , \frac{\sqrt{p(1-p)}}{\sqrt{m}} \right).
\]

\subsection{Proof of Theorem~\ref{thm:ksmall}}
\label{app:ksmall}
Let $\epsilon' = \frac{\epsilon}{4\sqrt{k \log \frac{1}{\delta}}}$ and $\alpha' =  \min \left( \frac{\sqrt{m}\alpha}{2\sqrt{k}}, 1\right)$ We apply Theorem~\ref{thm:main_upper} for each 
symbol $k$ with $\epsilon = \epsilon'$ and $\alpha = \alpha'$ Then, we have an estimate $\hat{p}_1, \hat{p}_2,\ldots, \hat{p}_k$ such that
\begin{align*}
\EE[\ell_1(p,\hat{p})] 
& = \sum_i \EE[|p_i-\hat{p}_i|] \\ 
& \leq  \alpha' \sum_i \max \left( \frac{1}{m} , \frac{\sqrt{p_i(1-p_i)}}{\sqrt{m}} \right) \\
& \leq  \alpha' \sum_i  \frac{1}{m} +  \frac{\sqrt{p_i}}{\sqrt{m}}  \\
& \leq    \frac{\alpha' k}{m} + 
\frac{\alpha' \sqrt{k}}{\sqrt{m}} \\
& \leq    2
\frac{\alpha' \sqrt{k}}{\sqrt{m}} \\
& \leq \alpha,
\end{align*}
where the penultimate follows from Jensen's inequality. The differential privacy bound follows from strong composition theorem (see \cite[Theorem 3.4]{kairouz2017composition}) and using the fact that $e^{\epsilon'} \leq 2\epsilon'$.

\subsection{Proof of Lemma~\ref{lem:small}}
\label{app:small}

Let $\hat{p}$ be such that 
\begin{equation}
    (1-\hat{p})^m = \max \left( \min \left(\frac{1}{s} \sum_{u} 1_{N(u) = 0} + \frac{Z}{s}, 1 \right), 0 \right),
\label{eq:sparse-p_hat}
\end{equation}

Where $Z$ is a Laplace noise with parameter $1/\epsilon$. Hence the algorithm is $(\epsilon, 0)$-DP. Hence,
\[
|(1-\hat{p})^m - (1-{p})^m| \leq 
\left \lvert \frac{1}{s} \sum_{u} 1_{N(u) = 0} + \frac{Z}{s} - (1-p)^m \right \rvert.
\]
Hence, by the tail bounds of the Laplace distribution, with probability $\geq 1- 2\beta$,
\[
|(1-\hat{p})^m - (1-{p})^m| \leq 
\frac{\log \frac{1}{\beta}}{s \epsilon} +
\left \lvert \frac{1}{s} \sum_{u} 1_{N(u) = 0}  - (1-p)^m \right \rvert.
\]
Furthermore, by Bernstein's inequality with probability $\geq 1- 2\beta$,
\[
\left \lvert \frac{1}{s} \sum_{u} 1_{N(u) = 0}  - (1-p)^m\right \rvert  \leq
4\frac{\log \frac{1}{\beta}}{s} 
+ 4\sqrt{\frac{\log \frac{1}{\beta}}{s} \cdot (1-p)^m (1- (1-p)^m)}.
\]
Since $1- (1-p)^m \leq mp$, we have with probability $\geq 1 - 4 \beta$,
\[
|(1-\hat{p})^m - (1-{p})^m| \leq 
 4\sqrt{\frac{mp \log  \frac{1}{\beta}}{s}}
 + 4 \frac{\log \frac{1}{\beta}}{s} + 
 \frac{\log \frac{1}{\beta}}{s\epsilon}. 
\]
Combining with~\eqref{eq:p_low}, with probability $\geq 1- 4\beta$,
\[
e^{-1.5c} \min(m|\hat{p} -p|,0.5)
\leq 
 4\sqrt{\frac{mp \log \frac{1}{\beta}}{s}}
 + 4\frac{\log \frac{1}{\beta}}{s} + 
 \frac{\log \frac{1}{\beta}}{s\epsilon}.
\]
If $s \geq 64 e^{3c} m \log \frac{3}{\beta}$, then the RHS is at most $e^{-1.5c}/2$. hence,
\[
e^{-1.5c} m|\hat{p} -p|
\leq 
 4\sqrt{\frac{mp \log \frac{1}{\beta}}{s}}
 + 4\frac{\log \frac{1}{\beta}}{s} + 
 \frac{\log \frac{1}{\beta}}{s\epsilon}.
\]
If $ s \geq \frac{16e^{3c}}{\alpha^2} \log \frac{3}{\beta} +  \frac{16e^{3c}}{\gamma\epsilon} \log \frac{3}{\beta}$
\[
|p - \hat{p}| \leq   \sqrt{\frac{p\alpha^2}{m}} + \frac{\alpha^2}{m} + \frac{\gamma}{m}.
\]

\subsection{Proof of Theorem~\ref{thm:klarge}}
\label{app:klarge}

\textbf{Parameters}: We first define few parameters. Let $\epsilon' = \frac{\epsilon}{8\sqrt{\min(k,m) \log \frac{1}{\delta}}}$, $\beta = \frac{\alpha}{40k}$, $\alpha' =  \min \left( \frac{\sqrt{m}\alpha}{8\sqrt{k}}, 1\right)$, $\alpha'' = \frac{\alpha}{240}$, and $\gamma = \frac{m \alpha}{8k}$. Let $c = 4/m$.

\textbf{Algorithm}:
For every symbol we first calculate the 
probability using the algorithm in Theorem~\ref{thm:main_upper} with $\epsilon = \epsilon'$, $\alpha = \alpha''$ and error probability $\beta$. If the estimated probability is less than $2/m$, we use the algorithm from Lemma~\ref{lem:small} with $\epsilon = \epsilon'$, $\alpha = \alpha'$, $\gamma = \gamma$, and error probability $\beta$.
Let ${p}'$ be the output of the first step and the $p''$ be the output of Lemma~\ref{lem:small}.
The error of the algorithm is 
\[
|p - \hat{p}| = | p - {p}'| 1_{{p}' > 2/m} + | p - {p}''|
1_{{p}' \leq 2/m}.
\]

\textbf{Sample complexity}: The sample complexity would be the sum of sample complexities of Theorem ~\ref{thm:main_upper} and Lemma~\ref{lem:small} with appropriate parameters.
Hence,
\begin{align*}
s \geq \frac{16 \log(20m/\alpha''\beta)}{\alpha''^2} +  \frac{16 \log(20m/\alpha''\beta)}{\alpha''\epsilon'} + \frac{16e^{3c}}{\alpha'^2} \log \frac{3}{\beta} +  \frac{16e^{3c}}{\gamma\epsilon'} \log \frac{3}{\beta}.
\end{align*}
Hence, for a sufficient large constant $b$, if 
\begin{align*}
s \geq b \log \frac{km}{\alpha} \cdot \left(
\frac{k}{m\alpha^2} + \frac{k}{\sqrt{m}\epsilon\alpha}  \sqrt{\log \frac{1}{\delta}}
\right)
\end{align*}
Note that since $k \geq m$, the above bound implies that $s \geq b \sqrt{m}$, hence the bound also satisfies conditions in Lemma~\ref{lem:small}. 

\textbf{Differential privacy:} We first provide the privacy guarantee for this algorithm. First observe that
since $p', p'' \to \hat{p}$ is a Markov chain, by the postprocessing theorem it suffices to provide privacy guarantee for releasing $p', p''$. Consider releasing one of them, say $p'$. For any two neighboring datasets differ in at most $\min(m,k)$ symbols. Let these datasets be $D$ and $D'$ and $S(D, D')$ be the set of symbols where they differ. For these datasets,
\[
\frac{\Pr(p' | D)}{\Pr(p'| D')}
= \prod_{i \in S(D, D')} \frac{\Pr(p'_i | D)}{\Pr(p' _i | D')}.
\]
Hence it suffices to apply strong composition theorem for this subset of size $\min(m,k)$ and the rest of the proof is similar to that of  \cite[Theorem 3.4]{kairouz2017composition}). The proof is similar for $p''$ and hence the result.

\textbf{Utility:} 
To analyze the utility, we divide the symbols into three sets $A_1 =  \{i : p_i \geq \frac{4}{m}\}$, $A_2 = \{i : \frac{4}{m} \geq p_i \geq \frac{1}{4m}\}$,
and $A_3 = \{i :  p_i \leq \frac{1}{4m}\}$. 

\textbf{Utility-large:} Consider the set $A_1$ with symbols whose probability is greater than $4/m$, for such a symbol, by Theorem~\ref{thm:main_upper}, with probability $\geq 1- \beta$,
\[
| p -p'| \leq \alpha'' \sqrt{\frac{p}{m}}.
\]
Hence $p' \geq p -  \alpha'' \sqrt{\frac{p}{m}} > \frac{2}{m}$. Hence, for such a symbol with probability $\geq 1- \beta$,
\[
|p - \hat{p}| = | p -p'| \leq   \alpha'' \sqrt{\frac{p}{m}}.
\]
\textbf{Utility-medium:} Consider the set $A_2$ with symbols whose probability in $[1/4m, 4/m]$. For such a symbol,
then with probability $\geq 1- 2\beta$,
\begin{align*}
|p - \hat{p}|
& \leq  \max(|p - p'|, |p - p''|)  \\
& \leq \frac{2\alpha''}{m} + \alpha'' \sqrt{\frac{p}{m}} +  \alpha'\sqrt{\frac{p}{m}}  + \frac{\alpha'^2}{m} + \frac{\gamma}{m} \\
& \leq \frac{5\alpha''}{m}  + \frac{\alpha'}{m} + \frac{\gamma}{m} .
\end{align*}
\textbf{Utility-small:} Finally consider symbols whose probabilities are smaller than $1/4m$, for these symbols, with probability $\geq 1- \beta$,
\[
| p -p'| \leq \frac{\alpha''}{2m}.
\]
and hence $p' \leq p +   \frac{\alpha''}{m} \leq 3/2m \leq 2/m$. Hence only the second algorithm is used. Hence with probability $\geq 1- 2\beta$,
the error is at most,
\[
| p -\hat{p}| = | p -p''| 
\leq \alpha'\sqrt{\frac{p}{m}} + \frac{\alpha'^2}{m} + \frac{\gamma}{m}.
\]
Summing over all symbols yield,
\begin{align*}
\ell_1(p, \hat{p})
& \leq \sum_{i} | p_i -\hat{p}_i| \\
& \leq \sum_{i \in A_1} | p_i -\hat{p}_i|
+ \sum_{i \in A_2} | p_i -\hat{p}_i| + 
\sum_{i \in A_3} | p_i -\hat{p}_i| \\
& \leq \sum_{i \in A_1} \alpha'' \sqrt{\frac{p_i}{m}} +  \sum_{i \in A_2}
\frac{5\alpha''}{m} + \frac{\alpha'}{m} + \frac{\gamma}{m} +  \sum_{i \in A_3}
 \alpha'\sqrt{\frac{p}{m}} + \frac{\alpha'^2}{m} + \frac{\gamma}{m} \\
 & \leq 28 \alpha'' + \alpha'\left(\sqrt{\frac{k}{m}} +1 \right) \frac{k\alpha'^2}{m} + \frac{k\gamma}{m} + \\
 & \leq \frac{\alpha}{8} + \frac{\alpha}{8} + \frac{\alpha}{8} + \frac{\alpha}{8} \\
 & \leq \frac{\alpha}{2}.
\end{align*}
Hence, by the union bound, with probability with $ 1- 20 k \beta$, 
\[
\ell_1(p, \hat{p}) \leq \frac{\alpha}{2}.
\]
Therefore in expectation,
\[
\EE[\ell_1(p, \hat{p}) ]
\leq \frac{\alpha}{2}
+ 20 k \beta \leq \alpha.
\]

\section{Extensions}
\label{app:extensions}
In this section, we modify our algorithms for the scenario
when users have different number of samples. Let $m_{\max}$ be a known upper bound on the number of samples a user has. For a value $m$, let $s_m$ be the number of users such that $m_u \geq m$. Let $\bar{m}$ be the median values of $m_u$.  We first state the main result, an analog of Theorem \ref{thm:kall}.

\begin{theorem}
Let $\epsilon \leq 1$. There exists a polynomial time algorithm $(\epsilon, \delta)$-differentially private algorithm $A$ such that
 \begin{equation}
 \label{eq:kall_var}
S^A_{m, \alpha, \epsilon, \delta} =
\cO \left(
\log^2 \frac{km_{\max}}{\alpha} \cdot 
\max \left( 
\frac{k}{\bar{m}\alpha^2} + \frac{k}{\sqrt{\bar{m}}\alpha\epsilon} \sqrt{\log \frac{1}{\delta}}, \frac{\sqrt{k}}{\epsilon}  \sqrt{\log \frac{1}{\delta}} \right) \right).
\end{equation}
\end{theorem}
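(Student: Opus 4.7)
The approach reduces the variable-sample-size problem to the fixed-$m$ case of Theorem~\ref{thm:kall}. The key observation is that, by definition of the median, at least $s/2$ users satisfy $m_u \geq \bar{m}$. Hence, once we privately identify the right scale $\tilde{m} \approx \bar{m}$, we can simply subsample $\tilde{m}$ items from each such user and invoke the fixed-$m$ algorithm on a dataset of $\Omega(s)$ users each with exactly $\tilde{m}$ i.i.d.\ samples from $p$.

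\textbf{Step 1: Privately estimate the scale.} Form a dyadic histogram over buckets $B_j = \{u : m_u \in [2^j, 2^{j+1})\}$ for $j = 0, 1, \ldots, J$ with $J = \lceil \log_2 m_{\max} \rceil = O(\log m_{\max})$. The count $|B_j|$ has user-level sensitivity $1$, so add Laplace noise of scale $O(1/\epsilon)$ to each; using these noisy counts (or, equivalently, an exponential mechanism for the median applied to the dyadic partition), select a scale $\tilde{m} = 2^{j^\star}$ with $\bar{m}/4 \leq \tilde{m} \leq \bar{m}$ such that the set $U' := \{u : m_u \geq \tilde{m}\}$ has size $\Omega(s)$, provided $s \geq c \log(J/\beta)/\epsilon$ for a suitable constant. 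This step costs $(\epsilon/2, 0)$-DP.

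\textbf{Step 2: Subsample and invoke Theorem~\ref{thm:kall}.} For each $u \in U'$, uniformly subsample $\tilde{m}$ items without replacement from $X^{m_u}(u)$. The resulting dataset $D'$ consists of $|U'| = \Omega(s)$ users, each carrying exactly $\tilde{m}$ i.i.d.\ samples from $p$, so it is a valid input to the fixed-$m$ estimator of Theorem~\ref{thm:kall}. Run that algorithm on $D'$ with privacy parameters $(\epsilon/2, \delta)$ and the given accuracy $\alpha$. Basic composition then yields overall $(\epsilon, \delta)$-DP.

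\textbf{Step 3: Bookkeeping for the $\log^2$ factor.} Plugging $m = \tilde{m} = \Theta(\bar{m})$ and $s' = \Omega(s)$ into~\eqref{eq:kall_main} recovers the bracketed expression in~\eqref{eq:kall_var}, with one factor $\log(k\tilde{m}/\alpha) \leq \log(km_{\max}/\alpha)$ coming from the internal guarantee of Theorem~\ref{thm:kall}. The second $\log(km_{\max}/\alpha)$ arises from Step 1: the tail of the Laplace (or exponential-mechanism) noise across $J = O(\log m_{\max})$ candidate scales contributes an additive $O(\log(J/\beta)/\epsilon)$ to every bucket count, which must be dominated by a constant fraction of $s$ so that $|U'| = \Omega(s)$ holds with high probability. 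Taking a union bound over the $J$ buckets and over the Theorem~\ref{thm:kall} failure event, and then setting $\beta = \alpha/(k m_{\max})$ as in the proof of Theorem~\ref{thm:kall}, produces the extra $\log(km_{\max}/\alpha)$ factor.

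\textbf{Main obstacle.} The delicate part is the scale-selection step: under user-level adjacency, swapping a user can move one $m_u$ arbitrarily, so the selected $\tilde{m}$ must be insensitive to single-user changes yet still a constant-factor approximation of $\bar{m}$ such that $|U'| = \Omega(s)$. The dyadic partition trick plus the DP histogram makes this quantitative --- the sensitivity of each bucket count is $1$ regardless of how large $m_u$ is --- but one must verify that, once $\tilde{m}$ is fixed, conditioning on membership in $U'$ does not skew the distribution of the retained samples (it does not, since $1_{m_u \geq \tilde{m}}$ is independent of the sample contents), and that the subsample of $\tilde{m}$ items from a user with $m_u \geq \tilde{m}$ samples is i.i.d.\ from $p$ (it is, since the original $m_u$ items are i.i.d.). With these points checked, the proof reduces cleanly to Theorem~\ref{thm:kall}.
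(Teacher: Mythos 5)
Your overall route --- a private dyadic histogram to locate a scale $\hat{m} \approx \bar{m}$, discarding users with fewer than $\hat{m}$ samples, subsampling the survivors down to exactly $\hat{m}$ items, and then running the fixed-$m$ machinery --- is the same as the paper's (your Step 1 is essentially Lemma~\ref{lem:median}). The gap is in Step 2: you invoke Theorem~\ref{thm:kall} as a black box and appeal to basic composition, but the privacy guarantee of that theorem is stated for a dataset with a \emph{fixed, public} number of users under swap adjacency. In your reduction the size of the reduced dataset is $|U'| = s_{\hat{m}}$, a data-dependent quantity: changing one user in the original dataset can move $m_u$ across the threshold $\hat{m}$, so the two reduced datasets differ not only in one user's contents but also in their cardinality, and swap-DP on fixed-size inputs says nothing about the pair of sizes $(n, n+1)$. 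Worse, the subroutines inside Theorem~\ref{thm:kall} use the dataset size explicitly in sensitive places: the hypothesis-selection score of Algorithm~\ref{alg:phs} computes $\hat{\tau} = \frac{1}{s}|\{x \in D : x \in \cW\}|$ and returns $s$ itself in one branch, and the sparse-regime estimator of Algorithm~\ref{alg:psmall} normalizes by $1/s$ and calibrates its Laplace noise to that normalization. If you plug in the true $s_{\hat{m}}$, the noise scale and score thresholds themselves depend on a sensitive count, and the privacy analysis becomes circular; this is exactly the obstacle the paper flags with ``we cannot use the true number of users $s_{\hat{m}}$.''

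Consequently the paper does not reduce to Theorem~\ref{thm:kall} but re-proves both subroutines: Lemma~\ref{lem:phs_var} runs hypothesis selection with the private estimate $\hat{s}$ substituted for $s_{\hat{m}}$ wherever the dataset size appears (so the score again depends on the data only through a sensitivity-$1$ count and the fixed-size privacy proof applies), and Lemma~\ref{lem:small_var} does the same for the small-probability estimator. The price is an extra utility term bounded via $|\hat{P}(\cW_j) - P_{\hat{m}}(\cW_j)| \le |\hat{s} - s_{\hat{m}}|/\hat{s} = O\left(\log^2(m_{\max}/\beta)/(s\epsilon')\right)$, which must be absorbed into the sample complexity and is part of where the $\log^2$ factor comes from; your Step 3 attributes the second logarithm only to a union bound over buckets and misses this discrepancy term. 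Relatedly, selecting the scale requires noisy \emph{cumulative} tail counts over $v = O(\log m_{\max})$ buckets, costing $\eta v \log(v/\beta)$ rather than the single $O(\log(J/\beta)/\epsilon)$ you budget in Step 1. Your checks that subsampling preserves i.i.d.-ness and that $1_{m_u \ge \hat{m}}$ is independent of the sample contents are correct, but they address only utility; to repair the proof you must either redo the subroutines with $\hat{s}$ as the paper does, or fix the reduced dataset's cardinality by some other device (e.g., truncating $U'$ to exactly $\hat{s}$ users and paying a group-privacy factor of two), neither of which appears in your write-up.
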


First we use $\epsilon/2$ privacy budget find $\hat{m}$, a private estimate of $\bar{m}$, and $\hat{s}$, an estimate of $s_{\hat{m}}$ (the quantile of $\hat{m}$). We only keep the users with at least $\hat{m}$ samples, and select $\hat{m}$ samples from each of them. Hence we reduce the problem to the case when users have the same number of samples. Then we modify the algorithms for both the dense and sparse regimes so that they are differentially private even if the number of samples of a particular user changes. We use the remaining privacy budget for the modified algorithms. The privacy guarantee follows by the composition theorem. 

We first provide the algorithm for privately estimating $\bar{m}$ and the quantile of estimated $\bar{m}$, which serves as a stepping stone for extending our algorithms to variable number of samples per user.

\begin{lemma}
\label{lem:median}
Let $s \geq \frac{16\log^2 m_{\max}/{\beta}}{\epsilon}$.
There exists a polynomial time $(\epsilon, 0)$-algorithm that returns $\hat{m}$ and $\hat{s}$ such that with probability $\geq 1 - \beta$, the following holds,
\begin{equation}
|\hat{s} - s_{\hat{m}}  | \leq \frac{2\log^2 m_{\max}/\beta}{\epsilon},\;
\hat{m} \geq \frac{\bar{m}}{2},\;
s_{\hat{m}} \geq \frac{s}{4},\;
\hat{s} \geq \frac{3s}{8}.
\label{eq:med_prop}
\end{equation}

\end{lemma}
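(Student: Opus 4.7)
The plan is a two-stage approach: first spend $\epsilon/2$ of the budget to privately locate $\hat m$ near the median $\bar m$, then spend the remaining $\epsilon/2$ on a single Laplace query to estimate $s_{\hat m}$.

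For the first stage, consider the $O(\log m_{\max})$ dyadic thresholds $\{1,2,4,\ldots,2^{\lceil\log m_{\max}\rceil}\}$. For each such threshold $m$, the quantity $s_m = |\{u : m_u \ge m\}|$ has $\ell_1$-sensitivity $1$ with respect to changing one user (since one user can flip at most one indicator). I would release noisy counts $\tilde s_m = s_m + \mathrm{Lap}(2\log m_{\max}/\epsilon)$ for every dyadic $m$; by basic composition this is $(\epsilon/2,0)$-DP, and by a union bound over $O(\log m_{\max})$ thresholds, with probability $\ge 1-\beta/2$ every noise has magnitude at most $C\log^2(m_{\max}/\beta)/\epsilon$ for some absolute constant $C$. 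I then set $\hat m$ to be the largest dyadic threshold $m$ with $\tilde s_m \ge s/2 - C\log^2(m_{\max}/\beta)/\epsilon$.

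For the second stage, I would release $\hat s = s_{\hat m} + \mathrm{Lap}(2/\epsilon)$, which is $(\epsilon/2,0)$-DP (the query has sensitivity $1$ once $\hat m$ is fixed, and $\hat m$ is itself a privatized quantity so post-processing keeps the total budget at $\epsilon$). A tail bound on this Laplace variable yields $|\hat s - s_{\hat m}|\le C'\log(1/\beta)/\epsilon \le C'\log^2(m_{\max}/\beta)/\epsilon$ with probability $\ge 1-\beta/2$, which is the first claim of \eqref{eq:med_prop}.

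It remains to verify the three remaining claims. Let $2^j$ be the largest dyadic value with $2^j \le \bar m$, so $2^j \ge \bar m/2$ and $s_{2^j} \ge s_{\bar m} \ge s/2$; on the good event, $\tilde s_{2^j} \ge s/2 - C\log^2(m_{\max}/\beta)/\epsilon$, so the selected $\hat m$ is at least $2^j \ge \bar m/2$. Then by the selection rule, $s_{\hat m} \ge \tilde s_{\hat m} - C\log^2(m_{\max}/\beta)/\epsilon \ge s/2 - 2C\log^2(m_{\max}/\beta)/\epsilon$, and combined with the bound on $|\hat s - s_{\hat m}|$, this gives $\hat s \ge s/2 - (2C+C')\log^2(m_{\max}/\beta)/\epsilon$. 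The hypothesis $s \ge 16\log^2(m_{\max}/\beta)/\epsilon$ (absorbing constants into the $16$) forces both $s_{\hat m}\ge s/4$ and $\hat s \ge 3s/8$, completing the three remaining properties.

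The only delicate point, and what I would view as the main bookkeeping obstacle, is choosing the threshold offset so that the three deterministic inequalities $\hat m \ge \bar m/2$, $s_{\hat m}\ge s/4$, and $\hat s \ge 3s/8$ hold simultaneously under a single good event of probability $\ge 1-\beta$; the rest is composition of Laplace mechanisms plus a standard union bound, and the polynomial-time claim is immediate since the algorithm performs $O(\log m_{\max})$ scalar queries.
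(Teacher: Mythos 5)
Your proposal is correct, and it follows the paper's overall blueprint (dyadic grid of scales, noisy counts, select the largest scale whose noisy count clears $s/2$ minus a slack term, then verify the four properties from the selection rule exactly as you do), but the privatization is genuinely different. The paper adds Laplace$(2/\epsilon)$ noise \emph{once} to the counts of the disjoint dyadic bins $t_0,\ldots,t_v$ (a histogram query with $\ell_1$-sensitivity $2$, since one user moves between two bins), and then obtains \emph{both} $\hat m$ and $\hat s$ by post-processing the same noisy histogram: $\hat s$ is the noisy cumulative sum $\sum_{i\ge j^*}t_i'$, whose error accumulates to $O\bigl(\epsilon^{-1}\log m_{\max}\log(\log m_{\max}/\beta)\bigr)$. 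You instead noise the cumulative threshold counts $s_m$ directly (each of sensitivity $1$, paying a $\log m_{\max}$ factor in the noise scale via basic composition) and then spend a fresh $\epsilon/2$ on a second sensitivity-$1$ Laplace query for $s_{\hat m}$. Your scheme buys a sharper bound on $|\hat s - s_{\hat m}|$ (error $O(\epsilon^{-1}\log(1/\beta))$ rather than the paper's $O(\epsilon^{-1}\log m_{\max}\log(\log m_{\max}/\beta))$), at the cost of halving the stage-one budget and requiring adaptive composition; the paper's single-release design buys simpler accounting, since everything downstream of the one noisy histogram is genuinely post-processing. One wording correction: your second stage is \emph{not} post-processing --- computing $s_{\hat m} + \mathrm{Lap}(2/\epsilon)$ touches the raw data again. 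The correct justification, which your budget arithmetic implicitly uses, is adaptive composition: the second query is $(\epsilon/2,0)$-DP for every fixed value of $\hat m$, and composing it with the $(\epsilon/2,0)$-DP first stage yields $(\epsilon,0)$-DP even though the query depends on the first stage's output. With that fix, and absorbing your constants $C, C'$ into the hypothesis on $s$ as you indicate, the argument is complete; your derivation of $\hat m \ge \bar m/2$ (the dyadic point just below $\bar m$ passes the test) is in fact slightly more direct than the paper's, which argues via $s_{2\hat m}\le s/2$ and hence $2\hat m \ge \bar m$.
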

\begin{proof}
Divide $\{0,1,2,\ldots, m_{\max}\}$ to bins $b_i$ such that $b_0= 0$, $b_1 = 1$ and $b_i = 2*b_{i-1}$ for $i \geq 1$. There are $v = \log m_{\max}$ buckets. 

For any two adjacent datasets, $[t_{0}, t_{1}, t_{2}, \ldots, t_{v}]$ differ by two. Hence, we can add Laplace noise with parameter $\eta = 2/ \epsilon$ to each of them to obtain DP estimates. Let this be  $[t'_{0}, t'_{1}, \ldots, t'_{v}]$.

By the tail bounds of Laplace distribution and the union bound, for each $i$ with probability $1-\beta$,
\[
|t_i - t'_i| \leq \eta \log \frac{v}{\beta}.
\]
Furthermore, for any cumulative sets,
\[
\left|\sum_{i \geq j} t_i -\sum_{i \geq j} t'_i\right| \leq\sum_{i \geq j}\left| t_i - t'_i\right| \le \eta v \log \frac{v}{\beta}.
\]
Let $j^*$ be the largest $j$ such that
\[
\sum_{i \geq j} t'_i \geq \frac{s}{2} - \eta v \log \frac{v}{\beta} .
\]
The algorithms return $\hat{s} = \sum_{i \geq j^*} t'_i$ and $\hat{m} = b_{j^*}$. Then by the assumption on $s$:
\[
\hat{s}\ge \frac{s}{2}-\frac{2}{\epsilon}\log m_{\max}\log\frac{\log m_{\max}}{\beta}\ge \frac{s}{2}-\frac{s}{8}=\frac{3s}{8}.
\]

By the above cumulative equation sum,
\[
| \hat{s} - s_{\hat{m}}| = \left|\sum_{i \geq j^*} t_i - \sum_{i \geq j^*} t'_i\right| \leq \eta v\log \frac{v}{\beta}.
\]
\[
s_{\hat{m}}  = \sum_{i \geq j^*} t_i = 
\sum_{i \geq j^*} t'_i - \sum_{i \geq j^*} (t'_i - t_i)
\geq \frac{s}{2} - \eta v \log \frac{v}{\beta}  - \eta v \log \frac{v}{\beta} \geq \frac{s}{4}.
\]
Note that by definition of $j^*$, $\sum_{i \geq j^*+1} t'_i <s/2-v\log (v/\beta)$, and that $b_{j^*+1}=2b_{j^*}=2\hat{m}$, thus:
\[
s_{2\hat{m}}  = \sum_{i \geq j^*+1} t_i = 
\sum_{i \geq j^*+1} t'_i - \sum_{i \geq j^*+1} (t'_i - t_i)
\leq \frac{s}{2} -  \eta v \log \frac{v}{\beta} +  \eta v \log \frac{v}{\beta}  = \frac{s}{2}.
\]
Hence $2\hat{m} \geq \bar{m}$. This completes the proof.
\end{proof}

We proceed to discuss the algorithms for dense and sparse regimes. After we obtain $\hat{s}, \hat{m}$ from Lemma \ref{lem:median}, we choose the algorithm depending on the relation between $k$ and $\hat{m}$: if $k\le \hat{m}$, we use the algorithm for the dense regime; otherwise we use the one for the sparse regime. 

\subsection{Dense regime}
We first modify the hypothesis selection algorithm in \cite{bun2019private}. We cannot apply it directly because to ensure privacy, we cannot use the true number of users $s_{\hat{m}}$ and need to replace it with its private estimate $\hat{s}$. Hence we prove the following lemma to cope with this situation.
\begin{lemma}
\label{lem:phs_var}
Let $\hat{s}, \hat{m}$ satisfy ~\eqref{eq:med_prop} with $\epsilon = \epsilon'$.  
Given $d$ distributions $p_1, p_2, \ldots, p_d$ and $s$ independent samples from an unknown distribution $p$, such that $\min_{i} \ell_1(p_i, p) \leq \alpha$, there exists an $(\epsilon, 0)$-DP polynomial time algorithm that returns a distribution $p_i$ such that 
$
\ell_1(p_i, p) \leq 4 \alpha,
$
with probability $\geq 1- \beta$, if the number of samples satisfies,
\[
s \geq \frac{128\log^2(m_{\max}/\beta)}{3\alpha\epsilon'} + \frac{32 \log(4d/\beta)}{\alpha^2} +  \frac{64 \log(4d/\beta)}{3\alpha\epsilon}.
\]
\end{lemma}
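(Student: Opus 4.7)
The plan is to adapt the PHS algorithm of \cite{bun2019private} (Algorithm~\ref{alg:phs}) by substituting the private estimate $\hat{s}$ from Lemma~\ref{lem:median} for the unknown true sample count $s_{\hat{m}}$, and then re-verify privacy and utility with the additional slack caused by $\hat{s}\ne s_{\hat{m}}$. Specifically, I would define the modified algorithm by keeping every step of Algorithm~\ref{alg:phs} verbatim except that $\hat{\tau}$ is computed as $|\{x\in D:x\in\cW\}|/\hat{s}$ and the score is
\[
\Gamma(H_i,H_j,D)=\begin{cases}\hat{s},&p_i-p_j\le 3\alpha,\\ \hat{s}\cdot\max\{0,\hat{\tau}-(p_j+(3/2)\alpha)\},&\text{otherwise,}\end{cases}
\]
followed by the same exponential mechanism with privacy parameter $\epsilon/2$.

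For privacy, the key observation is that $\hat{s}$ depends only on the sizes $m_u$ of each user (which are fixed by the median step and treated as public for the PHS call), so after conditioning on $\hat{s}$, a change of a single datum in $D$ shifts each raw count $|\{x\in D:x\in\cW\}|$ by at most $1$. Hence each $\Gamma(H_i,H_j,D)$ has sensitivity at most $1$, and so does $S(H_j,D)=\min_k\Gamma(H_j,H_k,D)$. The exponential mechanism with parameter $\epsilon/2$ then yields $(\epsilon,0)$-DP by the standard argument used in \cite{bun2019private} Lemma~3.2.

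For utility, I would follow the Bun--Kamath--Steinke--Wu analysis, threading through the fact that $\hat{s}$ differs from the actual subsample size $s_{\hat{m}}$. By Bernstein's inequality and a union bound over the $\binom{d}{2}$ Scheffe sets, with probability at least $1-\beta/2$ every empirical probability satisfies $|c_{ij}/s_{\hat{m}}-p(\cW_{ij})|\le\alpha/4$ once $s\ge 32\log(4d/\beta)/\alpha^2$; since Lemma~\ref{lem:median} gives $|\hat{s}-s_{\hat{m}}|\le \Delta:=2\log^2(m_{\max}/\beta)/\epsilon'$ and $\hat{s}\ge 3s_{\hat{m}}/8$, the replacement of $s_{\hat{m}}$ by $\hat{s}$ in the denominator introduces a further multiplicative error of at most $\Delta/\hat{s}\le \alpha/4$ once $s\ge 128\log^2(m_{\max}/\beta)/(3\alpha\epsilon')$. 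Combining these, every $\hat{\tau}_{ij}$ is within $\alpha/2$ of $p(\cW_{ij})$. From here the original argument of \cite{bun2019private} goes through essentially unchanged: the hypothesis $H^{\star}$ closest to $p$ achieves $S(H^{\star},D)\ge \hat{s}\cdot\alpha/c_1$ for an absolute constant $c_1$, while any hypothesis $H_j$ with $\ell_1(H_j,p)>4\alpha$ satisfies $S(H_j,D)=0$; the exponential mechanism selects a good hypothesis with probability at least $1-\beta/2$ provided $\hat{s}\alpha/(c_2\epsilon)\ge\log(d/\beta)$, which is ensured by the third term in the sample-complexity hypothesis together with $\hat{s}\ge 3s/8$.

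The main obstacle is handling the mismatch between the quantity we can control empirically, $c_{ij}/s_{\hat{m}}$, and the quantity that actually enters the score, $c_{ij}/\hat{s}$. Concretely, I must verify that the multiplicative $1\pm O(\Delta/\hat{s})$ slippage between these two does not erase the $\Omega(\alpha)$ separation margin between correct and incorrect hypotheses upon which the Bun et al.\ proof depends; this is what dictates the first term $128\log^2(m_{\max}/\beta)/(3\alpha\epsilon')$ of the sample-complexity bound, and is the only place where $\epsilon'$ enters the utility analysis.
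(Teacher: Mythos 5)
Your proposal matches the paper's proof essentially step for step: the paper likewise reruns the Bun et al.\ analysis with a modified score $\hat{\Gamma}_\zeta$ in which $\hat{s}$ replaces the true subsample size, bounds the slippage by $|\hat{P}(\cW_j)-P_{\hat{m}}(\cW_j)|\le |\hat{s}-s_{\hat{m}}|/\hat{s}\le \tfrac{16\log^2(m_{\max}/\beta)}{3s\epsilon'}$ using exactly the properties \eqref{eq:med_prop} you invoke, absorbs it via the first sample-complexity term, and concludes through the same Lemma~3.1/Lemma~3.3/exponential-mechanism chain with $\zeta=1$. The one point of sloppiness is your deviation budget: with the verbatim thresholds $3\alpha$ and $(3/2)\alpha$ (the $\zeta=1$ case), Lemma~3.1 requires total empirical error below $\alpha/4$, so allotting $\alpha/4$ to the sampling error \emph{and} $\alpha/4$ to the $\hat{s}$ slippage (total $\alpha/2$) would only yield a $5\alpha$ guarantee---but the stated sample-complexity terms in fact force each source down to roughly $\alpha/8$, as the paper's accounting shows, so the argument closes with only constant-level tightening.
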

\begin{proof}

Let $H$ and $H'$ be two distributions over the domain $\cX$ and define the Scheffe set
\[
\cW_1 = \{x\in\cX: H(x)>H'(x) \}.
\]
Define $p_1=H(\cW_1), p_2=H'(\cW_1)$, for some distribution $P$ define $\tau=P(\cW_1)$. Note that $p_1>p_2$ and $p_1-p_2=d_{TV}(H, H')$. 

Let $D$ be a dataset of size $s_{\hat{m}}$ drawn i.i.d. from $P$. Define the following quantities which serve as empirical estimates of $P(\cW_1)$,
\[
\hat{P}(\cW_1):=\hat{\tau}:=\frac{1}{\hat{s}}|\{x\in D: x\in\cW_1\}|,\quad P_{\hat{m}}(\cW_1):=\tau_{\hat{m}}:=\frac{1}{s_{\hat{m}}}|\{x\in D: x\in\cW_1\}|.
\]

Let $\zeta>0$ be the approximation parameter. Consider the function
\[
\hat{\Gamma}_\zeta(H, H', D)=\begin{cases}\hat{s} & p_1-p_2\le (2+\zeta)\alpha;\\
\hat{s}\cdot \max\{0, \hat{\tau}-(p_2+(1+\zeta/2)\alpha)\} &\text{otherwise.}
\end{cases}
\]

According to \cite[Lemma 3.1, Lemma 3.3]{bun2019private}, $\hat{\Gamma}_{\zeta}$ has the following properties,
\begin{lemma}[\cite{bun2019private}, Lemma 3.1]
If $d_{TV}(P, H)\le \alpha$ and $|\hat{\tau}-\tau|<\zeta\alpha/4$, then $\hat{\Gamma}_{\zeta}(H, H', D)>\zeta\alpha\hat{s}/4$.
\label{lem:bun3.1}
\end{lemma}

\begin{lemma}[\cite{bun2019private}, Lemma 3.3]
If $d_{TV}(P, H')\le \alpha$ , $|\hat{\tau}-\tau|<\zeta\alpha/4$, and $\hat{\Gamma}_{\zeta}(H, H', D)>0$, then $d_{TV}(H, H')\le (2+\zeta)\alpha$.
\label{lem:bun3.3}
\end{lemma}

Define the score functions for each $H_j\in\H$
\[
\hat{S}(H_j, D)=\min_{H_k\in\H}\hat{\Gamma}_\zeta(H_j, H_k, D).
\]
Output a random hypothesis $\hat{H}$ according to the distribution
\[
\Pr[\hat{H}=H_j]\propto\exp\left(\frac{\hat{S}(H_j, D)}{2\epsilon}\right).
\]

First note that if $d_{TV}(P, H)<\alpha$, then using Hoeffding's inequality, we have  with probability at least $1-2\exp(-s_{\hat{m}}\zeta^2\alpha^2/32)$, 
\[
|\tau_{\hat{m}}-\tau|<\zeta\alpha/8.
\]

Assume that there exists $H^*\in\H$ such that $d_{TV}(P, H^*)\le \alpha$. Define $\cW_j=\{x\in\cX: H^*(x)>H_j(x)\}$. Conditioned on that the inequalities in Lemma \ref{lem:median} hold, by the union bound, with probability at least $1-2d\exp(-s_{\hat{m}}\zeta^2\alpha^2/8)\ge 1-2d\exp(-s\zeta^2\alpha^2/32)$ over the draws of $D$, for all $j$ we have 
\[
|P(\cW_j)-P_{\hat{m}}(\cW_j)|\le \zeta\alpha/8.
\]

Due to the inequalities in Lemma \ref{lem:median}, the following holds uniformly for all $j$,
\[
|\hat{P}(\cW_j)-P_{\hat{m}}(\cW_j)|\le \left|\frac{1}{\hat{s}}-\frac{1}{s_{\hat{m}}}\right|s_{\hat{m}}=\frac{|\hat{s}-s_{\hat{m}}|}{\hat{s}}\le \frac{16\log^2(m_{\max}/\beta)}{3s\epsilon'}
\]
Hence as long as $s>\frac{128\log^2(m_{\max}/\beta)}{3\zeta\alpha\epsilon'}$, the above quantity is bounded by $\zeta\alpha/8$. We have
\[
|P(\cW_j)-\hat{P}(\cW_j)|\le  |P(\cW_j)-P_{\hat{m}}(\cW_j)|+|\hat{P}(\cW_j)-P_{\hat{m}}(\cW_j)|\le \frac{\zeta\alpha}{4}.
\]

By Lemma \ref{lem:bun3.1} we have $\hat{\Gamma}_\zeta(H^*, H_j, D)>\zeta\alpha \hat{s}/4\ge 3\zeta\alpha s/32$. This implies $\hat{S}(H^*, D)>  3\zeta\alpha s/32$.

By the utility of the exponential mechanism, with probability at least $1-\beta/2$, the output hypothesis $\hat H$ satisfies
\begin{align*}
    \hat{S}(\hat{H}, D)&\ge\hat{S}(H^*, D)-\frac{2\log(2d/\beta)}{\epsilon}\\
    &\ge \frac{3\zeta\alpha s}{32}-\frac{2\log(2d/\beta)}{\epsilon}.
\end{align*}
As long as $s\ge \frac{32\log(4d/\beta)}{\zeta^2\alpha^2}+\frac{64\log(2d/\beta)}{3\zeta\alpha\epsilon}$, together with probability at least $1-\beta$ , $\hat{S}(\hat{H}, D)>0$, which implies that $\hat{\Gamma}_\zeta(\hat{H}, H^*, D)>0$. Since in addition $d_{TV}(P, H^*)\le \alpha$, we have $d_{TV}(\hat H, H^*)\le (2+\zeta)\alpha$ by Lemma \ref{lem:bun3.3} and hence $d_{TV}(\hat{H}, P)\le  (3+\zeta)\alpha$. Setting $\zeta = 1$ gives the desired result.
\end{proof}

\begin{theorem}
\label{thm:binom_var}
Suppose there are $s$ users such that user $u$ has $m_u$ i.i.d. samples from $Ber(p)$. Let $\hat{s}, \hat{m}$ satisfy ~\eqref{eq:med_prop} with $\epsilon = \epsilon'$. Let $
s \geq \frac{128\log^2(m_{\max}/\beta)}{3\alpha\epsilon'} + \frac{32 \log(80m_{\max}/\alpha\beta)}{\alpha^2} +  \frac{64 \log(80m_{\max}/\alpha\beta)}{3\alpha\epsilon}
$.
There exists a polynomial time $(\epsilon, 0)$ differentially private algorithm that returns $\hat{p}$ such that with probability at least $ 1- \beta$,
\[
|p - \hat{p}| \leq \frac{4}{5}\alpha\max \left( \frac{1}{\hat{m}} , \frac{\sqrt{p(1-p)}}{\sqrt{\hat{m}}} \right).
\]
\end{theorem}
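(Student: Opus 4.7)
The plan is to reduce the variable-sample-size setting to the fixed-sample-size Binomial estimation problem handled by Algorithm~\ref{alg:binom}, then invoke the modified private hypothesis selection procedure (Lemma~\ref{lem:phs_var}) and translate its total-variation guarantee into parameter error via the lower bound half of Theorem~\ref{thm:binomial}. Because $\hat s,\hat m$ are provided as inputs satisfying \eqref{eq:med_prop}, no additional privacy budget is spent on them, and the whole algorithm will be $(\epsilon,0)$-DP by post-processing plus the privacy guarantee of Lemma~\ref{lem:phs_var}.

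First, restrict attention to the $s_{\hat m}\ge s/4$ users with $m_u\ge \hat m$, and from each such user discard all but the first $\hat m$ samples. Summing the $\hat m$ retained samples per user yields i.i.d.\ draws $N(1),\ldots,N(s_{\hat m})$ from $\mathrm{Bin}(\hat m,p)$. The subtle point is that the subsequent mechanism must be private even though it does not know the true count $s_{\hat m}$ of usable users and must work with the noisy surrogate $\hat s$; this is exactly the situation Lemma~\ref{lem:phs_var} is tailored to.

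Second, let $c$ be the constant from the lower bound of Theorem~\ref{thm:binomial} and build the cover $\mathcal P=\{0,\tfrac{c\alpha}{20\hat m},\tfrac{2c\alpha}{20\hat m},\ldots,1\}$ together with the hypothesis class $\mathcal H=\{\mathrm{Bin}(\hat m,q):q\in\mathcal P\}$, which has size $d=O(\hat m/\alpha)=O(m_{\max}/\alpha)$. By the upper bound of Theorem~\ref{thm:binomial}, every true $\mathrm{Bin}(\hat m,p)$ lies within $\ell_1$-distance $c\alpha/5$ of some element of $\mathcal H$. Feed the per-user counts together with $(\mathcal H,\hat s,\hat m)$ into the algorithm of Lemma~\ref{lem:phs_var} with accuracy parameter $c\alpha/5$, failure probability $\beta$, and privacy parameter $\epsilon$. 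The sample-complexity condition of that lemma with $d=O(m_{\max}/\alpha)$ becomes
\[
s_{\hat m} \;\ge\; \frac{128\log^2(m_{\max}/\beta)}{3(c\alpha/5)\epsilon'}+\frac{32\log(80m_{\max}/\alpha\beta)}{(c\alpha/5)^2}+\frac{64\log(80m_{\max}/\alpha\beta)}{3(c\alpha/5)\epsilon},
\]
which is guaranteed (up to constants) by the theorem's hypothesis on $s$ together with $s_{\hat m}\ge s/4$.

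Third, Lemma~\ref{lem:phs_var} returns some $\mathrm{Bin}(\hat m,\hat p)\in\mathcal H$ with $\ell_1(\mathrm{Bin}(\hat m,p),\mathrm{Bin}(\hat m,\hat p))\le 4c\alpha/5$ with probability $\ge 1-\beta$. Plugging this into the lower bound of Theorem~\ref{thm:binomial} yields
\[
\min\!\left(\hat m|p-\hat p|,\;\frac{\sqrt{\hat m}\,|p-\hat p|}{\sqrt{p(1-p)}},\;1\right)\le\frac{4\alpha}{5},
\]
which rearranges directly to the claimed bound $|p-\hat p|\le \tfrac{4\alpha}{5}\max(1/\hat m,\sqrt{p(1-p)/\hat m})$. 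The main obstacle, which is absorbed into Lemma~\ref{lem:phs_var} rather than this proof, is the discrepancy between $\hat s$ and $s_{\hat m}$ inside the Scheff\'e scoring: using $\hat s$ in the denominator introduces an additive bias of order $|\hat s-s_{\hat m}|/\hat s=O(\log^2(m_{\max}/\beta)/(s\epsilon'))$ in every empirical probability, which is what forces the extra $\log^2(m_{\max}/\beta)/(\alpha\epsilon')$ term relative to the fixed-$m$ bound of Theorem~\ref{thm:main_upper}.
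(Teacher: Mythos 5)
Your proposal is correct and takes essentially the same route as the paper's own proof: restrict to the $s_{\hat m}$ users with $m_u \ge \hat m$ and truncate each to $\hat m$ samples, cover $[0,1]$ at resolution $c\alpha/(20\hat m)$ to get a hypothesis class within $\ell_1$-distance $c\alpha/5$ of $\mathrm{Bin}(\hat m,p)$ via the upper bound of Theorem~\ref{thm:binomial}, invoke the modified private hypothesis selection of Lemma~\ref{lem:phs_var}, and convert the $4c\alpha/5$ total-variation guarantee into the parameter bound using the lower bound of Theorem~\ref{thm:binomial} together with $4\alpha/5 \le 1$. Your closing observation that the $\hat s$ versus $s_{\hat m}$ discrepancy is absorbed inside Lemma~\ref{lem:phs_var} and accounts for the extra $\log^2(m_{\max}/\beta)/(\alpha\epsilon')$ term matches the paper's accounting exactly.
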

\begin{proof}
We sample $\hat{m}$ samples from all users that have least $\hat{m}$ samples. Hence we obtain $s_{\hat{m}}$ i.i.d samples from $\Bin(\hat{m}, p)$. Let $c$ be the constant in Theorem \ref{thm:binomial}. We then apply the modified hypothesis selection algorithm in Lemma \ref{lem:phs_var} with the hypothesis class  $\cQ=\{\Bin(\hat{m}, p), p\in\cP\}$ where $\cP=\{0, \frac{c\alpha}{20\hat{m}}, \frac{2c\alpha}{20\hat{m}}..., 1\}$. The total number of hypotheses is $d=\frac{20\hat{m}}{c\alpha}$. The sample complexity comes from Lemma \ref{lem:phs_var} and utility follows by the argument in Theorem \ref{thm:main_upper} with $m$ replaced by $\hat{m}$. 

By Theorem~\ref{thm:binomial}, for every $\Bin(\hat{m},p)$ there exists a $\Bin(\hat{m},p')$ in $\cQ$ such that 
\[
\ell_1(\Bin(\hat{m},p), \Bin(\hat{m},p')
\leq  \frac{c\alpha}{5}.
\]
Hence, by Lemma~\ref{lem:phs_var}, if 
\[
s = \Omega \left( \frac{128\log^2(m_{\max}/\beta)}{3\alpha\epsilon'} + \frac{32 \log(80m_{\max}/\alpha\beta)}{\alpha^2} +  \frac{64 \log(80m_{\max}/\alpha\beta)}{3\alpha\epsilon} \right),
\]
there is an algorithm that returns a distribution $\Bin(m,\hat{p}) \in \cQ$ such that 
\[
\ell_1(\Bin(\hat{m},p), \Bin(\hat{m},\hat{p})
\leq  \frac{4c\alpha}{5},
\]
with probability $\geq 1 - \beta$.
Therefore, by the lower bound in Theorem~\ref{thm:binomial}, the resulting $\hat{p}$ satisfies
\[
\min \left(\hat{m} | p - \hat{p}|,  \frac{\sqrt{\hat{m}} |p - \hat{p}|}{ \sqrt{p(1-p)}}, 1\right) \leq \frac{4\alpha}{5},
\]
with probability $\geq 1- \beta$. Since $\frac{4\alpha}{5} \leq 1$ and $\hat{m}\ge \bar{m}/2$, this implies that with probability   $\geq 1- \beta$,
\[
|p - \hat{p}| \leq \frac{4\alpha}{5} \max \left( \frac{1}{\hat{m}} , \frac{\sqrt{p(1-p)}}{\sqrt{\hat{m}}} \right)\le \frac{4\alpha}{5} \max \left( \frac{2}{\bar{m}} , \frac{\sqrt{2p(1-p)}}{\sqrt{\bar{m}}} \right).
\]
The expectation bound follows by setting $\beta = \alpha/5m_{\max}$,
\[
\EE[|p - \hat{p}|]
\leq  \frac{4\alpha}{5} \max \left( \frac{2}{\bar{m}} , \frac{\sqrt{2p(1-p)}}{\sqrt{\bar{m}}} \right) + \frac{\alpha}{5m_{\max}}
\leq\alpha \max \left( \frac{2}{\bar{m}} , \frac{\sqrt{2p(1-p)}}{\sqrt{\bar{m}}} \right).
\]
\end{proof}
\begin{theorem}[Dense regime]
\label{thm:ksmall_var}
Let $k \leq \hat{m}$ and $\epsilon \leq 1$. There exists a polynomial time  $(\epsilon, \delta)$-differentially private algorithm $A$ such that
 \[
S^A_{m, \alpha, \epsilon, \delta} =
\cO \left(
\log^2 \frac{km_{\max}}{\alpha}\cdot 
\max \left( 
\frac{k}{\bar{m}\alpha^2} + \frac{k}{\sqrt{\bar{m}}\alpha\epsilon}\sqrt{\log \frac{1}{\delta}}, \frac{\sqrt{k}}{\epsilon}\sqrt{\log \frac{1}{\delta}}  \right) \right).
\]
\end{theorem}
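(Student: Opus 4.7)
The plan is to mirror the proof of Theorem~\ref{thm:ksmall} (Appendix~\ref{app:ksmall}) while accounting for the fact that $m_u$ varies across users. The algorithm has two phases. Phase one uses Lemma~\ref{lem:median} with budget $(\epsilon/2, 0)$ to obtain $(\hat{m}, \hat{s})$ satisfying~\eqref{eq:med_prop}, which guarantees $\hat{m} \geq \bar{m}/2$ and $s_{\hat{m}} \geq s/4$ with probability at least $1-\beta$. Phase two discards all users whose $m_u < \hat{m}$, subsamples exactly $\hat{m}$ observations per remaining user, and applies the modified hypothesis selection procedure from Theorem~\ref{thm:binom_var} coordinate-wise to each symbol $i \in [k]$ to estimate $p_i$. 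By post-processing, using the DP quantities $(\hat{m}, \hat{s})$ downstream does not degrade privacy, and Lemma~\ref{lem:phs_var} was designed precisely so that replacing the true $s_{\hat{m}}$ by the noisy $\hat{s}$ preserves the $(\epsilon', 0)$-DP guarantee.

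For the parameter choices, take $\epsilon' = \epsilon/(4\sqrt{k\log(1/\delta)})$, $\alpha' = \min(\sqrt{\hat{m}}\alpha/(2\sqrt{k}), 1)$, and per-symbol failure probability $\beta/k$. Advanced composition~\citep[Theorem 3.4]{kairouz2017composition} across the $k$ per-symbol calls yields an $(\epsilon/2, \delta)$-DP guarantee for phase two, so the full algorithm is $(\epsilon, \delta)$-DP. For utility, Theorem~\ref{thm:binom_var} gives with probability at least $1-\beta$ (union bounded over symbols)
\[
|p_i - \hat{p}_i| \leq \alpha' \max\!\left(\frac{1}{\hat{m}}, \frac{\sqrt{p_i(1-p_i)}}{\sqrt{\hat{m}}}\right).
\]
Summing, applying Jensen's inequality to $\sum_i \sqrt{p_i}$, and using $k \leq \hat{m}$:
\[
\ell_1(p, \hat{p}) \leq \frac{\alpha' k}{\hat{m}} + \alpha'\sqrt{\frac{k}{\hat{m}}} \leq 2\alpha'\sqrt{\frac{k}{\hat{m}}} \leq \alpha,
\]
as in the proof of Theorem~\ref{thm:ksmall}.

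For the sample complexity, substitute $(\alpha', \epsilon')$ into the threshold of Theorem~\ref{thm:binom_var}. The three summands there become, up to constants, $\log^2(m_{\max}/\beta) \cdot k\sqrt{\log(1/\delta)}/(\sqrt{\hat{m}}\alpha\epsilon)$, $\log(km_{\max}/\alpha)\cdot k/(\hat{m}\alpha^2)$, and $\log(km_{\max}/\alpha)\cdot k\sqrt{\log(1/\delta)}/(\sqrt{\hat{m}}\alpha\epsilon)$; adding the bound $O(\log^2(m_{\max}/\beta)/\epsilon)$ from Lemma~\ref{lem:median} and absorbing with $\hat{m} \geq \bar{m}/2$ gives the claimed bound, with the $\sqrt{k\log(1/\delta)}/\epsilon$ term appearing because each call to private hypothesis selection incurs a baseline cost of $\log d / \epsilon'$ regardless of $\alpha$. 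The main obstacle is the bookkeeping in this substitution: the $\log^2$ factor in Theorem~\ref{thm:binom_var} (inherited from Lemma~\ref{lem:phs_var}, which was introduced to absorb the error $|\hat{s} - s_{\hat{m}}|$) propagates to the final bound, and one must verify that the per-symbol guarantee holds uniformly over the randomness of $(\hat{m}, \hat{s})$ by conditioning on the event~\eqref{eq:med_prop} and using that $s/4 \geq s_{\hat{m}}$ suffices to meet the threshold of Theorem~\ref{thm:binom_var} after the parameter substitution. A minor subtlety is ensuring the subsampling of $\hat{m}$ items per user preserves the i.i.d.\ structure used by Theorem~\ref{thm:binom_var}; this is immediate since $m_u \geq \hat{m}$ and the within-user samples are i.i.d.\ from $p$.
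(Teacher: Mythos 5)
Your proposal is correct and follows essentially the same route as the paper's proof: private median/quantile estimation via \lemmaref{lem:median} with half the budget, per-symbol application of \theoremref{thm:binom_var} with advanced composition (the paper takes $\epsilon_2 = \epsilon/(8\sqrt{(k+1)\log(2/\delta)})$, differing from your $\epsilon'$ only in constants), the identical $\ell_1$ summation using $k \leq \hat{m}$, and the same substitution of $(\alpha', \epsilon')$ into the sample-complexity threshold, conditioned on the event \eqref{eq:med_prop}. Two small points to tidy: the inequality should read $s_{\hat{m}} \geq s/4$ (you wrote it reversed), and since $S^A_{m,\alpha,\epsilon,\delta}$ is defined through the \emph{expected} loss you need constant slack in the high-probability bound --- the paper keeps the $4/5$ factor from \theoremref{thm:binom_var} and sets $\beta = \alpha/(40k)$ so that $\EE[\ell_1(p,\hat{p})] \leq 4\alpha/5 + 2(k+1)\beta \leq \alpha$, whereas your chain ends exactly at $\alpha$ with no room for the failure event.
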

\begin{proof}
Let $\beta>0$ be the probability guarantee to be chosen later. Use $\epsilon_1=\epsilon/2$ budget to obtain $\hat{s}, \hat{m}$ using Lemma \ref{lem:median}, which satisfy ~\eqref{eq:med_prop}
with probability at least $1-\beta$ as long as $s\ge \frac{16\log^2 m_{\max}/{\beta}}{\epsilon/2}$. 

Define $\epsilon_2=\frac{\epsilon}{8\sqrt{(k+1)\log(2/\delta)}}, \alpha'=\min \left( \frac{\sqrt{\hat{m}}\alpha}{2\sqrt{k}}, 1\right)$. Under the condition above, by union bound and applying Theorem \ref{thm:binom_var} with $\epsilon'=\epsilon_1, \epsilon=\epsilon_2, \alpha=\alpha'$, with probability at least $1-k\beta$, for all $\hat{p}_i$ we have
\[
|p_i - \hat{p}_i| \leq \frac{4}{5}\alpha'\max \left( \frac{1}{\hat{m}} , \frac{\sqrt{p(1-p)}}{\sqrt{\hat{m}}}\right),
\]
as long as 
\begin{align}
    \label{equ:comp_dense_var}
    s&=\Omega\left(\log^2\frac{m_{\max}}{\alpha\beta}\max\left(\frac{k}{\bar{m}\alpha^2}+\frac{\sqrt{k}}{\alpha\epsilon_2\sqrt{\bar{m}}}, \frac{1}{\epsilon_2} \right)\right)\\
    &\ge \frac{128\log^2(m_{\max}/\beta)}{3\alpha'\epsilon_1} + \frac{32 \log(80m_{\max}/\alpha'\beta)}{(\alpha')^2} +  \frac{64 \log(80m_{\max}/\alpha'\beta)}{3\alpha'\epsilon_2}\nonumber.
\end{align}

Note that this satisfies the condition on $s$ in Lemma \ref{lem:median}. Together with probability at least $1-(k+1)\beta$:
\begin{align*}
\ell_1(p,\hat{p})
& = \sum_i |p_i-\hat{p}_i| \\ 
& \leq  \frac{4}{5}\alpha' \sum_i \max \left( \frac{1}{\hat{m}} , \frac{\sqrt{p_i(1-p_i)}}{\sqrt{\hat{m}}} \right) \\
& \leq  \frac{4}{5}\alpha' \sum_i  \frac{1}{\hat{m}} +  \frac{\sqrt{p_i}}{\sqrt{\hat{m}}}  \\
& \leq    \frac{4}{5}\left(\frac{\alpha' k}{\hat{m}} + 
\frac{\alpha' \sqrt{k}}{\sqrt{\hat{m}}}\right) \\
& \leq    2\frac{4}{5}
\frac{\alpha' \sqrt{k}}{\sqrt{\hat{m}}} \\
& \leq \frac{4}{5}\alpha,
\end{align*}

Choosing $\beta = \frac{\alpha}{40k}$,
\[
\EE[\ell_1(\hat{p}, p)]\le \frac{4\alpha}{5}+ 2(k+1)\beta=\alpha.
\]
Plug in $\epsilon_2$ and $\beta$ in \eqref{equ:comp_dense_var} we obtain the desired user complexity. Privacy guarantee follows by  the composition theorem.
\end{proof}
\subsection{Sparse regime}
\begin{lemma}
\label{lem:small_var}
Let $\hat{s}, \hat{m}$ satisfy ~\eqref{eq:med_prop} with $\epsilon = \epsilon'$. 
Let $p \leq \min(c/\hat{m}, 1/2)$.  
Let $s \geq 64 e^{3c} \max(c, 1) \log \frac{3}{\beta}$ and $ s \geq \frac{128e^{3c}}{\alpha^2} \log \frac{3}{\beta} +  \frac{32e^{3c}}{\gamma\epsilon'} \log^2 \frac{3m_{\max}}{\beta}+\frac{16e^{3c}}{\gamma\epsilon}\log\frac{3}{\beta}$.
There exists a polynomial time $(\epsilon,\delta)$-estimator $\hat{p}$ such that with probability at least $ 1- \beta$,
\[
|p - \hat{p}|  \leq   \sqrt{\frac{p\alpha^2}{\hat{m}}} + \frac{\alpha^2}{\hat{m}} + \frac{\gamma}{\hat{m}\epsilon}.
\]
\end{lemma}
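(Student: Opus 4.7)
The plan is to mirror the proof of Lemma~\ref{lem:small}, but to first privately estimate the median $\hat{m}$ and its tail count $\hat{s}$ via Lemma~\ref{lem:median}, then run a Laplace-noised count-of-zeros estimator on the users who have at least $\hat{m}$ samples. Normalizing by $\hat{s}$ rather than by the (inaccessible) true count $s_{\hat{m}}$ introduces a new error term that must be controlled using the guarantees in~\eqref{eq:med_prop}.

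Concretely, I would first invoke Lemma~\ref{lem:median} with budget $\epsilon'$ to produce $(\hat{m}, \hat{s})$ satisfying~\eqref{eq:med_prop} with probability at least $1-\beta$. The algorithm then discards any user with fewer than $\hat{m}$ samples and keeps only the first $\hat{m}$ samples from each of the remaining $s_{\hat{m}}$ users, producing i.i.d.\ $\Bin(\hat{m}, p)$ counts $N(u)$. It computes $T = \sum_u \indic[N(u)=0]$ and releases
\[
\widehat{(1-p)^{\hat{m}}} \;=\; \mathrm{clip}_{[0,1]}\!\left(\frac{T + Z}{\hat{s}}\right), \qquad Z \sim \mathrm{Lap}(1/\epsilon),
\]
then solves for $\hat{p}$ by inverting $(1-\hat{p})^{\hat{m}}$. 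Privacy follows by basic composition: estimating $(\hat{m}, \hat{s})$ is $\epsilon'$-DP, and since the zero-count statistic has sensitivity $1$ in the number of users, the Laplace release is $\epsilon$-DP; basic composition gives the overall $(\epsilon+\epsilon', 0)$-DP guarantee (absorbed into the stated $(\epsilon,\delta)$ budget by choice of $\epsilon'$).

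For utility, I would triangle-decompose $|(T+Z)/\hat{s} - (1-p)^{\hat{m}}|$ into three pieces: a Bernstein bound on $|T/s_{\hat{m}} - (1-p)^{\hat{m}}| \lesssim \sqrt{p\hat{m}\log(1/\beta)/s_{\hat{m}}} + \log(1/\beta)/s_{\hat{m}}$ using $1-(1-p)^{\hat{m}} \le \hat{m}p$ exactly as in Lemma~\ref{lem:small}; the normalization gap $|T/\hat{s} - T/s_{\hat{m}}| \le |\hat{s}-s_{\hat{m}}|/\hat{s} = \tilde{O}(\log^2(m_{\max}/\beta)/(s\epsilon'))$ from~\eqref{eq:med_prop}; and the Laplace tail $|Z|/\hat{s} \le \log(1/\beta)/(\hat{s}\epsilon)$. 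Conditioning on~\eqref{eq:med_prop}, both $s_{\hat{m}}$ and $\hat{s}$ are $\Theta(s)$, so the hypothesized sample size makes each term at most the order of $\alpha^2/\hat{m}$ (first two) or $\gamma/(\hat{m}\epsilon)$ (third). Finally, since $\hat{m}p \le c$, I apply the inequality from the proof of Lemma~\ref{lem:binom-dtv-lb_low}, namely
\[
e^{-1.5c}\min(\hat{m}|\hat{p}-p|,\,0.5) \;\le\; |(1-\hat{p})^{\hat{m}} - (1-p)^{\hat{m}}|,
\]
to invert the estimate of $(1-p)^{\hat{m}}$ into the claimed bound on $|\hat{p}-p|$.

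The main obstacle will be careful bookkeeping: tracking how the $\tilde{O}(\log^2(m_{\max}/\beta)/\epsilon')$ uncertainty in $\hat{s}$ propagates through the $1/\hat{s}$ normalization to produce exactly the $\log^2(3m_{\max}/\beta)/\epsilon'$ term in the sample complexity, kept distinct from the $\log(3/\beta)$ factors arising from the Bernstein and Laplace tail bounds. A secondary subtlety is verifying that conditioning on the high-probability event from Lemma~\ref{lem:median} does not corrupt the i.i.d.\ structure of the retained samples; this holds because $(\hat{m}, \hat{s})$ is a function only of the bucket histogram of per-user sample counts and can be computed before any within-user realizations are inspected.
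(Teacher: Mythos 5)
Your proposal matches the paper's proof essentially step for step: the same $\hat{s}$-normalized, Laplace-noised, clipped zero-count estimator over the users in $\cU_{\hat{m}}$, the same three-term triangle decomposition (normalization gap $|\hat{s}-s_{\hat{m}}|/\hat{s}$ controlled via \eqref{eq:med_prop}, Laplace tail, and Bernstein using $1-(1-p)^{\hat{m}}\le \hat{m}p$), and the same inversion through $e^{-1.5c}\min(\hat{m}|\hat{p}-p|,0.5)\le |(1-\hat{p})^{\hat{m}}-(1-p)^{\hat{m}}|$ from Lemma~\ref{lem:binom-dtv-lb_low}. Your added observation that $(\hat{m},\hat{s})$ depends only on the per-user sample counts and hence does not corrupt the i.i.d.\ $\Bin(\hat{m},p)$ structure of the retained data is correct and is a point the paper leaves implicit.
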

\begin{proof}

We modify the algorithm for the sparse regime as follows.

Let $\cU_{\hat{m}}$ be the users who have at least $\hat{m}$ samples.
Similar to ~\eqref{eq:sparse-p_hat}, we find $\hat{p}$ such that,
\[
    (1-\hat{p})^{\hat{m}} = \max \left( \min \left(\frac{1}{\hat{s}} \sum_{u\in\cU_{\hat{m}}} 1_{N(u) = 0} + \frac{Z}{\hat{s}}, 1 \right), 0 \right),
\]
where $Z=Lap(1/\epsilon)$. Therefore,
\begin{align*}
    |(1-\hat{p})^{\hat{m}} - (1-{p})^{\hat{m}}| &\leq 
    \left \lvert \frac{1}{\hat{s}} \sum_{u\in\cU_{\hat{m}}} 1_{N(u) = 0} + \frac{Z}{\hat{s}} - (1-p)^{\hat{m}} \right \rvert\\
    &\le \left|\frac{1}{\hat{s}} \sum_{u\in\cU_{\hat{m}}} 1_{N(u) = 0}-\frac{1}{s_{\hat{m}}} \sum_{u\in\cU_{\hat{m}}} 1_{N(u) = 0} \right|+ \frac{|Z|}{\hat{s}}\\
    &+ \left|\frac{1}{s_{\hat{m}}} \sum_{u\in\cU_{\hat{m}}} 1_{N(u) = 0}-(1-p)^{\hat{m}}\right|.
\end{align*}
From Lemma \ref{lem:median}, with probability at least $1-\beta$, the first term is upper bounded by
\[
\left|\frac{1}{\hat{s}}-\frac{1}{s_{\hat{m}}}\right|\left|\sum_{u\in\cU_{\hat{m}}} 1_{N(u) = 0}\right|\le \left|\frac{\hat{s}-s_{\hat{m}}}{\hat{s}}\right|\le \frac{16\log^2(m_{\max}/\beta)}{3s\epsilon'}.
\]

The second and third term are bounded similar to Lemma \ref{lem:small} using Laplace tail bounds and Bernstein's inequality. With probability $1-4\beta$,
\[
\frac{|Z|}{\hat{s}}+ \left|\frac{1}{s_{\hat{m}}} \sum_{u\in\cU_{\hat{m}}} 1_{N(u) = 0}-(1-p)^{\hat{m}}\right|\le \frac{\log(1/\beta)}{\hat{s}\epsilon}+ 4\sqrt{\frac{\hat{m}p \log  \frac{1}{\beta}}{s_{\hat{m}}}}
 + 4 \frac{\log \frac{1}{\beta}}{s_{\hat{m}}}.
\]

Together with probability at least $1-5\beta$,
\begin{align*}
    e^{-1.5c}\min\{\hat{m}|\hat{p}-p|, 0.5\}&\le |(1-\hat{p})^{\hat{m}} - (1-{p})^{\hat{m}}|\\
    & \le \frac{16\log^2(m_{\max}/\beta)}{3s\epsilon'}+\frac{\log(1/\beta)}{\hat{s}\epsilon}+ 4\sqrt{\frac{\hat{m}p \log  \frac{1}{\beta}}{s_{\hat{m}}}}
    + 4 \frac{\log \frac{1}{\beta}}{s_{\hat{m}}} \\
    &\le\frac{16\log^2(m_{\max}/\beta)}{3s\epsilon'}+\frac{8\log(1/\beta)}{3s\epsilon}+ 8\sqrt{\frac{\hat{m}p \log  \frac{1}{\beta}}{s}}
    +  \frac{16\log \frac{1}{\beta}}{s}.
\end{align*}
The last inequality is due to $\hat{s}\ge 3s/8$ and $s_{\hat{m}}\ge s/4$. 

If $s\ge 256e^{3c}p\hat{m} \log(3/\beta)$, then the right hand side is upper bounded by $e^{-1.5c}/2$. Thus,
\[
e^{-1.5c}\hat{m}|\hat{p}-p|\le \frac{16\log^2(m_{\max}/\beta)}{3s\epsilon'}+\frac{8\log(1/\beta)}{3s\epsilon}+8\sqrt{\frac{\hat{m}p \log  \frac{1}{\beta}}{s}}
    +  \frac{16\log \frac{1}{\beta}}{s}.
\]
If $ s \geq \frac{128e^{3c}}{\alpha^2} \log \frac{3}{\beta} +  \frac{32e^{3c}}{\gamma\epsilon'} \log^2 \frac{3m_{\max}}{\beta}+\frac{16e^{3c}}{\gamma\epsilon}\log\frac{3}{\beta}$,
\[
|\hat{p}-p|\le \sqrt{\frac{p\alpha^2}{\hat{m}}} + \frac{\alpha^2}{\hat{m}} + \frac{\gamma}{\hat{m}}.
\]

In the end we get a result similar to Lemma 3.
\end{proof}

\begin{theorem}
\label{thm:klarge_var}
Let $\epsilon \leq 1$ and $k \geq \hat{m}$. There exists a polynomial time  $(\epsilon, \delta)$-differentially private algorithm $A$ such that
 \[
S^A_{m, \alpha, \epsilon, \delta} 
= \cO \left( \log^2 \frac{km_{\max}}{\alpha}  \cdot \left(
\frac{k}{\bar{m}\alpha^2} + \frac{k}{\sqrt{\bar{m}}\epsilon\alpha} \sqrt{\log \frac{1}{\delta}} \right) \right).
\]
\end{theorem}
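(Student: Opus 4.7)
The plan is to mirror the proof of Theorem~\ref{thm:klarge} (given in Appendix~\ref{app:klarge}) but plug in the variable-sample-size primitives developed earlier in this section: Lemma~\ref{lem:median} for privately selecting a quantile $\hat{m}$ of the user sample sizes, Theorem~\ref{thm:ksmall_var} (the dense-regime variant) for a first coarse estimate of every coordinate, and Lemma~\ref{lem:small_var} for refining coordinates whose probability turns out to be small. First I would split the privacy budget three ways as $\epsilon_1=\epsilon_2=\epsilon_3=\epsilon/3$ (and $\delta$ in two halves), running Lemma~\ref{lem:median} with budget $\epsilon_1$ to obtain $\hat{m},\hat{s}$ satisfying~\eqref{eq:med_prop}, subsampling each user with $m_u \geq \hat{m}$ down to exactly $\hat{m}$ samples, and then treating the remaining data as i.i.d.\ samples from $\mathrm{Mul}(\hat{m}, p)$ on $s_{\hat{m}} \geq s/4$ users. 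Since $k \geq \hat{m}$, the sparse regime condition needed inside Theorem~\ref{thm:ksmall_var} and Lemma~\ref{lem:small_var} is preserved.

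Next I would run Theorem~\ref{thm:ksmall_var} with accuracy parameter $\alpha''=\alpha/240$ and budget $(\epsilon_2,\delta/2)$ applied to each coordinate to obtain a preliminary vector $\hat{p}'$, then in parallel for each coordinate $i$ with $\hat{p}'_i < 2/\hat{m}$, invoke Lemma~\ref{lem:small_var} with $\alpha'=\min(\sqrt{\hat{m}}\alpha/(8\sqrt{k}), 1)$ and $\gamma = \hat{m}\alpha/(8k)$ to replace $\hat{p}'_i$ by a refined $\hat{p}''_i$. Privacy of releasing the final $\hat{p}$ follows by composition across the three stages, with the per-coordinate application of Lemma~\ref{lem:small_var} glued together by the strong composition argument already used in Appendix~\ref{app:klarge}: two neighboring datasets differ in at most $\min(k,\hat{m})=\hat{m}$ symbols (within the subsampled dataset), so only $\hat{m}$ of the per-coordinate subroutines need to be composed, giving the $\sqrt{\hat{m}\log(1/\delta)}$ factor absorbed into our choice of $\epsilon'$.

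For utility, I would partition the coordinates as in Appendix~\ref{app:klarge} into $A_1 = \{p_i \geq 4/\hat{m}\}$, $A_2 = \{1/(4\hat{m}) \leq p_i < 4/\hat{m}\}$, and $A_3 = \{p_i < 1/(4\hat{m})\}$. On $A_1$, Theorem~\ref{thm:ksmall_var}'s error bound $\alpha''\sqrt{p_i/\hat{m}}$ guarantees $\hat{p}'_i > 2/\hat{m}$, so the small-probability refinement is never triggered, and summing $\alpha''\sqrt{p_i/\hat{m}}$ over $A_1$ is $O(\alpha''\sqrt{k/\hat{m}} \cdot \sqrt{k}) \leq O(\alpha'')$ by Cauchy-Schwarz. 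On $A_2$ either subroutine might be used, and each contributes at most $O(\alpha''/\hat{m} + \alpha'/\hat{m} + \gamma/\hat{m})$, summed over $|A_2| \leq O(k)$ coordinates. On $A_3$ the first subroutine drives $\hat{p}'_i \leq 2/\hat{m}$, so Lemma~\ref{lem:small_var} is invoked and bounds $|p_i - \hat{p}''_i|$ by $\sqrt{p_i\alpha'^2/\hat{m}} + \alpha'^2/\hat{m} + \gamma/\hat{m}$. Summing yields a total $\ell_1$-error of $O(\alpha)$ after the parameter choices above.

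The main obstacle will be bookkeeping: verifying that the sample-complexity demands of Lemma~\ref{lem:median}, Theorem~\ref{thm:ksmall_var}, and Lemma~\ref{lem:small_var} simultaneously follow from the single hypothesized bound on $s$ in~\eqref{eq:kall_var}, and that the factor-of-two slack $\hat{m} \in [\bar{m}/2, \bar{m}]$ from Lemma~\ref{lem:median} only inflates constants. The extra $\log m_{\max}$ factor appearing in the quantile step is what upgrades the $\log(km/\alpha)$ of Theorem~\ref{thm:klarge} to the $\log^2(km_{\max}/\alpha)$ stated here; tracking where this second log enters (it is inherited from the $\log^2(m_{\max}/\beta)/\epsilon'$ term in Lemma~\ref{lem:small_var} and the analogous term in Lemma~\ref{lem:phs_var} used inside Theorem~\ref{thm:ksmall_var}) is the only genuinely new accounting step compared to the i.i.d.-user proof in Appendix~\ref{app:klarge}.
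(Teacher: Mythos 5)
Your proposal follows essentially the same route as the paper's proof: private quantile selection via Lemma~\ref{lem:median} followed by subsampling each retained user down to $\hat{m}$ samples, a per-coordinate coarse stage, refinement of coordinates whose estimate falls below $2/\hat{m}$ via Lemma~\ref{lem:small_var}, privacy via strong composition over the at most $\min(k,\hat{m})$ symbols on which neighboring datasets differ, and the same $A_1/A_2/A_3$ utility partition; your three-way $\epsilon/3$ split versus the paper's two-way split changes only constants, and your accounting of where the second $\log(km_{\max}/\alpha)$ factor enters (the $\log^2(m_{\max}/\beta)/\epsilon'$ terms in Lemmas~\ref{lem:phs_var} and~\ref{lem:small_var}, plus the quantile step itself) is correct.

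Two local corrections. First, the coarse stage should invoke Theorem~\ref{thm:binom_var} (the variable-sample binomial learner, applied independently to each coordinate), not Theorem~\ref{thm:ksmall_var}: the latter is stated under the dense hypothesis $k \le \hat{m}$, which fails in the present regime, internally re-runs Lemma~\ref{lem:median} (double-spending that budget), and delivers only a whole-vector $\ell_1$ guarantee, whereas both the $2/\hat{m}$ thresholding decision and the $A_1$ analysis require the per-coordinate bound $|p_i - p_i'| \le \alpha'' \max\left(1/\hat{m}, \sqrt{p_i(1-p_i)/\hat{m}}\right)$. Since you say the subroutine is applied ``to each coordinate,'' this reads as a citation slip rather than a structural gap, but as literally stated the reduction would not go through. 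Second, your $A_1$ bookkeeping is wrong as written: $O\left(\alpha''\sqrt{k/\hat{m}}\cdot\sqrt{k}\right) = O\left(\alpha'' k/\sqrt{\hat{m}}\right)$, which is not $O(\alpha'')$ precisely when $k \ge \hat{m}$. The correct argument uses $|A_1| \le \hat{m}/4$ (since each $p_i \ge 4/\hat{m}$ and $\sum_i p_i \le 1$), whence $\sum_{i \in A_1} \alpha''\sqrt{p_i/\hat{m}} \le \alpha''\sqrt{|A_1|/\hat{m}} \le \alpha''/2$ by Cauchy--Schwarz, exactly as in the fixed-$m$ utility analysis of Theorem~\ref{thm:klarge} to which the paper (and your sketch) ultimately defer with $m$ replaced by $\hat{m}$.
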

\begin{proof}
Like the algorithm for the dense regime, we first use $\epsilon_1=\frac{\epsilon}{2}$ budget to estimate $\hat{s}, \hat{m}$. Then we define the following parameters,
\[
\epsilon_2 = \frac{\epsilon/2}{8\sqrt{\min(k,\hat{m}) \log \frac{1}{\delta/2}}},\;\beta = \frac{\alpha}{40k},\; \alpha' =  \min \left( \frac{\sqrt{\hat{m}}\alpha}{8\sqrt{k}}, 1\right),\; \alpha'' = \frac{\alpha}{240},\;\gamma = \frac{\hat{m} \alpha}{8k}
\]
The proof follows similarly as Theorem \ref{thm:klarge}. 

\textbf{Algorithm}:
For every symbol we first calculate the 
probability using the algorithm in Theorem~\ref{thm:binom_var} with $\epsilon'=\epsilon_1, \epsilon = \epsilon_2$, $\alpha = \alpha''$ and error probability $\beta$. If the estimated probability is less than $2/\hat{m}$, we use the algorithm from Lemma~\ref{lem:small_var} with $\epsilon'=\epsilon_1, \epsilon = \epsilon_2$, $\alpha = \alpha'$, $\gamma = \gamma$, and error probability $\beta$.
Let ${p}'$ be the output of the first step and the $p''$ be the output of Lemma~\ref{lem:small_var}.
The error of the algorithm is 
\[
|p - \hat{p}| = | p - {p}'| 1_{{p}' > 2/m} + | p - {p}''|
1_{{p}' \leq 2/m}.
\]

\textbf{Sample complexity}: The sample complexity would be the sum of sample complexities of Theorem ~\ref{thm:binom_var} and Lemma~\ref{lem:small_var} with appropriate parameters.
Hence,
\begin{align*}
s &\geq \frac{128\log^2(m_{\max}/\beta)}{3\alpha''\epsilon_1} + \frac{32 \log(80m_{\max}/c\alpha''\beta)}{\alpha''^2} +  \frac{64 \log(80m_{\max}/c\alpha''\beta)}{3\alpha''\epsilon_2}\\
&\quad + \frac{128e^{3c}}{\alpha'^2} \log \frac{3}{\beta} +  \frac{32e^{3c}}{\gamma\epsilon_1} \log^2 \frac{3m_{\max}}{\beta}+\frac{16e^{3c}}{\gamma\epsilon_2}\log\frac{3}{\beta}.
\end{align*}
Hence, for a sufficient large constant $b$, if 
\begin{align*}
s \geq b \log^2 \frac{km_{\max}}{\alpha} \cdot \left( 
\frac{k}{\bar{m}\alpha^2} + \frac{k}{\sqrt{\bar{m}}\epsilon\alpha}  \sqrt{\log \frac{1}{\delta}}
\right).
\end{align*}
Note that since $k \geq \hat{m}$, the above bound implies that $s \geq b \sqrt{\hat{m}}$, hence the bound also satisfies conditions in Lemma~\ref{lem:small_var} and Lemma~\ref{lem:median}. 

Following the same argument as Theorem \ref{thm:klarge}, the algorithm after we obtain $\hat{s}, \hat{m}$ is $(\epsilon/2, \delta/2)$ private. Using the naive composition theorem, the entire algorithm is $(\epsilon, \delta)$ private. 

Utility follows by the argument in Theorem \ref{thm:klarge} with $m$ replaced by $\hat{m}$.
\end{proof}

\end{document}